\def\eqref#1{eq.~\ref{#1}}
\def\1{\bm{1}}
\def\eps{{\epsilon}}
\def\vepsilon{{\bm{\epsilon}}}
\def\vb{{\bm{b}}}
\def\vf{{\bm{f}}}
\def\vl{{\bm{l}}}
\def\vq{{\bm{q}}}
\def\vu{{\bm{u}}}
\def\vv{{\bm{v}}}
\def\vw{{\bm{w}}}
\def\vx{{\bm{x}}}
\def\vy{{\bm{y}}}
\def\vz{{\bm{z}}}
\def\mA{{\bm{A}}}
\def\mB{{\bm{B}}}
\def\mF{{\bm{F}}}
\def\mQ{{\bm{Q}}}
\def\mU{{\bm{U}}}
\def\mW{{\bm{W}}}
\def\mX{{\bm{X}}}
\DeclareMathAlphabet{\mathsfit}{\encodingdefault}{\sfdefault}{m}{sl}
\SetMathAlphabet{\mathsfit}{bold}{\encodingdefault}{\sfdefault}{bx}{n}
\def\gG{{\mathcal{G}}}
\def\gH{{\mathcal{H}}}
\def\gN{{\mathcal{N}}}
\def\gO{{\mathcal{O}}}
\def\gR{{\mathcal{R}}}
\def\gS{{\mathcal{S}}}
\def\gW{{\mathcal{W}}}
\def\gX{{\mathcal{X}}}
\def\sC{{\mathbb{C}}}
\def\sN{{\mathbb{N}}}
\newtheorem{theorem}{Theorem}[section]
\newtheorem{lemma}[theorem]{Lemma}
\theoremstyle{definition}
\newtheorem{definition}[theorem]{Definition}
\newtheorem{proposition}[theorem]{Proposition}
\theoremstyle{remark}
\newtheorem{remark}[theorem]{Remark}
\def\supp{ {\mathrm{supp}} }
\def\Z{\mathbb{Z}}
\newcommand*{\paran}[1]{\left(#1\right)}
\newcommand*{\prob}[1]{\mathbb{P}}
\def\defeq{:=}
\newcommand*{\norm}[1]{\left\|#1\right\|}
\newcommand*{\card}[1]{\left|#1\right|}
\def\P{\mathbb{P}}
\def\Ltrue{\mathcal{L}} 
\def\Lemp{\hat{\mathcal{L}}} 
\def\conv{\circledast}
\def\gconv{\circledast_{G}}
\def\GL{\mathrm{GL}}
\def\N{\mathcal{N}}
\newcommand{\E}{\mathbb{E}}
\newcommand{\R}{\mathbb{R}}
\newcommand{\Cm}{\mathbb{C}}
\DeclareMathOperator{\Tr}{Tr}
\DeclareMathOperator{\diag}{diag}
\newcommand{\SO}[1]{\ensuremath{\operatorname{SO}(#1)}}
\newcommand{\CN}{\ensuremath{\operatorname{C}_{\!N}}}
\newcommand{\rQ}{\ensuremath{/}}
\acrodef{CNN}{Convolutional Neural Network}
\acrodef{FCNN}{Fully Connected Neural Network}
\acrodef{GCNN}{Group Convolutional Neural Networks}
\acrodef{GE}{generalization error}
\acrodef{MLP}{Multi-Layer Perceptron}
\acrodef{QFS}{Quotient Feature Space}
\newcommand*{\SKIP}[1]{}
\title{On the Sample Complexity of One Hidden Layer Networks with Equivariance, Locality and Weight Sharing }
\date{}
\author{
Arash Behboodi\\{Qualcomm AI research}\thanks{Qualcomm AI Research is an initiative of Qualcomm Technologies, Inc.} 
\and 
Gabriele Cesa\\{Qualcomm AI research}$*$ 
}
\begin{document}

\maketitle

\begin{abstract}
Weight sharing, equivariance, and local filters, as in convolutional neural networks, are believed to contribute to the sample efficiency of neural networks. However, it is not clear how each one of these design choices contributes to the generalization error. Through the lens of statistical learning theory, we aim to provide insight into this question by characterizing the relative impact of each choice on the sample complexity. We obtain lower and upper sample complexity bounds for a class of single hidden layer networks. For a large class of activation functions, the bounds depend merely on the norm of filters and are dimension-independent. We also provide bounds for max-pooling and an extension to multi-layer networks, both with mild dimension dependence. 
We provide a few takeaways from the theoretical results. It can be shown that depending on the weight-sharing mechanism, the non-equivariant weight-sharing can yield a similar generalization bound as the equivariant one.
We show that locality has generalization benefits, however the uncertainty principle implies a trade-off between locality and expressivity. We conduct extensive experiments and highlight some consistent trends for these models.

\end{abstract}
\section{Introduction}
In recent years, equivariant neural networks have gained particular interest within the machine learning community thanks to their inherent ability to preserve certain transformations in the input data, thereby providing a form of inductive bias that aligns with many real-world problems. 
Indeed, equivariant networks have shown remarkable performance in various applications, ranging from computer vision to molecular chemistry, where data often exhibit specific forms of symmetry.
{
Equivariant networks are closely related to their more traditional counterparts, \acp{CNN}. 
\acp{CNN} are a particular class of neural networks that achieve equivariance to translation.
The convolution operation in \acp{CNN} ensures that the response to a particular feature is the same, regardless of its spatial location in the input data. However, equivariance extends beyond just translation, accommodating a broader spectrum of transformations such as rotations, reflections, and scaling.
%
\acp{GCNN} \cite{cohen_group_2016} are the typical example of equivariant neural networks.
The inductive bias introduced by equivariance, in the form of symmetry preservation, offers an intuitive connection to their generalization capabilities. By encoding prior knowledge about the structure of the data, equivariant networks can efficiently exploit the inherent symmetries, reducing sample complexity and improving generalization performance. This ability to generalize from a limited set of examples is crucial to their success.
Their mathematical foundation is grounded in group representation theory, which provides a powerful framework to understand and design neural networks that are equivariant or invariant to the action of a group of transformations. 

Besides the inductive bias of data symmetry, the generalization benefits of \acp{CNN} and \acp{GCNN} are additionally attributed to the weight-sharing implemented by the convolution operation and the locality implemented by the smaller filter size. In this paper, we study the impact of equivariance, weight-sharing, and locality on generalization within the framework of statistical learning theory. Our focus will be on neural networks with one hidden layer. Similar to works like \cite{vardi_sample_2022, magen_initialization-dependent_2023}, we believe that this study provides a first step toward a better understanding of deeper networks. Getting dimension-free generalization error bounds constitutes an important line of research in the literature. Following this line of work, we provide various dimension-free and norm-based bounds for one hidden layer networks.  See Appendix \ref{app:learning_theory_goals} for discussions on the desiderata of learning theory.

\paragraph{Contributions.}
We consider a class of one hidden layer networks where the first layer is a multi-channel equivariant layer followed by point-wise non-linearity, a pooling layer, and the final linear layer. We assume that the $\ell_2$-norm of the parameters of each layer is bounded. For architectures based on group convolution with point-wise non-linearity, we provide generalization bounds for various pooling operations that are entirely dimension-free. We obtain a similar norm-based bound for general equivariant networks. We also provide a lower bound on Rademacher complexity that shows the tightness of our bound. 
We extend the results to max-pooling, combining various covering number arguments for Rademacher complexity analysis. The bound is only dimension-dependent on the number of hidden layer channels and logarithmic-dependent on the group size; otherwise, it is independent of other dimensions. We also extend the result to multi-layer networks and discuss its limitations.
We show that no gain is observed if the analysis is conducted for the networks parameterized in the frequency domain. When a layer replaces the equivariant layer with, not necessarily equivariant, weight-sharing, we also provide a dimension-free bound. By studying the bound, it can be seen that some particular weight-sharing schemes, although not all, can provide similar generalization guarantees. Next, we give another bound for networks with local filters and show that the locality can bring additional gain on top of equivariance. We will then show a trade-off between locality in the spatial and frequency domains, which is important for band-limited inputs. The uncertainty principle characterizes the trade-off. Finally, we provide the numerical verification of our bounds. The generalization bounds are all obtained using Rademacher complexity analysis. We relegate all proofs to the appendix. 

}

\paragraph{Notations.}
We introduce some of the notations used throughout the paper. 
We define $[n]:=\{1,\dots, n\}$ for $n\in\N$.  The term $\norm{\cdot}$ refers to the $\ell_2$-norm, which for the space of matrices is the Frobenius norm. The spectral norm of a matrix $\mA$ is denoted by $\norm{\mA}_{2\to 2}$. A positively homogeneous activation function $\sigma(\cdots)$ is a function that satisfies $\sigma(\lambda x) = \lambda\sigma(x)$ for all $\lambda\geq 0$. The loss functions are assumed to be $1$-Lipschitz. The matrix of the training data is denoted by $\mX = [\vx_1 \dots \vx_m]$. 
The terms $\Ltrue(h)$ and $\Lemp(h)$ denotes, respectively, the test and the training error.

\section{Preliminaries}
\begin{figure*}
    \centering
    \includegraphics[width=0.9\linewidth]{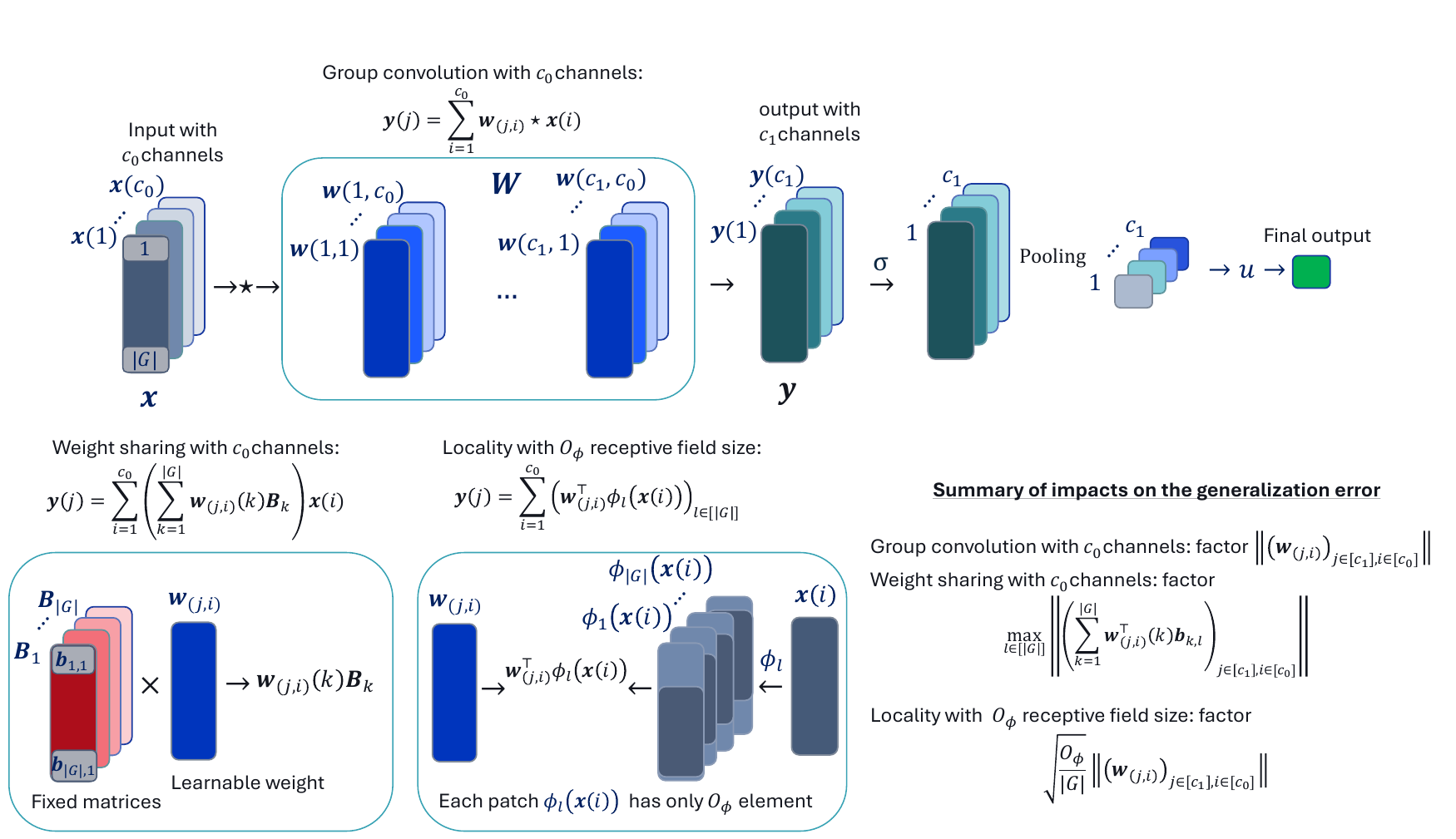}
    \caption{{Visualization of the network architectures with equivariance, locality, and weight sharing. On the right, we also summarize how each choice impacts the generalization error in our theory.}}
    \label{fig:gcnn}
\vspace{-4mm}
\end{figure*}

\paragraph{
{Convolution: a simple example.}} {We start with a simple example of familiar convolutional neural networks, indulging some of the subtleties needed for our later discussions. For an RGB image input, each convolution filter will slide over the input image and act on its receptive field by simple multiplication. If we translate the pixels of the input image, ignoring the \textit{corner} pixels, the output of the previous convolution operation would just get translated as well. The act of translation is an example of a \emph{group action}, and the fact that this act shifts the convolution output is an example of equivariance property\footnote{We would like to emphasize again that the example is not perfect. First, the convolution in CNN is a cross-correlation. Second, the conventional CNNs are only approximately translation equivariant because of the finite input size and the corner pixels.}. In what follows, we work with a generalized notion of action captured by group theory, which studies objects like translation and rotation that can combine and have an inverse element. }

We introduce key concepts from group and representation theories necessary to present our main results {in Appendix \ref{app:repr_theory}}.
In the rest of this work, we will generally assume a \emph{compact} group $G$.
Note that this includes any \emph{finite} group.
For the most part, we will consider \emph{Abelian} groups, i.e., commutative groups (such as planar rotations or periodic translations).

\paragraph{Equivariance.}
Given two spaces $\mathcal{X}, \mathcal{Y}$ carrying an action of a group $G$, a function $\phi: \mathcal{X} \to \mathcal{Y}$ is said to be \textbf{equivariant} with respect to $G$ if $\phi(g.x) = g.\phi(x)$ for any $x \in \mathcal{X}$, i.e. if $\phi$ commutes with the group's action.
For example, the spaces $\mathcal{X}, \mathcal{Y}$ could be vector spaces, and the group action $.: G \times \mathcal{X} \to \mathcal{X}$ could be a linear function.
\paragraph{Group Convolution.}
If $G$ is a \emph{finite} group, the most popular design choice to construct equivariant networks relies on the \textbf{group convolution} operator \cite{cohen_group_2016}, thereby generalizing typical convolutional neural networks (CNNs).
Specifically, given an input signal $x: G \to \R$ over $G$ and a filter $w: G \to \R$, the group convolution produces another output signal $y: G \to \R$ defined as:
\begin{align}
    y(g) = (w \conv_G x)(g) := \sum_{h \in G} w(g^{-1}h) x(h) \ .
\end{align}
Note that the signals $x, w, y$ can be represented as vectors $\vx, \vw, \vy \in \R^{|G|}$; hence, the convolution operation can be expressed as $\vy = \mW \vx$, where $\mW \in \R^{|G| \times |G|}$ is a group-circulant matrix encoding $G$-convolution with the filter $\vw$.

Like in classical CNNs, one typically considers multi-channel input signals $x: G \to \R^{c_0}$, filters $w: G \to \R^{c_1 \times c_0}$ and output signals $y: G \to \R^{c_1}$.
We represent a multi-channel signal $x: G \to \R^{c_0}$ as a stack of features over the group $G$, namely, a tensor $(\vx(1),\dots,\vx({c_0}))$ of shape $|G|\times c_0$ (i.e. with $c_0$ channels and each $\vx(i) \in \R^{|G|}$).
Then, a convolution layer consists of $c_1$ convolutional filters $\{\vw_j,  j\in [c_1]\}$, each of size $|G|\times c_0$, parametrized by per-channel convolutions $\vw_{(j,i)}\in \R^{|G|}, i\in[c_0]$;
the convolution operation, which yields the output channel $j$, is given by 
\[
\vy{(j)} = \sum_{i=1}^{c_0}\vw_{(j,i)} \conv_G \vx{(i)} \in \R^{|G|}, j\in [c_1]
\]

In summary, a group convolution layer can be visualized as
\begin{equation}
\scalebox{0.65}{$
    \overbrace{
        \left[
        \begin{array}{c|c|c|c}
            \mW_{(1,1)} & \mW_{(2,1)} & \dots & \mW_{(c_{0}, 1)} \\ \hline
            \mW_{(1,2)} & \mW_{(2,2)} & \dots & \mW_{(c_{0}, 2)} \\ \hline
            \vdots         & \vdots         & \ddots      &         \vdots          \\ \hline
            \mW_{(1,c_1)} & \mW_{(2,c_1)} & \dots & \mW_{(c_{0}, c_1)} \\ 
        \end{array}
        \right]
        }^{
        \textstyle\begin{array}{c}
            \mW
        \end{array}
    }
        \cdot
    \overbrace{
        \begin{bmatrix}
            \vx(1) \in \R^{|G|}\\ \hline
            \vx(2) \in \R^{|G|}\\ \hline
            \vdots \\ \hline
            \vx(c_{0}) \in \R^{|G|}\\
        \end{bmatrix}
        }^{
        \textstyle\begin{array}{c}
            \vx
        \end{array}
    }
    =
    \overbrace{
    \begin{bmatrix}
        \vy(1) \in \R^{|G|} \\ \hline
        \vy(2) \in \R^{|G|}\\ \hline
        \vdots \\ \hline
        \vy(c_1) \in \R^{|G|}\\
    \end{bmatrix}
    }^{
        \textstyle\begin{array}{c}
            \vy
        \end{array}
    }
$}
\label{eq:equivariant_layer}
\end{equation}
Here, any $\mW_{(i, j)} \in \R^{\card{G} \times \card{G}}$ block is a group circulant matrix corresponding to $G$, which encodes a $G$-convolution parameterised by a filter $\vw_{i, j} \in \R^{\card{G}}$. Note that the output of the convolution is a $\card{G}\times c_1$ signal, i.e. it contains a different value $y(g, j)$ for each group element $g \in G$ and output channel $j \in [c_1]$.

Note also that the group $G$ naturally acts on a signal $x: G \to \R^c$ as $g: x \mapsto g.x$, with $[g.x](h) = x(g^{-1}h)$.
Then, one can show that the group convolution operator above is equivariant with respect to this action on its input and output, i.e., $w \conv_G g.x = g.(w \conv_G x)$.
Additionally, it is well known, e.g,. \cite{generaltheory, Kondor2018-GENERAL}, that group convolutions with learnable filters in this form parameterize the most general linear equivariant maps between feature spaces of signals over a compact group $G$.
Finally, \ac{GCNN} typically uses an activation function applied point-wise on the convolution output. 
In our 1-layer architecture, a first convolution layer is followed by a pointwise activation layer and, then, a per-channel - average or max - pooling layer, which produces $c_1$ invariant features. These final features are mixed with the final linear layer $\vu$. See Figure \ref{fig:gcnn} for more details. The final network is then given as:
\begin{equation}
h_{\vu,\vw}(\vx)\defeq \vu^\top P\circ{\sigma
{
\begin{pmatrix}
\sum_{i=1}^{c_0}\vw_{(1,i)} \conv_G \vx{(i)})\\
\vdots\\
\sum_{i=1}^{c_0}\vw_{(c_1,i)} \conv_G \vx{(i)})
\end{pmatrix}
 }
 }
\label{def:gcnn_1layer}    
\end{equation}
where $P(\cdot)$ is the pooling operation, and $\vw\defeq (\vw_{(j,i)})_{j\in[c_1],i\in[c_0]}$ is the concatenation of all kernels.

\section{Related Works}
\label{sec:relatedworks}

\paragraph{Equivariant and Geometric Deep Learning} 
Previous works attempted to improve machine learning models by leveraging prior knowledge about the symmetries and the geometry of a problem \cite{bronstein2021geometric}.
A variety of design strategies have been explored in the literature to achieve group equivariance, for example via equivariant MLPs \cite{Shawe-Taylor_1993,Shawe-Taylor_1989,finzi2021practical}, group convolution \cite{cohen_group_2016,Kondor2018-GENERAL, bekkers2018roto}, Lie group convolution \cite{bekkers2020bspline, finzi_generalizing_2020}, 
steerable convolution \cite{cohen_steerable_2016,generaltheory, Worrall2017-HNET,Weiler2018-STEERABLE,3d_steerableCNNs,TensorFieldNets, Weiler2019, NEURIPS2020_15231a7c,brandstetter2021geometric,cesa2022a}
and, very recently, by using geometric algebra \cite{ruhe2023clifford, brehmer2023geometric}, to only mention a few. 
Other previous approaches include \cite{mallat_group_2012,Dieleman2016-CYC, defferrard_deepsphere_2019}.
Some of these ideas have also been used to generalize convolution beyond groups to more generic manifolds via the framework of Gauge equivariance \cite{cohen2019gauge, weiler_coordinate_2021}.
While equivariance is generally considered a powerful inductive bias that improves data efficiency, this large selection of equivariant designs raises the following question: What impact do different architectural choices have on performance, and to what extent do they aid generalization?

\paragraph{Generalization Properties of Equivariant networks} 
Some previous works tried to answer some aspects of this question.
\cite{taylowThresholdLearning} is one of the first works studying the relation between invariance and generalization.
For example, \cite{sokolic_generalization_2017} extend the robustness-based generalization bounds found in \cite{sokolic_robust_2017} by assuming the set of transformations of interest change the inputs drastically, thereby proving that the generalization error of a $G$-invariant classifier scales $1/\sqrt{|G|}$, where $|G|$ is the cardinality of the finite equivariance group $G$.
\cite{bietti2019stability} study the stability of some particular compact group equivariant models and also describe an associated Rademacher complexity and generalization bound. 
\cite{lyle2020benefits} investigate the effect of invariance under the lenses of the PAC-Bayesian framework but do not provide an explicit bound. 
Later, \cite{elesedy_provably_2021} use VC-dimension analysis and derive more concrete bounds.
\cite{elesedy_group_2022} studies compact-group equivariance in PAC learning framework and equates learning with equivariant hypotheses and learning on a reduced data space of orbit representatives to obtain a sample complexity bound.
\cite{sannai_improved_2021} propose a similar idea, proving that equivariant models work in a reduced space, the \ac{QFS}, whose volume directly affects the generalization error.
While the result relaxes the robustness assumption in \cite{sokolic_generalization_2017}, the final bound is suboptimal with respect to the sample size.
PAC learnability under transformation invariance is also studied in \cite{shao2022theory}.
\cite{zhu_understanding_2021} characterize the generalization benefit of invariant models using an argument based on the covering number induced by a set of transformations.
More recently, \cite{behboodi_pac-bayesian_2022} leverages a representation-theoretic construction of equivariant networks and provides a norm-based PAC-Bayesian generalization bound inspired by \cite{neyshabur_pac-bayesian_2018}.
Finally, \cite{petrache2023approximation} considers a more general setting, allowing for approximate, partial, and misspecified equivariance and studying how the relation between data and model equivariance error impacts generalization.


\paragraph{Generalization in generic neural networks}
Many works in the literature have previously investigated the generalization properties of deep learning methods.
\cite{zhang_understanding_2017} first noted that a model trained on random labels can achieve small training errors while producing arbitrarily large generalization errors.
This result raised a new challenge in the field since popular uniform complexity measures such as VC dimensions are inconsistent with this finding.
This inspired many recent works which tried to explain generalization in terms of other quantities, such as margin or norms of the learnable weights; a few non-exhaustive examples are
\cite{wei_improved_2019,sokolic_robust_2017,neyshabur_pac-bayesian_2018,arora_stronger_2018,bartlett_spectrally-normalized_2017,golowich_size-independent_2018,dziugaite_entropy-sgd_2018,long_size-free_2019,vardi_sample_2022,ledent_norm-based_2021,valle-perez_generalization_2020}. 
The PAC Bayesian framework is a particularly popular method. It has been applied to neural networks in many previous works, e.g. see \cite{neyshabur_pac-bayesian_2018,biggs_non-vacuous_2022,dziugaite_computing_2017,dziugaite_data-dependent_2018,dziugaite_entropy-sgd_2018,dziugaite_search_2020,lotfi2022pac}. 
\cite{jiang_fantastic_2020} perform a thorough experimental comparison of many complexity measures and identifies some failure cases; see also \cite{nagarajan_uniform_2019, koehler_uniform_2021,negrea_defense_2021} for further discussion on uniform complexity measures and their limitations. The norm-based bounds can still be tight and informative for shallow networks considered in this work.

\paragraph{Generalization bounds for Convolutional neural networks}
As one of the most popular deep learning architectures, convolutional neural networks (CNNs) have received particular attention in the literature.
Generalization studies on CNNs are especially relevant for our work since we focus significantly on equivariance to finite and abelian groups, which can often be realized via periodic convolution.
\cite{pitas_limitations_2019,long_size-free_2019, vardi_sample_2022,ledent_norm-based_2021} previously studied these architectures. The authors in \cite{vardi_sample_2022} represented a convolutional layer as a linear layer applied on local patches and used Rademacher complexity to get the bound.  In \cite{long_size-free_2019}, the bound is derived using Vapnik-Chervonenkis analysis \cite{Vapnik2015-yp,Gine2001-jj} and depends on the number of parameters but independent of the number of pixels in the input. In \cite{graf_measuring_2022}, the authors use two covering-numbers-based bounds for Rademacher complexity analysis of convolutional models. Please see Appendix \ref{app:comparisons} for further discussions.

\section{Sample Complexity Bounds For Equivariant Networks}
We consider the group convolutional networks as defined in \eqref{def:gcnn_1layer}. We assume that the input to the network is bounded as $\norm{\vx}\leq b_x$\footnote{For all the results in the paper, we can use a data-dependent bound on the input. Namely, it suffices to assume that $\max_{i\in[m]}\norm{\vx_i}\leq b_x$.}. Consider the following hypothesis space:
\begin{equation}
    \gH \defeq \left\{h_{\vu,\vw}: \norm{\vu} \leq M_{1}, \norm{\vw}\leq M_{2} \right\},
    \label{def:gcnn_hyp_space}
\end{equation}
where $\vu\in \R^{c_1}, \vw\in\R^{c_0c_1|G|}$. The hypothesis space is the group convolution network class that has bounded Euclidean norm on the kernels. For this network, we derive dimension-free bounds.

Since the pooling operation is a permutation invariant operation, we can use Theorem 7 in \cite{zaheer_deep_2017} to show that  one can always find two functions 
$\phi:\R^{|G|+1}\to\R$ and $\rho:\R\to\R^{|G|+1}$ such that $P\circ \vz = \phi\left(\frac{1}{|G|} \sum_{i=1}^{|G|} \rho(z_i)\right)$. We can provide the following generalization bound for a subset of such representations, namely for positively homogeneous $\phi:\R\to\R$ and $\rho:\R\to\R$. 

\begin{theorem}
Consider the hypothesis space $\gH$ defined in \eqref{def:gcnn_hyp_space}. If $P(\cdot)$ is the pooling operation represented as $P\circ \vz = \phi(\frac{1}{|G|} \mathbf{1}^\top \rho(\vz))$, where the two functions                $\rho(\cdot)$, $\phi(\cdot)$ and the activation function $\sigma(\cdot)$ are all $1$-Lipschitz positively homogeneous activation function, then with probability at least $1-\delta$ and for all $h\in\gH$, we have:
    \[
\Ltrue(h) \leq \Lemp(h) + 2\frac{b_x M_1 M_2}{\sqrt{m}} + 4 \sqrt{\frac{2\log(4/\delta)}{m}}. 
    \]
\label{thm:gcnn_general_pooling}
\end{theorem}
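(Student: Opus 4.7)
The plan is to apply the classical Rademacher complexity framework. Standard symmetrization together with Talagrand's contraction lemma (exploiting that the loss is $1$-Lipschitz) and McDiarmid's bounded-differences inequality reduce the theorem to proving two separate estimates uniformly over $\gH$: first, a uniform range bound $\sup_{h\in\gH}|h(\vx)|\le M_1 M_2 b_x$, which controls the McDiarmid concentration term and yields the $4\sqrt{2\log(4/\delta)/m}$ tail; second, the Rademacher complexity bound $\mathcal{R}_m(\gH) \le M_1 M_2 b_x/\sqrt{m}$, which produces the $2 b_x M_1 M_2/\sqrt{m}$ term.

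The central object is the composite per-channel pooled nonlinearity $\tau: \R^{|G|}\to\R$ defined by $\tau(\vz) \defeq \phi\bigl(\tfrac{1}{|G|}\sum_{g\in G}\rho(\sigma(z_g))\bigr)$. Because $\phi,\rho,\sigma$ are each $1$-Lipschitz positive homogeneous with $\phi(0)=\rho(0)=\sigma(0)=0$, the map $\tau$ inherits positive homogeneity of degree one, and by Cauchy-Schwarz $|\tau(\vz)| \le \tfrac{1}{|G|}\sum_g|z_g| \le \tfrac{1}{\sqrt{|G|}}\|\vz\|_2$. Writing $F(\vx) \in \R^{c_1}$ with $F_j(\vx) = \tau((\mW\vx)_j)$, the network factors as $h_{\vu,\vw}(\vx) = \vu^\top F(\vx)$; then Cauchy-Schwarz on $\vu$ gives $|h(\vx)| \le M_1 \|F(\vx)\|$ and $\|F(\vx)\|^2 \le \tfrac{1}{|G|}\|\mW\vx\|^2$. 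The range bound therefore reduces to $\|\mW\vx\|^2 \le |G|\,\|\vw\|^2\,\|\vx\|^2$, which I prove by passing each output channel $(\mW\vx)_j = \sum_i \vw_{(j,i)}\conv_G \vx(i)$ to the Fourier domain on the abelian group $G$, applying Cauchy-Schwarz in the input-channel index $i$, bounding $\max_k \sum_i|\hat{\vw}_{(j,i)}(k)|^2$ by its sum over the dual group, and invoking Parseval. The factor $|G|$ thereby produced cancels precisely the pooling's $1/|G|$, giving the dimension-free bound $\|F(\vx)\| \le \|\vw\|\|\vx\| \le M_2 b_x$.

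For the Rademacher complexity, peel off $\vu$ via Cauchy-Schwarz to get $m\,\mathcal{R}_m(\gH) \le M_1\,\E_\epsilon \sup_{\|\vw\|\le M_2}\|\sum_i \epsilon_i F(\vx_i)\|_2$. Two structural properties then reduce the problem to a single output channel: positive homogeneity of $F$ in $\vw$, and the channel-wise decoupling, namely that $\sum_i\epsilon_i F_j(\vx_i)$ depends on $\vw$ only through the row $\vw_{(j,\cdot)}$, with the constraint $\sum_j\|\vw_{(j,\cdot)}\|^2 \le M_2^2$. Together these yield the identity $\sup_{\vw}\|\sum_i\epsilon_i F(\vx_i)\|_2 = M_2\,\sup_{\|\vv\|=1}|\sum_i\epsilon_i\tau(\mV\vx_i)|$, where $\mV$ collects a single-channel filter bank. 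The remaining scalar Rademacher process is bounded via a vector contraction argument (using the $\tfrac{1}{\sqrt{|G|}}$-Lipschitz property of $\tau$ in $\ell_2$) combined with the same Parseval bound, yielding $\E_\epsilon \sup_{\|\vv\|=1}|\sum_i\epsilon_i\tau(\mV\vx_i)| \lesssim \sqrt{\sum_i\|\vx_i\|^2} \le \sqrt{m}\,b_x$.

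The main obstacle is the non-pointwise character of the pooled activation $\tau$: Talagrand's contraction lemma applies only to pointwise Lipschitz activations, so a naive argument would either spoil the dimension-freeness (picking up spurious $\log|G|$ or $\log c_1$ factors) or fail to produce the exact $|G|$ cancellation from the convolution norm. The resolution relies on three ingredients that must work in concert: (a) positive homogeneity of all three of $\phi,\rho,\sigma$, which permits the channel reduction and makes $\tau$ itself positive homogeneous; (b) Parseval on the abelian group $G$, which gives the tight $|G|$-cancellation; and (c) a contraction step adapted to the pooled operator that preserves the constant factor. Together these yield the clean dimension-free bound $M_1 M_2 b_x/\sqrt{m}$ on $\mathcal{R}_m(\gH)$, and combining with the concentration term gives the theorem.
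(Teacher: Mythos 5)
Your overall skeleton matches the paper's: peel off $\vu$ by Cauchy--Schwarz, use positive homogeneity to pull the per-channel filter norms out and reduce to a single unit-norm filter (the Golowich-style peeling), and finish with a bound of order $b_x\sqrt{m}$; your Fourier/Parseval computations are a harmless substitute for the paper's spatial-domain argument with the permutation matrices $\Pi_l$, and your identity $\sup_{\vw}\norm{\sum_i\eps_i F(\vx_i)}=M_2\sup_{\norm{\vv}=1}\card{\sum_i\eps_i\tau(\mV\vx_i)}$ is exactly the paper's peeling step. The genuine gap is at the step you yourself call the main obstacle: contracting the pooled nonlinearity $\tau(\vz)=\phi\paran{\frac{1}{|G|}\mathbf{1}^\top\rho(\sigma(\vz))}$. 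You resolve it only by assertion (``a contraction step adapted to the pooled operator that preserves the constant factor''). The scalar contraction lemma (Lemma~\ref{lem:contraction}) does not apply to $\tau$, since $\tau$ is not a pointwise map $\R\to\R$; the available vector-valued contraction inequality for $\ell_2$-Lipschitz functions (Maurer's) costs a factor $\sqrt{2}$, so instantiating your sketch with it yields $\gR_\gS(\gH)\le \sqrt{2}\,b_xM_1M_2/\sqrt{m}$ and hence the theorem only with $2\sqrt{2}$ in place of $2$; moreover that route uses only the $1/\sqrt{|G|}$-Lipschitz constant of the composite $\tau$ and never the positive homogeneity of $\phi$ and $\rho$ separately, so the ingredient that is supposed to ``preserve the constant'' is missing, not merely unstated.

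The paper's resolution avoids vector contraction entirely by exploiting that the pooling is an average of pointwise maps: after the channel peeling, $\phi$ is applied to a \emph{scalar}, so Lemma~\ref{lem:contraction} removes it at constant $1$; the square is then pushed through the average $\frac{1}{|G|}\sum_{l}$ by convexity (Jensen), leaving for each group element $l$ a term $\sum_i\eps_i\,(\rho\circ\sigma)\paran{\sum_k\tilde{\vw}_{(k)}^\top\Pi_l\vx_i(k)}$ whose nonlinearity is again a pointwise $1$-Lipschitz positive homogeneous scalar map, removed by a second scalar contraction; Cauchy--Schwarz and unitarity of $\Pi_l$ then give $mb_x^2$, with no $\sqrt{2}$ and no Parseval needed. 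Replacing your unspecified vector-contraction step by this ``contract $\phi$ -- Jensen over the group average -- contract $\rho\circ\sigma$'' sequence makes your argument coincide with the paper's. A further minor point: the $4\sqrt{2\log(4/\delta)/m}$ tail comes from the assumed bound $\card{\ell}\le 1$ in Theorem~\ref{thm:ge_vs_rademacher} (with $c=1$), not from your range bound $\sup_h\card{h(\vx)}\le M_1M_2b_x$, which would instead scale that tail by $M_1M_2b_x$.
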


The proof leverages Rademacher complexity analysis and is presented in Appendix \ref{app:proof_general_pooling}. Note that the assumption of positively homogeneity is only needed to utilize the peeling technique of \cite{golowich_size-independent_2018}. We expect that similar techniques from \cite{vardi_sample_2022} can be used to extend the result to Lipschitz activation functions.

\paragraph{Average pooling.} Consider now the special case of average pooling operation $P(\vx)= \frac{1}{|G|}\mathbf{1}^\top\vx$. This is a linear layer; therefore, it can be combined with the last layer $\vu$ to yield a standard two-layer neural network. We can utilize the results from \cite{vardi_sample_2022, golowich_size-independent_2018}. The combination of the layer $\vu$ and average pooling is a linear layer with the norm $M_1/\sqrt{|G|}$, and the matrix $\mW$, a circulant matrix, has the Frobenius norm $\sqrt{|G|}M_2$. The product of these norms would be bounded by $M_1M_2$. Being a special case of the model in \cite{vardi_sample_2022}, we can use their results off-the-shelf to get an upper bound on the sample complexity that depends only on $M_1M_2$ for Lipschitz-activation functions (Theorem 2 of \cite{vardi_sample_2022}). We provide an independent proof for positively homogeneous activation functions in Appendix \ref{app:proof_avgpooling} with a clean form.
Note that the authors in \cite{vardi_sample_2022} provide a result for average pooling 
that contains the term $O_\phi$, the maximal number of patches that any single input coordinate appears. In our case, this term equals $|G|$, which is canceled out by the average pooling term. Their bound depends on the spectral norm of the underlying circulant matrix,  which can be bigger than the norm of the filter provided here. But, their proof can be reworked with the norm of convolutional filter instead, in which case, their result will be our special case.

\paragraph{Impact of group size.} The above theorem is dimension-free, and there is no dependence on the number of input and output channels $c_0$ and $c_1$, as well as the group size $|G|$. Note that for average pooling and sufficiently smooth activation functions, we can use even the stronger result (Theorem 4 of \cite{vardi_sample_2022}) and get a bound that depends on $M_1M_{2\to2}/\sqrt{|G|}$, where $M_{2\to2}$ is the spectral norm of $\mW$. This manifests the impact of group size on the generalization similar to \cite{sokolic_generalization_2017, behboodi_pac-bayesian_2022}. 

\paragraph{Max pooling.} 
We cannot rely on the previous results for the max pooling operation since the peeling argument would not work.
Theorem 7 and 8 of \cite{vardi_sample_2022} provide a bound for max pooling, however their network does not contain the linear aggregation $\vu$ after max-pooling, and their bound contains dimension dependencies such as $\log(|G|c_1)$ or $\log(m)$. 
We provide various bounds for max pooling in Appendix \ref{sec:max_pooling_with_covering_number} and  \ref{app:proof_maxpooling}. The proof technique is different from the above results. Again, the bounds have different dimension dependencies. We believe these dependencies are proof artifacts. Removing them would be an interesting direction for future work.

To summarize, we have established that for a single hidden layer group convolution network, we can have a dimension-free bound that depends merely on the norm of filters. 
\begin{remark}[Frequency Domain Analysis]
The authors in \cite{behboodi_pac-bayesian_2022} improved the PAC-Bayesian generalization bound by conducting their analysis using the representation in the frequency domain. In our Rademacher analysis, such a shift would not bring any additional gain, and we recover the same bound. We provide the details in the supplementary materials.
\end{remark}
\subsection{{Bounds for Multi-Layer Equivariant Network}}

{In this section, we study a simple extension to multi-layer group equivariant networks. Consider the following network with $L$ hidden layer:}
\begin{equation}
{h_{\vu,\{\vw^{(l)},l\in[L]\}}(\vx) \defeq \vu^\top P\circ{\sigma
(\mW^{(L)} \sigma
(\mW^{(L-1)} \dots \sigma (\mW^{(1)})\vx \dots ) }
}
\label{def:gcnn_Llayer}    
\end{equation}
{where $P(\cdot)$ is the average pooling operation, $\sigma(\cdot)$ is the ReLU function. Each linear layer $\mW^{l}$ is the same as the mapping in \eqref{eq:equivariant_layer} but with $c_{l-1}$ and $c_l$ as the input and output channels. The new hypothesis space is given by:}
\begin{equation}
    {\gH^{(L)} \defeq \left\{h_{\vu,\{\vw^{(l)},l\in[L]\}}: \norm{\vu} \leq M_{1}, \norm{\vw_i}\leq M_{i+1}, i\in[L] \right\}.}
    \label{def:gcnn_hyp_space_L_layer}
\end{equation}
{We have the following theorem.}
\begin{theorem}
{Consider the hypothesis space $\gH^{(L)}$ defined in \eqref{def:gcnn_hyp_space_L_layer}. With probability at least $1-\delta$ and for all $h\in\gH$, we have:}
    \[
{\Ltrue(h) \leq \Lemp(h) + 2\frac{b_x |G|^{\frac{L-1}{2}} M_1 M_2\dots M_{L+1}}{\sqrt{m}} + 4 \sqrt{\frac{2\log(4/\delta)}{m}}. }
    \]
\label{thm:gcnn_general_pooling_L_layer}
\end{theorem}
The proof can be found in Appendix~\ref{app:proof_multi_layer_networks}. We first comment on the dimension dependency of the bound. Although our bound has no dependence on the number of input and hidden layer channels, it still has dependence on $|G|^{(L-1)/2}$. It is not entirely dimension-free. Using some other techniques, for example, see Section 3 in \cite{golowich_size-independent_2018}, it can be improved to $(L-1)\log |G|$ for $L>1$. However, dimension dependence is only one concern of the above result. It is worth pointing out that the generalization bounds for deeper networks suffer from many shortcomings and generally fail to correlate well with the empirical generalization error. For example, various norm bounds were considered in \cite{jiang_fantastic_2020}, and the correlation with the generalization error was explored. Generally, the norm-based bounds performed poorly, while sharpness-aware bounds showed promises. Besides, in \cite{nagarajan_uniform_2019}, the norm-based bounds were shown to increase drastically with the training set size, leading to looser bounds with increasing training set size. Therefore, we think these norm-based bounds have limitations in deep network generalization analysis.

\subsection{Lower bound on the Rademacher Complexity Analysis}
A natural question that might come up is whether the obtained bound is tight or not. In this section, we provide an answer to this by showing that the Rademacher complexity is lower bounded similarly for a class of networks.
\begin{theorem}
Consider the hypothesis space $\gH$ defined in \eqref{def:gcnn_hyp_space}. If $P(\cdot)$ is the average pooling operation, and $\sigma(\cdot)$ is the ReLU activation function, then there is a data distribution such that the Rademacher complexity is lower bounded by $c\frac{b_x M_1 M_2}{\sqrt{m}}$ where $c$ is an independent constant.
\label{thm:rademacher_lower_bound}
\end{theorem}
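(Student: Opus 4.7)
The plan is to lower bound $\gR_m(\gH)$ by exhibiting a subclass $\gH'\subseteq\gH$ together with an explicit $m$-point sample on which the empirical Rademacher complexity can be computed in closed form and already matches the claimed rate; monotonicity $\gR_m(\gH)\ge \gR_m(\gH')$ then finishes the proof. The key idea is to engineer inputs and filters so that the group convolution collapses to a scalar multiplication. Once this happens, the network reduces on the sample to a single-neuron ReLU predictor whose weights can be aligned with any Rademacher sign pattern coordinate by coordinate, and a standard Cauchy-Schwarz argument gives a tight lower bound.

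For the construction, I assume $c_0\ge m$ (which is at our disposal since the theorem only requires existence of a distribution) and fix $c_1=1$. For each $i\in[m]$, let the training input $\vx^{(i)}\in\R^{c_0\times|G|}$ be the signal whose $i$-th channel equals the constant $(b_x/\sqrt{|G|})\mathbf{1}$ and whose other channels are zero, so that $\|\vx^{(i)}\|=b_x$. Define $\gH'$ by restricting $\gH$ to the hypotheses with scalar $u\in[-M_1,M_1]$ and constant filters $\vw_{(1,k)}=(\alpha_k/\sqrt{|G|})\mathbf{1}$ parametrised by $\va\in\R^{c_0}$; since $\|\vw_{(1,k)}\|=|\alpha_k|$, the filter-norm constraint $\|\vw\|\le M_2$ is exactly $\|\va\|\le M_2$, hence $\gH'\subseteq\gH$. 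Using the fact that the $G$-convolution of two constant signals is constant, together with the observation that only the $k=i$ input channel contributes for $\vx^{(i)}$, a direct computation through the pointwise ReLU and the average pooling yields
\begin{equation*}
    h_{u,\va}(\vx^{(i)}) \;=\; u\,\sigma(b_x\,\alpha_i).
\end{equation*}

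Since $\gH'$ is closed under $u\mapsto -u$, its empirical Rademacher complexity on this sample equals
\begin{equation*}
\gR_m(\gH') \;=\; \frac{M_1}{m}\,\E_\sigma\,\sup_{\|\va\|\le M_2}\Bigl|\sum_{i=1}^{m}\sigma_i\,\sigma(b_x\alpha_i)\Bigr|.
\end{equation*}
For a fixed sign pattern $\sigma\in\{\pm1\}^m$, let $S_\pm=\{i:\sigma_i=\pm1\}$. Choosing $\alpha_i=M_2/\sqrt{|S_+|}$ for $i\in S_+$ and $\alpha_i=0$ otherwise saturates the constraint $\|\va\|=M_2$, activates the ReLU exactly on $S_+$, and produces $\sum_i\sigma_i\sigma(b_x\alpha_i)=b_x M_2\sqrt{|S_+|}$; the symmetric choice gives $b_x M_2\sqrt{|S_-|}$. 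Since $\max(|S_+|,|S_-|)\ge m/2$ for every $\sigma$, one concludes
\begin{equation*}
    \gR_m(\gH') \;\ge\; \frac{M_1\,M_2\,b_x}{\sqrt{2m}},
\end{equation*}
which is the statement with $c=1/\sqrt{2}$.

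The only subtle step is justifying that averaging inside the pooling layer does not collapse the effective dimension and preclude a $1/\sqrt{m}$ rate. The constant-filter/channel-sparse construction is precisely the configuration that survives the pooling: a constant pre-activation is preserved by pointwise ReLU and by averaging, so the $m$ filter coordinates $\alpha_1,\dots,\alpha_m$ remain simultaneously accessible to Rademacher signs; the $M_1 M_2 b_x/\sqrt{2m}$ factor then falls out of the standard Cauchy-Schwarz bound on a sign-selected subset of the Euclidean ball of radius $M_2$.
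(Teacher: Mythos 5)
Your computation is internally correct: with $c_1=1$, channel-sparse inputs of norm $b_x$, and constant filters, the network does collapse to $u\,\sigma(b_x\alpha_i)$, and the sign-selection argument on $S_\pm$ legitimately yields $\gR_m(\gH')\ge b_xM_1M_2/\sqrt{2m}$ in that setting. The genuine gap is the step you dismiss as being ``at our disposal'': the assumption $c_0\ge m$. The theorem fixes the hypothesis space $\gH$ of Eq.~\ref{def:gcnn_hyp_space}, and with it the architecture ($c_0$, $c_1$, $G$); only the data distribution may be chosen adversarially. Setting $c_1=1$ is harmless (it amounts to restricting to a subclass with all but one output channel zeroed, as the paper also does), but $c_0\ge m$ is a property of the input space that cannot be emulated by a subclass: your construction needs $m$ mutually orthogonal weight directions so that each Rademacher sign can be matched independently, and for a fixed architecture it therefore only proves the bound in the regime $m\le c_0$. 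Since the whole point of the theorem is to certify tightness of the dimension-free upper bound $2b_xM_1M_2/\sqrt{m}$ — which is of interest precisely when $m$ is large compared to the network size — this restriction is a material weakening, not a cosmetic one. (A secondary, smaller issue: you exhibit a specific $m$-point sample rather than a distribution; turning it into, say, the uniform distribution over those points introduces repeated samples and requires an extra argument to retain the $\sqrt{m}$ rate.)

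The paper's proof avoids both problems by a different reduction: it restricts to filters that are nonnegative constant multiples of $\mathbf{1}$ in a single channel and chooses any distribution supported in the positive orthant with $\norm{\vx_i}=b_x$, so that the ReLU pre-activations are nonnegative and the nonlinearity acts as the identity; the network then reduces to a linear functional of $(\mathbf{1}^\top\vx_i(k))_k$, and Khintchine's inequality gives $\E_\vepsilon\norm{\sum_i\epsilon_i\vv_i}\ge c\sqrt{\sum_i\norm{\vv_i}^2}\ge cb_x\sqrt{m}$ with no dependence on $c_0$, $c_1$, or $|G|$, hence a lower bound valid for every $m$ and every fixed architecture. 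If you want to salvage your approach, replace the coordinate-wise sign matching by exactly this kind of Khintchine (or Gaussian-comparison) estimate after linearizing the ReLU, rather than relying on the weight dimension to absorb the $m$ sign patterns.
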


The proof is provided in the Appendix~\ref{app:lower_bound}. Note that Rademacher complexity (RC) bounds are known to be tight for shallow models, such as SVMs. Indeed, if one fixes the first layer and only trains the last linear layer with weight decay, the model is equivalent to hard-margin SVM, for which RC bounds are tight. We also provide additional evidence in the numerical result section.

\section{General Equivariant Networks}
\label{sec:general_equivariant_nets}

The focus of our analysis in the above section has been on group convolutional networks and related architectures. In particular, we had considered the equivariance for finite Abelian groups for which the filters were implemented using multi-channel convolution operation. We now look at the general equivariant networks w.r.t. a compact group. As we will see, the analysis for these networks would be based on their MLP structure. 

We follow the procedure described in \cite{cohen_steerable_2016,cesa2022a,Weiler2018-STEERABLE}. We assume a general input space where the action of the compact group $G$ on the space is given by a linear map $\rho_0(g)$ for $g\in G$. For this space, the filters of the first layer, similarly to GCNs, are parametrized in the frequency domain. The generalization of Fourier analysis to compact groups is done via the notion of \emph{irreducible representations} (irreps). An irrep, typically denoted by $\psi$, can be thought as a frequency component of a signal over the group. 

Using tools from the representation theory, the map $\rho_0(g)$ can be represented in the frequency domain as the direct sum of irreps $\mQ_0^\top \left(\bigoplus_\psi \bigoplus_{i=1}^{m_{0, \psi}} \psi \right) \mQ_0$ where $m_{0,\psi}$ is the multiplicity of the irrep $\psi$. The matrix $\mQ_0$ is a unitary matrix representing the generalized Fourier transform given by $\hat{\vx}=\mQ_0\vx$. See Appendix~\ref{app:def_equivariant_networks} for detailed definitions. A similar frequency domain representation can be obtained for the hidden layer in terms of irreps, each with multiplicity $m_{1,\psi}$. 
The block diagonal structure of filters in the multi-channel GCNs given in \eqref{eq:freq_domain_kernel_gcn} is now obtained by a general block-diagonal structure. The equivariant neural network is represented as
\[
h_{\hat{\vu},\hat{\mW}} \defeq  \hat{\vu}^\top \mQ_2\sigma\paran{ \mQ_1\bigoplus_\psi \bigoplus_{i=1}^{m_{1, \psi}} \sum_{j=1}^{m_{0, \psi}}\hat{\mW}(\psi,i,j)\hat{\vx}(\psi,j)}.
\]
Since we are working with the point-wise non-linearity $\sigma$ in the \textit{spatial domain}, two unitary transformations  $\mQ_1$ and  $\mQ_2$ are applied as Fourier transforms from the frequency domain to the spatial domain. 
The last layer $\hat{\vu}$ should be chosen to yield a group invariant function, which means that the vector $\hat{\vu}$ only aggregates the frequencies of the trivial representation $\psi_0$, and it is zero otherwise. To use an analogy with group convolutional networks, $\mQ_1$ and $\mQ_2$ are the Fourier matrices, and $\hat{\vu}$ is a combination of the pooling, which projects into the trivial representation of the group, and the last aggregation step. The general equivariant networks are defined in Fourier space as follows:
\[
\gH_{\hat{\vu},\hat{\mW}} \defeq  \left\{
h_{\hat{\vu},\hat{\mW}} : \norm{\hat{\vu}}\leq M_1, \norm{\hat{\mW}}\leq M_2
\right\}.
\]
Note that the hypothesis space assumes a bounded norm of the filters' parameters in the frequency domain. For this hypothesis space, we can get a similar dimension-free bound.
\begin{theorem}
    Consider the hypothesis space $\gH_{\hat{\vu},\hat{\mW}} $ of equivariant networks with bounded weight norms. If the activation function $\sigma(\cdot)$ is 1-Lipschitz positively homogeneous, then with probability at least $1-\delta$ for all $h\in\gH_{\hat{\vu},\hat{\mW}}$, we have:
    \[
\Ltrue(h) \leq \Lemp(h) + 2\frac{b_x M_1 M_2}{\sqrt{m}} + 4 \sqrt{\frac{2\log(4/\delta)}{m}}. 
    \]
\end{theorem}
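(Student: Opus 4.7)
The plan is to reduce the statement to a standard two-layer Rademacher complexity bound by exploiting the fact that the three matrices $\mQ_0, \mQ_1, \mQ_2$ are fixed unitaries and do not depend on the learned parameters. Concretely, one starts from the usual Rademacher-based generalization inequality for $1$-Lipschitz losses: with probability at least $1-\delta$,
\[
\Ltrue(h) \leq \Lemp(h) + 2\,\rademacher_m(\gH_{\hat{\vu},\hat{\mW}}) + 4\sqrt{\tfrac{2\log(4/\delta)}{m}},
\]
so the task reduces to showing $\rademacher_m(\gH_{\hat{\vu},\hat{\mW}}) \le b_x M_1 M_2 / \sqrt{m}$.

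The first observation is that $\mQ_0$ is unitary, hence $\|\hat{\vx}_i\| = \|\vx_i\| \le b_x$, so the bound on the input norm transfers to the frequency-domain input with no loss. Writing $h_{\hat{\vu},\hat{\mW}}(\vx) = \hat{\vu}^\top \mQ_2 \sigma(\mQ_1 \hat{\mW} \hat{\vx})$, I would then peel off $\hat{\vu}$ via Cauchy–Schwarz inside the supremum:
\[
\rademacher_m(\gH_{\hat{\vu},\hat{\mW}})
\le \frac{M_1}{m}\,\E_{\vepsilon}\, \sup_{\|\hat{\mW}\|\le M_2} \Bigl\|\mQ_2 \sum_{i=1}^m \epsilon_i\, \sigma(\mQ_1 \hat{\mW} \hat{\vx}_i)\Bigr\|.
\]
Because $\mQ_2$ is unitary, it drops out of the Euclidean norm, leaving $\|\sum_i \epsilon_i \sigma(\mQ_1 \hat{\mW} \hat{\vx}_i)\|$ inside the supremum. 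From here the Golowich–Shamir–Shalev-Shwartz peeling technique (exactly as invoked in the proof of Theorem \ref{thm:gcnn_general_pooling}) applies: using positive homogeneity of $\sigma$ together with Jensen's inequality one peels $\sigma$ off, obtaining a control by $\|\sum_i \epsilon_i \mQ_1 \hat{\mW} \hat{\vx}_i\|$. A second use of the unitarity of $\mQ_1$ removes it as well, and one is left with the standard linear Rademacher quantity $\E_\vepsilon \sup_{\|\hat{\mW}\|\le M_2}\|\hat{\mW} \sum_i \epsilon_i \hat{\vx}_i\|$, which Cauchy–Schwarz and Jensen bound by $M_2 \sqrt{\sum_i \|\hat{\vx}_i\|^2} \le M_2 b_x \sqrt{m}$. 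Combining gives exactly $M_1 M_2 b_x / \sqrt{m}$.

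The only non-routine step is checking that the block-diagonal / irrep-multiplicity constraint imposed on $\hat{\mW}$ in the definition of $\gH_{\hat{\vu},\hat{\mW}}$ does not interfere with the peeling argument. This is not an obstacle but a simplification: the constraint restricts $\hat{\mW}$ to a subset of the Frobenius ball of radius $M_2$, so the supremum can only decrease, and the bound on the unconstrained two-layer class is automatically inherited. Similarly, the fact that $\hat{\vu}$ is supported only on the trivial-representation components shrinks its admissible set inside $\{\|\hat{\vu}\|\le M_1\}$, again only tightening the bound. The main obstacle I would anticipate in writing the proof carefully is verifying that each of the two unitary factorizations survives the supremum-and-norm manipulations cleanly when $\sigma$ is applied in the spatial (rather than frequency) domain; once that is done, the peeling argument is essentially identical to the one already used for Theorem \ref{thm:gcnn_general_pooling}, which is why this bound coincides with the GCNN bound and why, as noted in the preceding remark, the frequency-domain parametrization brings no additional gain in the Rademacher analysis.
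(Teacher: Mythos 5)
Your proposal is correct and follows essentially the same route as the paper's proof: peel off $\hat{\vu}$ (and $\mQ_2$) by Cauchy--Schwarz and unitarity, use Jensen to pass to the squared norm, exploit positive homogeneity and the unitarity of $\mQ_1$ to apply the Golowich-style peeling, and finish with the linear bound $\E_\vepsilon\norm{\sum_i\epsilon_i\hat{\vx}_i}^2\le mb_x^2$, with the structural constraints on $\hat{\mW}$ and $\hat{\vu}$ only shrinking the hypothesis class as you say. The one spot stated too loosely is the claim that $\sigma$ can be peeled off wholesale inside the vector norm to yield $\norm{\sum_i\epsilon_i\mQ_1\hat{\mW}\hat{\vx}_i}$: the scalar contraction lemma does not license this directly, and the paper instead expands the squared norm over output coordinates $j$, normalizes each row by $\norm{\vq_{1,j}^\top\hat{\mW}}$, uses $\sum_j\norm{\vq_{1,j}^\top\hat{\mW}}^2=\norm{\mQ_1\hat{\mW}}_F^2=\norm{\hat{\mW}}_F^2\le M_2^2$, takes a supremum over $j$ to reduce to a single scalar coordinate, and only then applies contraction followed by Cauchy--Schwarz --- exactly the maneuver you implicitly invoke by citing the peeling used for the GCNN pooling theorem.
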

The proof is presented in Appendix~\ref{app:proof_equiv_networks}. This result provides another dimension-free bounds for equivariant networks that depend merely on the norm of the kernels. Note that the network has a standard MLP structure, so the result can be obtained using techniques used in \cite{golowich_size-independent_2018}.  Also, note that constructing equivariant networks in the frequency domain is useful in steerable networks to deal with continuous rotation groups \cite{poulenardFunctionalApproach,cesa2022a}, while the parametrization in the spatial domain is pursued in works like \cite{bekkers2020bspline,finzi_generalizing_2020}.

\section{Generalization Bounds for Weight Sharing}
In this section, we try to answer the question whether the gain of our bound lies in the specific equivariant architecture or weight sharing. The answer is subtle. Indeed, an arbitrary type of weight sharing will not bring out the generalization gain. However, weight-sharing schemes can lead to a similar gain without necessarily being equivariant. 

For a fair comparison with the group convolution network explained above, we consider an architecture with the same number of effective parameters shared similarly. The weight-sharing network is specified as follows:
\begin{equation}
h^{w.s.}_{\vu,\vw}(\vx)\defeq \vu^\top P\circ{\sigma
{
\begin{pmatrix}
\sum_{c=1}^{c_0} 
\paran{\sum_{k=1}^{|G|}
\vw_{(1,c)}(k)\mB_k 
}
\vx{(c)}\\
\vdots\\
\sum_{c=1}^{c_0}
\paran{\sum_{k=1}^{|G|}
\vw_{(c_1,c)}(k)\mB_k 
}
\vx{(c)}
\end{pmatrix}
 }
 },
\label{def:weight_sharing}    
\end{equation}
where $\mB_k$'s are fixed $|G|\times|G|$ matrices inducing the weight sharing scheme. These matrices are not trained using data and merely specify how the weights are shared in the network. For example, if $\mB_k$'s are chosen as the basis for the space of circulant matrices, the setup boils down to the group convolution network.

The corresponding hypothesis space is defined as:
\[
\gH^{w.s.} = \left\{ h^{w.s.}_{\vu,\vw}(\vx): \norm{\vu}\leq M_1, \norm{\vw}_{\mB}\leq M_2^{w.s.}\right\}
\]
where
\[
\norm{\vw}_{\mB} \defeq 
\max_{l\in[|G|]} \norm{
\paran{\sum_{k=1}^{|G|}
\vw_{(j,c)}(k)\vb_{k,l}^\top 
}_{c\in[c_0], j\in[c_1]}
}_F.
\]
The following proposition provides a purely norm-based generalization bound. 
\begin{proposition}
    For the class of functions $h$ in the hypothesis space $\gH^{w.s.}$ with
    the average pooling operation $P$, $\sigma(\cdot)$ as a $1$-Lipschitz positively homogeneous activation function, the generalization error is bounded as:
\[
\Ltrue(h) \leq \Lemp(h) + 2\frac{b_x M_1 M^{w.s.}_2}{\sqrt{m}} + 4 \sqrt{\frac{2\log(4/\delta)}{m}}. 
\]
\label{prop:weight_sharing}
\end{proposition}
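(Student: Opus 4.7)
The plan is to mirror, step for step, the Rademacher-complexity argument that proves Theorem~\ref{thm:gcnn_general_pooling} in the average-pooling case, and to replace only the final norm comparison with a calculation that accounts for the arbitrary sharing matrices $\{\mB_k\}$. By symmetrization, McDiarmid's inequality, and Ledoux--Talagrand contraction (which absorbs the $1$-Lipschitz loss), it suffices to show $\rademacher_m(\gH^{w.s.}) \lesssim b_x M_1 M_2^{w.s.}/\sqrt{m}$ with the constant $1$ in front.

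Because $P$ is average pooling, $\vu^\top P(\cdot)$ is already a single linear functional: defining $\tilde\vv \in \R^{c_1|G|}$ by $\tilde v_{j,g} = u_j/|G|$ gives $\vu^\top P(\vz) = \tilde\vv^\top \vz$ with $\norm{\tilde\vv} = \norm{\vu}/\sqrt{|G|} \le M_1/\sqrt{|G|}$. Collecting the per-output blocks $\mW_{(j,c)} = \sum_k \vw_{(j,c)}(k)\mB_k$ into a single matrix $\mW \in \R^{(c_1|G|)\times(c_0|G|)}$, the network reduces to the standard two-layer form $h(\vx) = \tilde\vv^\top \sigma(\mW\vx)$. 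From here I would invoke the positive-homogeneous peeling of \cite{golowich_size-independent_2018}, exactly as used in the proof of Theorem~\ref{thm:gcnn_general_pooling} in Appendix~\ref{app:proof_avgpooling}, to obtain
\[
\rademacher_m(\gH^{w.s.}) \;\le\; \frac{M_1}{\sqrt{|G|}} \cdot \norm{\mW}_F \cdot \frac{b_x}{\sqrt{m}}.
\]

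The only genuinely new step is the bound $\norm{\mW}_F \le \sqrt{|G|}\,M_2^{w.s.}$. Partitioning the squared Frobenius norm by the column index $l \in [|G|]$ of the blocks,
\[
\norm{\mW}_F^2 \;=\; \sum_{l=1}^{|G|} \sum_{j,c} \norm{(\mW_{(j,c)})_{:,l}}^2 \;=\; \sum_{l=1}^{|G|} \norm{\Bigl(\textstyle\sum_k \vw_{(j,c)}(k)\vb_{k,l}^\top\Bigr)_{c,j}}_F^2 \;\le\; |G|\,(M_2^{w.s.})^2,
\]
where the final inequality is immediate from the definition of $M_2^{w.s.}$ as a maximum over $l$. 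The extracted $\sqrt{|G|}$ then cancels exactly the $1/\sqrt{|G|}$ coming from $\tilde\vv$, yielding the desired $M_1 M_2^{w.s.}\,b_x/\sqrt m$ scaling; converting back to a generalization bound by the usual tail argument (which contributes the $4\sqrt{2\log(4/\delta)/m}$ term) completes the proof.

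The main obstacle — and the reason $\norm{\vw}_\mB$ is defined the way it is — lies in this last inequality. The weight-sharing class is \emph{not} a free Frobenius ball in $\mW$, so relaxing it to one of radius $\sqrt{|G|}\,M_2^{w.s.}$ loses a $\sqrt{|G|}$ factor that would be fatal were it not absorbed exactly by the pooling normalization. This cancellation crucially requires $M_2^{w.s.}$ to be defined via $\max_l$ rather than any aggregate over $l$; the relaxation is tight precisely when the column-wise quantities $\sum_{j,c}\norm{(\mW_{(j,c)})_{:,l}}^2$ are balanced across $l$, as for circulant $\{\mB_k\}$, where the statement degenerates into Theorem~\ref{thm:gcnn_general_pooling}.
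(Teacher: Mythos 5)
Your proof is correct, but it takes a genuinely different route from the paper's. The paper never absorbs the pooling into the last layer: it keeps the average pooling as an explicit average over the pooling coordinates $l$, pulls the sum $\frac{1}{|G|}\sum_{l}$ out by the triangle inequality, and then, for each fixed $l$, runs the positive-homogeneity peeling directly on the constrained weight-sharing class, using $\sum_{j}\bigl\|\bigl(\sum_k \vw_{(j,c)}(k)\vb_{k,l}^\top\bigr)_{c}\bigr\|_F^2 \le (M_2^{w.s.})^2$ per $l$, followed by contraction and Cauchy--Schwarz. You instead merge $\vu$ and $P$ into one vector $\tilde\vv$ with $\norm{\tilde\vv}\le M_1/\sqrt{|G|}$, relax the weight-sharing class to an unstructured Frobenius ball of radius $\sqrt{|G|}\,M_2^{w.s.}$, and invoke the standard two-layer peeling; the two $\sqrt{|G|}$ factors cancel and you recover the same constant. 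What your route buys is modularity and a transparent explanation of why the $\max_l$ in the definition of $\norm{\cdot}_{\mB}$ is exactly the right normalization; what the paper's route buys is that it does not rely on the pooling being linear, so the same per-$l$ decomposition carries over to the nonlinear poolings $\phi,\rho$ treated elsewhere in the paper, whereas your absorption trick is specific to average pooling. Two small points: the intermediate bound $\E_\vepsilon\sup_{\norm{\mW}_F\le B}\norm{\sum_i\epsilon_i\sigma(\mW\vx_i)}\le B\sqrt{m}\,b_x$ is not literally an off-the-shelf citation but the same moment-inequality/homogeneity/contraction chain the appendix carries out, so you should either spell it out or point to that derivation; and under the paper's convention $\vb_{k,l}^\top$ is the $l$-th \emph{row} of $\mB_k$, so your Frobenius-norm partition should be over rows of the blocks $\mW_{(j,c)}$ rather than columns --- the identity and the final inequality are unaffected.
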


Note that if $\vb_{k,l}$'s are the rows of the circulant matrices, then we end up with the same result as before with $M^{w.s.}_2=M_2$.  Interestingly, if the vectors $\{\vb_{k,l}: k\in[|G|] \}$'s are orthogonal to each other for each $l\in[|G|]$, and all of them have unit norm, i.e., $\norm{\vb_{k,l}}=1$, then we also get $M^{w.s.}_2 =M_2.$ 
The conclusion is that the weight sharing can impact the generalization similarly to the group convolution, even if the construction does not arise from the group convolution. This does not generally hold, particularly if the row vectors of matrices $\mB_k$ are not orthogonal. Note that throughout the paper, we have not assumed anything regarding the underlying distribution, which can be without symmetry. On the other hand, the gain of equivariance shows itself for distributions with built-in symmetries if we assume that the underlying task has invariance or equivariance property.

\section{Generalization Bounds for Local Filters }
\label{sec:generalization_local}

In previous sections, we observed that the impact of equivariance on the generalization is very similar to that of weight sharing under an appropriately chosen basis. Another argument relates the generalization benefits of convolutional neural networks to the use of local filters, where the same filter is applied to a number of patches. The patches have lower dimensions, and the filters similarly have lower dimensions than the total input dimension. Note that this goes beyond the benefit of weight sharing mentioned in the previous section. One example is Theorem 6 in \cite{vardi_sample_2022}; we follow their setup. In their case, each convolution operation
$\vw_{(j,k)} \conv_G \vx{(k)}$ can be represented as $\left(\vw_{(j,k)}^\top \phi_1(\vx{(k)}) \dots  \vw_{(j,k)}^\top \phi_{|G|}(\vx{(k)}) \right)$, where $\phi_l(\cdot)$ represents the patch $l$. Each patch selects a subset of input entries. The patches $\Phi=\{\phi_l, l\in[|G|]\}$ are assumed to be \textit{local}, in the sense that each coordinate in $\vx(k)$ appears at most in $O_{\Phi}$ number of patches with $O_{\Phi} < |G|$. More formally, for any vector $\vx$ and a set $S\subset[|G|]$, define the vector $\vx_{S}\defeq (x_i)_{i\in S}$, i.e., with entries selected from the index in $S$. Define the patch $\phi_l: \R^{|G|}\to\R^{n'}$ as $\phi_l(\vx)\defeq\vx_{S_l}$ for a subset $S_l$. The group convolution network with locality is defined as:
\[
h^{\Phi}_{\vu,\vw}(\vx) =\vu^\top P\circ{\sigma
{
\begin{pmatrix}
\sum_{k=1}^{c_0} \paran{\vw_{(1,k)}^\top\phi_l(\vx{(k)})}_{l\in[|G|]}\\
\vdots\\
\sum_{k=1}^{c_0} \paran{\vw_{(c_1,k)}^\top\phi_l(\vx{(k)})}_{l\in[|G|]}
\end{pmatrix}
 }
 }. 
\]
By reorganizing the weights in each row corresponding to each patch, the whole network is presented as $\vu^\top P\circ \sigma(\mW\vx)$, where we assume that the matrix $\mW$ conforms to the set of patches $\Phi$. The hypothesis space with locality is defined as 
\[
\gH^\Phi = \{h^\Phi_{\vu,\vw}(\vx): \norm{\vu}\leq M_1, \norm{\vw}\leq M_2\}.
\]

For this class of functions, it can be shown that there is a generalization benefit in using local filters besides the gain of equivariance or proper weight sharing. This conclusion is captured in the following result.
\begin{proposition}
Consider the hypothesis space of functions $\gH^\Phi$ constructed by patches $\Phi$, the average pooling operation $P$, $1$-Lipschitz positively homogeneous activation function $\sigma$. Let $O_\Phi$ be the maximal number of patches that any input entry appears in. Then, with probability at least $1-\delta$ and for all $h\in\gH^\Phi$, we have:
    \[
\Ltrue(h) \leq \Lemp(h) + 2\sqrt{\frac{O_\Phi}{|G|}}\frac{b_x M_1 M_2}{\sqrt{m}} + 4 \sqrt{\frac{2\log(4/\delta)}{m}}. 
    \]
\label{thm:gcnn_locaity}
\end{proposition}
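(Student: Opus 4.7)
The plan is to reduce the bound to the two-layer Rademacher analysis of Theorem~\ref{thm:gcnn_general_pooling} applied once per patch, and then sum over patches via Cauchy--Schwarz. For the average-pooling case, let $\vw_j := (\vw_{(j,1)},\dots,\vw_{(j,c_0)}) \in \R^{c_0 n'}$ and $\phi_l^{\mathrm{full}}(\vx) := (\phi_l(\vx(1)),\dots,\phi_l(\vx(c_0))) \in \R^{c_0 n'}$. Setting $g_{\vu,\vw}(\vz) := \sum_{j=1}^{c_1} u_j\,\sigma(\vw_j^\top \vz)$, the network decomposes as
\[
h^\Phi_{\vu,\vw}(\vx) \;=\; \frac{1}{|G|}\sum_{l=1}^{|G|} g_{\vu,\vw}\!\bigl(\phi_l^{\mathrm{full}}(\vx)\bigr),
\]
i.e., the average over patches of a \emph{single} two-layer MLP $g_{\vu,\vw}$ (with shared parameters, $\|\vu\|\le M_1$ and $\sum_j\|\vw_j\|^2 = \|\vw\|^2 \le M_2^2$) evaluated at the $l$-th patch view of the input.

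Next, I would compute the empirical Rademacher complexity and sub-add the supremum across the sum over $l$:
\[
m\,\rademacher_m(\gH^\Phi) \;\le\; \frac{1}{|G|}\sum_{l=1}^{|G|} \E_\epsilon \sup_{\|\vu\|\le M_1,\ \|\vw\|\le M_2}\sum_{i=1}^m \epsilon_i\, g_{\vu,\vw}\!\bigl(\phi_l^{\mathrm{full}}(\vx_i)\bigr).
\]
Each right-hand term is the empirical Rademacher complexity of a two-layer MLP class on the $l$-th patched dataset $\{\phi_l^{\mathrm{full}}(\vx_i)\}_i$. By the data-dependent Golowich-style peeling underlying Theorem~\ref{thm:gcnn_general_pooling}---which requires the positive-homogeneity of $\sigma$---it is bounded by $C\,M_1 M_2\sqrt{\sum_i\|\phi_l^{\mathrm{full}}(\vx_i)\|^2}$ for an absolute constant $C$.

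Summing over $l$ and applying Cauchy--Schwarz,
\[
\sum_{l=1}^{|G|}\sqrt{\sum_i\|\phi_l^{\mathrm{full}}(\vx_i)\|^2}\;\le\;\sqrt{|G|\sum_{l,i}\|\phi_l^{\mathrm{full}}(\vx_i)\|^2}\;\le\;\sqrt{|G|\,O_\Phi\sum_i\|\vx_i\|^2}\;\le\;b_x\sqrt{|G|\,O_\Phi\,m},
\]
where the middle step uses the locality property $\sum_l\|\phi_l^{\mathrm{full}}(\vx_i)\|^2\le O_\Phi\|\vx_i\|^2$ (each coordinate of $\vx_i$ appears in at most $O_\Phi$ patches). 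Combining with the $1/|G|$ prefactor and dividing by $m$ yields $\rademacher_m(\gH^\Phi) \le C\,\frac{M_1 M_2\,b_x}{\sqrt m}\sqrt{O_\Phi/|G|}$, after which the final high-probability generalization bound follows by the usual symmetrization plus McDiarmid's inequality.

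The main delicate point is that the per-patch bound in the second step must be the \emph{data-dependent} form $\sqrt{\sum_i\|\phi_l^{\mathrm{full}}(\vx_i)\|^2}$: a cruder bound of the shape $M_1 M_2 b_x/\sqrt m$ per patch would incur an extra $\sqrt{|G|}$ after summation over $l$ and erase the locality gain entirely, whereas the Cauchy--Schwarz step only spends a $\sqrt{|G|}$ that the $1/|G|$ prefactor from average pooling turns into the desired $\sqrt{1/|G|}$. The strategy also uses the linearity of average pooling to move the pooling outside the nonlinearity and split the supremum across patches; this decomposition breaks for max-pooling and would require a separate argument.
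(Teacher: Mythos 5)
Your proposal is correct and follows essentially the same route as the paper's proof: peel off $\vu$, use linearity of average pooling to split the supremum across the $|G|$ patch terms, apply the positive-homogeneity peeling to get the data-dependent per-patch bound $M_1M_2\sqrt{\sum_i\norm{\phi_l(\vx_i)}^2}$, and finish with the locality count $O_\Phi$. The only cosmetic difference is that you aggregate the patch terms via Cauchy--Schwarz where the paper uses Jensen's inequality, which is the same estimate, and your remark about needing the data-dependent form (rather than a crude $b_xM_1M_2/\sqrt m$ per patch) correctly identifies the step that preserves the $\sqrt{O_\Phi/|G|}$ gain.
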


Compared with the result above, the locality assumption brings an additional improvement of  $\frac{O_\Phi}{|G|}$. Although the proof focuses on average pooling and group convolution, the same results can be obtained for weight sharing. We should only change the filter parametrization from the matrix $\mB_k$ basis to the vector basis $\vb_k$ given as
$\paran{\sum_{k=1}^{|G|}
\vw_{(j,c)}(k)\vb_{k}
}$. 

The architecture considered in \cite{vardi_sample_2022} is a single-channel version of our result (see more discussions in Appendix \ref{app:comparisons}). In that case, the difference in their bounds is in using the spectral norm of $\mW$. However, their proof only needs a bound on the Euclidean norm of weights, which would end in the same result as ours.  

The analysis so far has focused on the local filters in the spatial domain. However, the filters are also applied per frequency in the frequency domain and are, therefore, local. This can be seen from the block diagonal structure of filters $(\bigoplus_\psi \hat{w}(\psi))$. In this sense, the filters in the frequency domain are already local. When the filters are band-limited in the frequency domain, we have an additional notion of locality. In practice, this is useful when the input is also band-limited; therefore, not all the frequencies are useful for learning.  This introduces a fundamental trade-off for locality benefits arising from the uncertainty principle \cite{donoho_uncertainty_1989,folland_uncertainty_1997}. According to the uncertainty principle, the band-limited filters with $B$ non-zero components will have at least $|G|/B$ non-zero entries in the spatial domain (see Section~\ref{app:proof_gcnn_uncertainty} for more details). The trade-off is captured in the following proposition. 

\begin{proposition}
Consider the hypothesis space of group convolution networks with band-limited filters. If the filters have $B$ non-zero entries in the frequency domain, then the generalization error term in Proposition~\ref{thm:gcnn_locaity} for the smallest spatial filters is at least of order  $\gO(b_xM_1M_2/\sqrt{mB})$.
\label{thm:gcnn_uncertainty}
\end{proposition}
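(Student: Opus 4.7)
The plan is to combine Proposition~\ref{thm:gcnn_locaity} with the discrete uncertainty principle to bound the locality parameter $O_\Phi$ from below by $|G|/B$, which is the smallest spatial support a filter with only $B$ non-zero frequency coefficients can attain. Since the bound in Proposition~\ref{thm:gcnn_locaity} scales as $\sqrt{O_\Phi/|G|}$, this substitution will yield the claimed lower bound of order $b_x M_1 M_2/\sqrt{mB}$ on the generalization term.

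First I would invoke the Donoho--Stark uncertainty principle on the finite Abelian group $G$: any non-zero signal $\vw\in\R^{|G|}$ satisfies $|\supp(\vw)|\cdot|\supp(\hat{\vw})|\geq |G|$. Applied to each convolutional filter $\vw_{(j,c)}$ whose Fourier representation has at most $B$ non-zero entries, this gives $|\supp(\vw_{(j,c)})|\geq |G|/B$, with equality attainable only by the most spatially localized filters consistent with the band-limit (these exist, e.g., for indicator-like signals supported on a subgroup of size $|G|/B$).

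Next I would translate this spatial-support bound into a bound on $O_\Phi$. Each convolution $\vw_{(j,c)}\conv_G \vx(c)$ evaluated at group element $g\in G$ can be written as a linear functional of the restriction of $\vx(c)$ to the translated set $g\cdot\supp(\vw_{(j,c)})$; by the group action, any fixed input coordinate $h\in G$ belongs to exactly $|\supp(\vw_{(j,c)})|$ such translated patches. Taking the maximum over channels yields $O_\Phi = \max_{j,c}|\supp(\vw_{(j,c)})|\geq |G|/B$, with the smallest spatial filters saturating the inequality.

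Finally I would substitute $O_\Phi = |G|/B$ into Proposition~\ref{thm:gcnn_locaity}. The prefactor becomes $\sqrt{O_\Phi/|G|}=1/\sqrt{B}$, so the locality bound reads $\tfrac{2 b_x M_1 M_2}{\sqrt{mB}}$, i.e.\ of order $\gO(b_x M_1 M_2/\sqrt{mB})$; any less localized choice (larger $O_\Phi$) can only worsen the bound, which is the sense in which this is the smallest attainable rate within the locality framework under band-limiting. The only non-routine ingredient is the uncertainty principle itself; everything else is bookkeeping. The main subtlety to be careful about is that the patch-counting argument uses the group-convolution structure explicitly, so translates of $\supp(\vw_{(j,c)})$ cover every input coordinate the same number of times, and that $O_\Phi$ is taken as the maximum over all input/output channel pairs so it captures the whole convolutional layer.
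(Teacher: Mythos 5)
Your proposal follows essentially the same route as the paper: invoke the Donoho--Stark uncertainty principle for finite Abelian groups to get $|\supp(\vw)|\geq |G|/B$, identify $O_\Phi$ with the spatial filter support under group convolution, and substitute $O_\Phi=|G|/B$ into Proposition \ref{thm:gcnn_locaity} to obtain the $1/\sqrt{B}$ prefactor. Your explicit patch-counting justification that each input coordinate appears in exactly $|\supp(\vw)|$ translated patches is a slightly more detailed version of what the paper states briefly, but the argument is the same and correct.
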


As it can be seen, the assumption of band-limitedness can potentially bring the gain of order $1/B$ if the filters are spatially local. To summarize, there is a gain in using local filters. However, since band-limited filters are more efficient for band-limited signals, the uncertainty principle imposes a minimum spatial size for band-limited filters.

\begin{figure}[h!]
\centering
\begin{subfigure}[b]{0.45\textwidth}
    \centering
        \includegraphics[width=\linewidth]{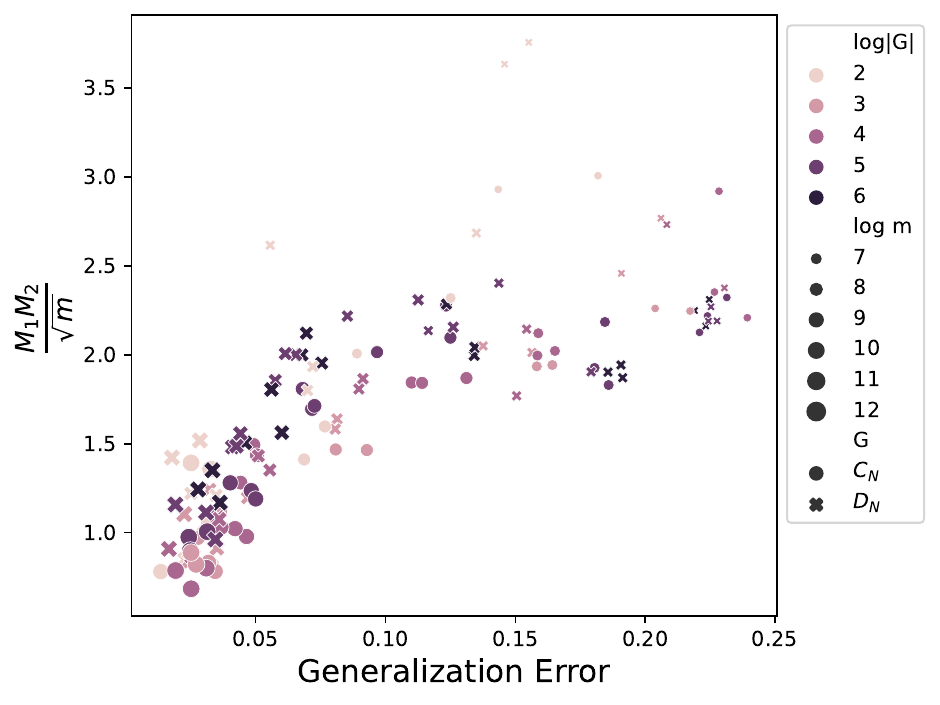}
    \caption{Generalization error vs our bound on MNIST}
    \label{fig:correlation}
\end{subfigure}
\begin{subfigure}[b]{0.45\textwidth}
    \centering
        \includegraphics[width=\linewidth]{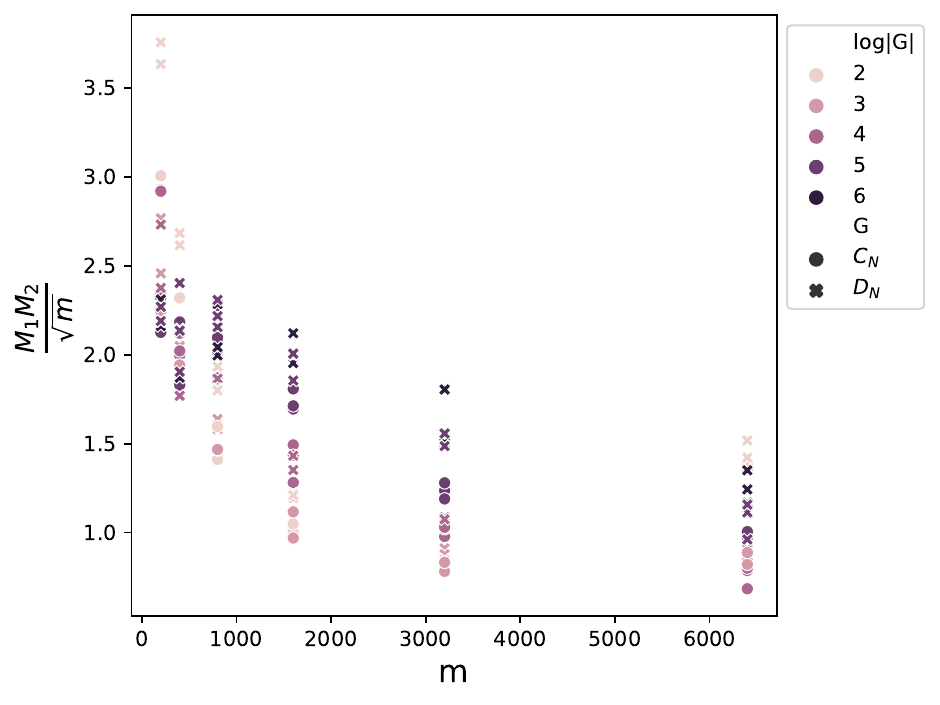}
    \caption{Our bound vs number of samples on MNIST}
    \label{fig:scaling}
\end{subfigure}
\begin{subfigure}[b]{0.45\textwidth}
    \centering
        \includegraphics[width=\linewidth]{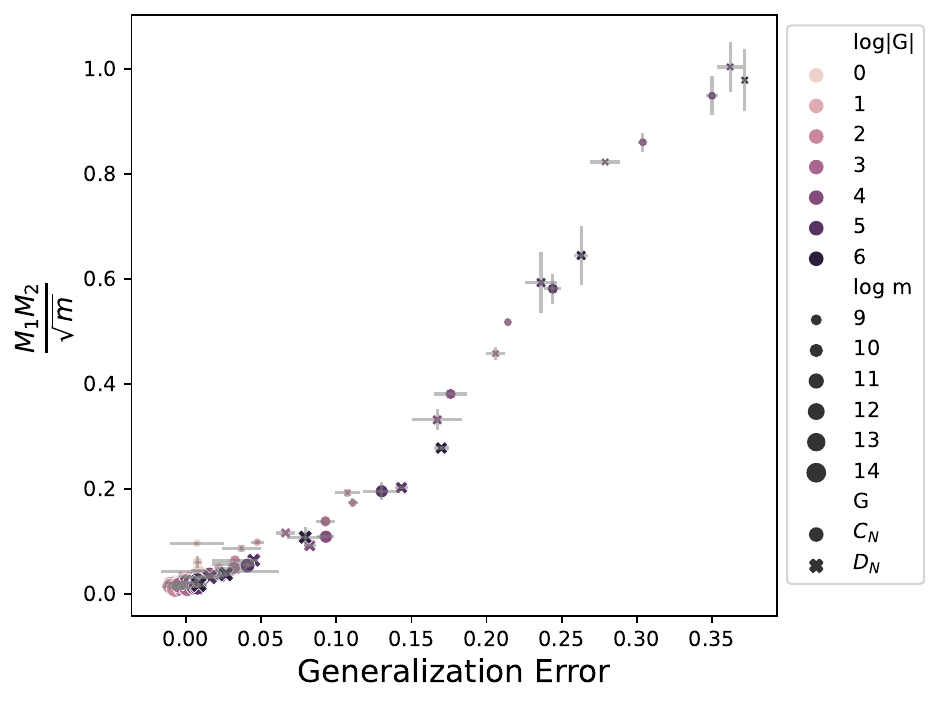}
    \caption{
{
        Generalization error vs our bound on CIFAR10
}
    }
    \label{fig:correlation_cifar}
\end{subfigure}
\begin{subfigure}[b]{0.45\textwidth}
    \centering
        \includegraphics[width=\linewidth]{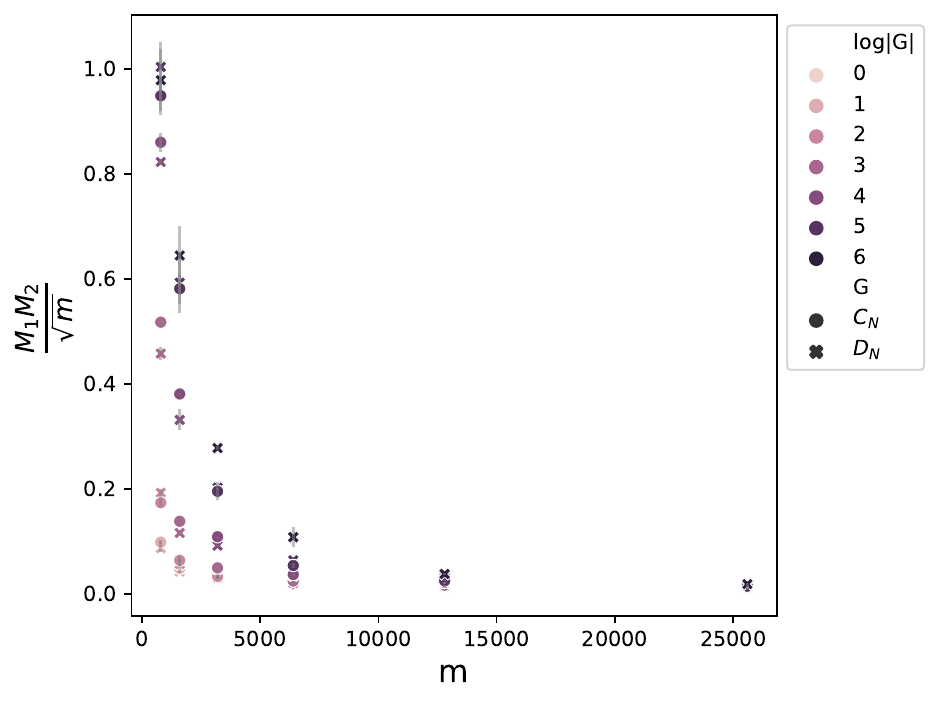}
    \caption{
{
    Our bound vs number of samples on CIFAR10
}
    }
    \label{fig:scaling_cifar}
\end{subfigure}
\caption{{Numerical results for the generalization error on the rotated MNIST and CIFAR10 datasets. 
The plots on the left, (a)-(c), confirm that our theoretical bound captures the effect of different configurations - equivariance groups $G$, training set sizes $m$ and datasets - on the generalization error, i.e. there is a positive correlation between the generalization error and our theoretical bound $\frac{M_1M_2}{\sqrt{m}}$ across all these cases.
On the right, (b)-(d), we verify the bound decreases following a trend similar to $\frac{1}{\sqrt{m}}$ and approaching zero for large training set sizes $m$.}
}
\label{fig:gen_error}
\vspace{-4mm}
\end{figure}

\section{Numerical Results}
\label{sec:experiments_main}
Let us first introduce the two family of groups we consider in our experiments: the \emph{cyclic group} $C_n$ containing $n$ discrete rotations and the \emph{dihedral group} $D_n$ of $n$ discrete rotations and $n$ reflections 
{
(see Def.~\ref{def:cyclic_group} and Def.~\ref{def:dihedral_group} for precise definitions of these groups).
}

In this section, we validate our theoretical results on a variation of the rotated MNIST and 
{
CIFAR10
}
datasets where we consider a simpler binary classification task (determining whether a digit is smaller or larger than $4$ in MNIST and whether an image belongs to the first $5$ classes of CIFAR10 or not).
Rather than working with raw images, we pre-process the dataset by linearly projecting each image via a fixed randomly initialized $G=D_{32}$-steerable convolution layer~\cite{Weiler2019} to a single $6400$-dimensional feature vector\footnote{
{
This is a random linear transformation of the input images into feature vectors: the reader can think of this as analogous to reshaping an image before feeding it into an MLP, but with some care to preserve rotation and reflection equivariance. Note also that, since the output ($6400$) is much larger than the number of input pixels, this transformation is practically invertible and preserves all relevant input information.
}
} - interpreted as a $c_0=100$-channels signal over $G=D_{32}$; see Appendix~\ref{app:experiment_setup} for more details and Fig.~\ref{fig:visualization_projection_data} for a visualization of this linear projection.
When constructing equivariance to a subgroup $G < D_{32}$, this feature vector is interpreted as a $c_0 = 100 \cdot \frac{|D_{32}|}{|G|}$-channels signal over $G$.
We design our equivariant networks as in Eq.~\ref{def:gcnn_1layer} and use $c_1 \cdot |G| \approx 2000$ total intermediate channels for all groups $G$ in the MNIST experiments and $c_1 \cdot |G| \approx 4000$ in the CIFAR10 ones.
The dataset is then augmented with random $D_{32}$ transformations to make it symmetric.
Hence, we expect that $G$-equivariant models with $G$ closer to $D_{32}$ to perform best.

In these experiments, we are interested in how well our theoretical bound correlates with the observed trends.
In particular, we focus on the $\frac{M_1 M_2}{\sqrt{m}}$ term which is the main component in our bounds.
In Fig~\ref{fig:gen_error}, we evaluate the networks on different values of training set size $m$ and on different equivariance groups $G$.
We first observe that the $\frac{M_1 M_2}{\sqrt{m}}$ term correlates well with the empirical generalization error on both the MNIST dataset in Fig.~\ref{fig:correlation} and the CIFAR10 dataset in Fig.~\ref{fig:correlation_cifar}.
In particular, larger groups $G$ achieve both lower generalization error and lower value of our norm bound.
We also note that this term decreases as $1/\sqrt{m}$ in Fig.~\ref{fig:scaling}, which indicates that our bound becomes non-vacuous as $m$ increases, and we do not suffer from deficiencies of other norm based bounds \cite{nagarajan_uniform_2019}. 

{
Overall, \emph{these results suggest that our bound can explain generalization across different choices of equivariance group $G$, as well as different training settings}.
}

{
Next, we investigate how other equivariant design choices affect generalization and verify if our bound can model this effect.
Precisely, when parameterizing the filters over the group $G$ in the frequency (i.e. \emph{irreps}) domain (as in Sec~\ref{sec:general_equivariant_nets}), it is common to only use a smaller subset of all frequencies.
A bandlimited sub-set of frequencies can be interpreted as a form of locality in the frequency domain, in analogy to the local filters in Sec~\ref{sec:generalization_local}.
In Fig.\ref{fig:correlation_frequency}, we experiments on CIFAR10 with different variations of our equivariant architecture obtained by varying the maximum frequency used to parameterize filters.
In these experiments, we keep the training dataset size fixed to $m=3200$ and we only consider the largest groups (since a wider range of frequencies in $[0, \dots, N/2]$ can be chosen).
}

{
In Fig.\ref{fig:correlation_frequency}, we find that \emph{our bound captures the effect of locality in the frequency domain on the generalization}.
}

\begin{center}
\begin{minipage}[t!]{0.35\textwidth}
    \captionof{figure}{
{
    Our bound vs. the generalization error on CIFAR10 (training set size $m=3200$) when varying the maximum frequency used to parameterize the filters, which is the locality effect in the frequency domain. 
    Each dot and its error bars represent the mean and standard deviation over at least $3$ runs with the same configuration. 
    As expected, \emph{architectures leveraging a lower frequency design achieve lower generalization error} and our bound can exactly capture this effect.
    Note that increased frequencies are still beneficial for the final test performance: we study the trade-off between generalization and test performance as a function of the model frequency in Fig.~\ref{fig:pareto}.
}
    }
    \label{fig:correlation_frequency}
\end{minipage}
\begin{minipage}[t!]{0.57\textwidth}
    \includegraphics[width=\linewidth]{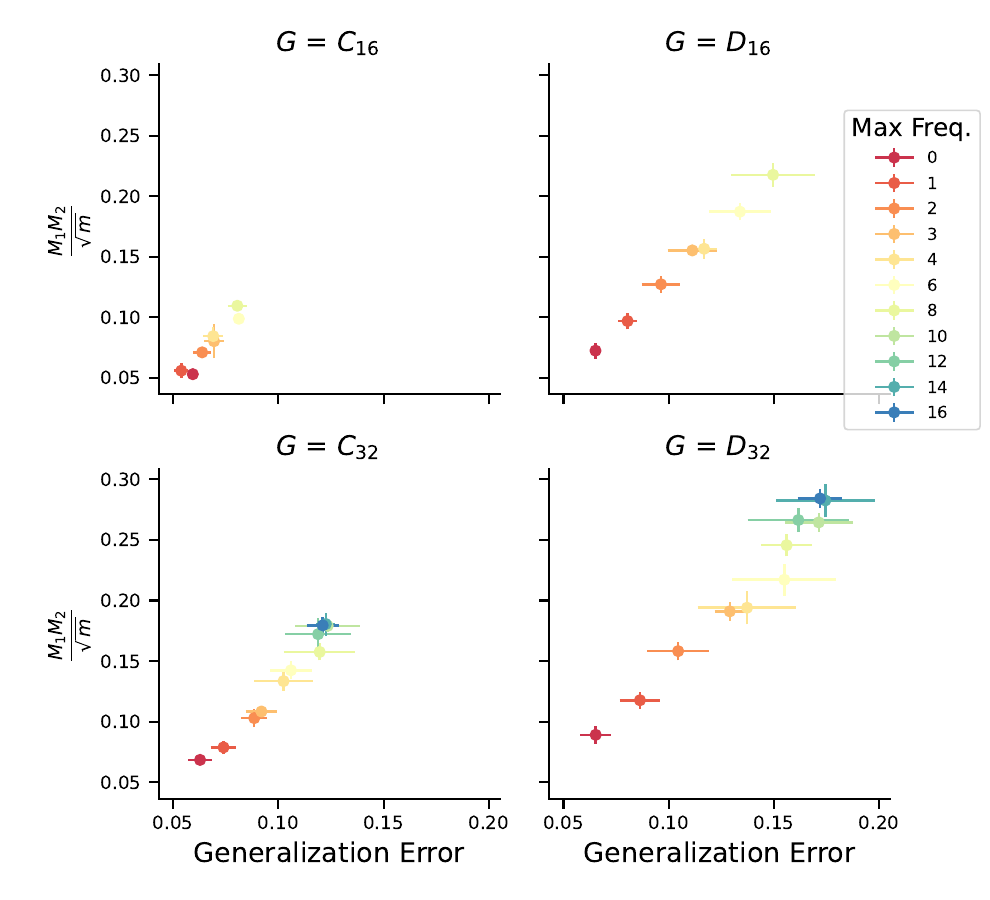}
\end{minipage}
\vspace{-5mm}
\end{center}

{
In Fig.~\ref{fig:correlation_frequency}, we observed that higher frequencies typically lead to worse generalization.
For this reason, we include a final study to explore the effect of locality in the frequency domain - i.e., band-limitation - on the final test performance of the models and the associated trade-offs between generalization and expressivity.
}
{
Fig.~\ref{fig:pareto} compares the generalization error with the final test accuracy of some of our models when varying the maximum frequency of the filters and the training set size $m$. 
This visualization highlights a particular behavior: low-frequency models tend to achieve lower generalization error because of worse data fitting (lower test accuracy).
Conversely, higher frequency models often show higher generalization errors (especially in the low data regime) but achieve much higher test performance.
Moreover, high-frequency models benefit the most from increased dataset size $m$.
These properties lead to the (multiple) V-shaped patterns in Fig.~\ref{fig:pareto}:
on the smaller datasets (light dots in Fig.~\ref{fig:pareto}), while increasing frequencies leads to improved test accuracy, it also increases the generalization error.
On the other hand, in the larger dataset (dark dots in Fig.~\ref{fig:pareto}), higher frequencies directly improve the test performance and even show minor improvements in generalization error.
When varying dataset size, these different behaviors form multiple V-shaped patterns in Fig.~\ref{fig:pareto} (highlighted with the dotted lines).
}

\begin{center}
\begin{minipage}[t!]{0.42\textwidth}
    \captionof{figure}{
{
    Test accuracy vs. Generalization error on CIFAR10 when varying the \emph{maximum frequency} used to parameterize the filters and the training set size $m$.
    Each dot is the average performance over at least $3$ runs with the same configuration.
    In Fig.~\ref{fig:correlation_frequency}, we found that higher frequencies correlated with higher generalization error: here, we study the trade-off between generalization and test performance.
    For each dataset size $m$, increasing the frequency improves the test performance until a certain saturation point; beyond that, increased frequencies mostly lead to increased generalization error.
    We highlight this effect by drawing four dotted lines following the trends for varying frequencies with $G=C_{32}$ on four different dataset sizes $m$: the different slopes of the curves correspond to the different saturation effects.
}
    }
\label{fig:pareto}
\end{minipage}
\begin{minipage}[t!]{0.57\textwidth}
        \includegraphics[width=\linewidth]{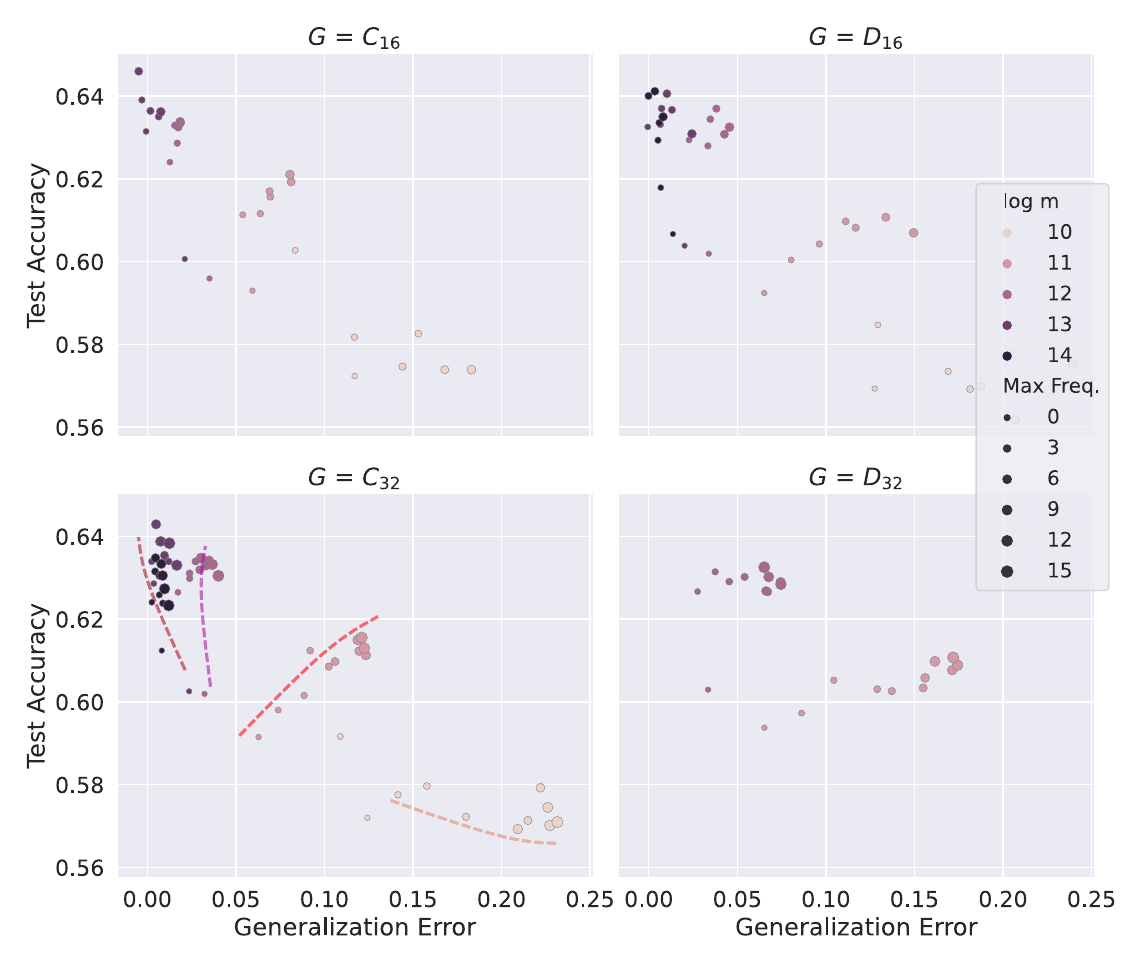}
\end{minipage}
\end{center}

\subsection{Main Insights from Numerical Results}
We would like to summarize some of the main insights from the numerical examples, some of which pose interesting theoretical questions. 

The generalization error seems to be lowest at the medium group sizes as seen in MNIST or CIFAR experiments, although the trend is unclear. There seems to be a group size that leads to the lowest generalization error, and if this conjecture is correct is an open question.

The other observation is about the relation between test accuracy and the generalization error. In general, it can be seen that the highest test accuracies have lower generalization errors in all the plots in Figure \ref{fig:pareto}. It is interesting to directly explore the impact of the group size on the test error, which requires a different machinery.

\section{Conclusion}
\label{sec:conclusion}

In this work, we provided various generalization bounds on equivariant networks with a single hidden layer as well as a bound for multi-layer neural networks. We have used Rademacher complexity analysis to derive these bounds, where the proofs were based on either direct analysis of the Rademacher complexity or covering number arguments. The bounds are mostly dimension-free, as they do not depend on the input and output dimensions. They only depend on the norm of learnable weights. The situation changes for multi-layer scenarios and max-pooling where some dimension dependency appears in the bound. In light of the results on the limitation of uniform complexity bounds for deeper neural networks \cite{nagarajan_uniform_2019,jiang_fantastic_2020}, we believe that the bound on multi-layer scenario would suffer from similar limitations. However, the emergence of some dimensions in the bound for max-pooling seems related to how covering number argument was applied. Whether the dependency can be removed for max-pooling is an interesting research direction.  

We have considered equivariant models in spatial and frequency domains, models with weight sharing, and local filters. The first insight from our analysis is that suitable weight-sharing techniques should be able to provide similar guarantees. 
This is not surprising, as we did not assume any symmetry in the data distribution. Other works in the literature have highlighted the benefit of equivariance if such symmetries exist - see, for example, \cite{sannai_improved_2021,sokolic_generalization_2017}.
Finally, local filters can potentially provide an additional gain, although the story for band-limited filters is more subtle. We also provide a lower bound on Rademacher complexity. We have conducted extensive numerical experiments and investigated the correlation between our generalization error and the true error, as well as the relation between the number of samples, group size, and frequency. 

We have focused on positively homogeneous activation functions. We expect that the result can be extended to general Lipschitz activations using techniques in \cite{vardi_sample_2022}. However, the current proof techniques would not work for norm-based nonlinearities used in equivariant literature \cite{Worrall2017-HNET,3d_steerableCNNs,Weiler2019}. We encourage readers to consult Appendix \ref{app:comparisons} for comparison with other works and future directions.

\printbibliography

\appendix
\section{An Overview of Representation Theory and Equivariant Networks}
\label{app:repr_theory}

We provide a brief overview of some useful concepts from (compact group) Representation Theory in this supplementary section.

\begin{definition}[Group]
    A \textit{group} a set $G$ of elements together with a binary operation $\cdot : G \times G \to G$ satisfying the following three group axioms:
    \begin{itemize}
        \item Associativity: \quad $\forall a, b, c \in G \quad a \cdot (b \cdot c) = (a \cdot b) \cdot c$
        \item Identity: \quad $\exists e \in G: \forall g \in G \quad g \cdot e = e \cdot g = g$
        \item Inverse: \quad $\forall g \in G \ \ \exists g^{-1} \in G: \quad g \cdot g^{-1} = g^{-1} \cdot g = e $
    \end{itemize}
\label{def:group}
\end{definition}
The inverse elements $g^{-1}$ of an element $g$, and the identity element $e$ are \emph{unique}.
Moreover, if the binary operation $\cdot$ is also \textit{commutative}, the group $G$ is called \textit{abelian group}.
To simplify the notation, we commonly write $ab$ instead of $a \cdot b$.

The \textbf{order} of a group $G$ is the \emph{cardinality} of its set and is indicated by $|G|$. 
A group $G$ is \textbf{finite} when $|G| \in \sN$, i.e. when it has a finite number of elements. 
A \textbf{compact} group is a group that is also a compact topological space with continuous group operation.
Every finite group is also compact (with a discrete topology).

We now present two examples of finite groups that we used throughout our experiments, the \emph{cyclic} and the \emph{dihedral groups}.

\begin{definition}[Cyclic Group]
    The \textit{cyclic group} $C_N$ of order $N \in \sN$ is the group of $N$ discrete rotations by angles which are integer multiples of $\frac{2\pi}{N}$, i.e. $\{R_{p\frac{2\pi}{N}} \ |\ p \in [0, 1, \dots, N-1] \}$.

    The binary operation combines two rotations to generate the sum of the two rotations.
    This is represented by integer sum modulo $N$, i.e.
    \begin{equation*}
        R_{p\frac{2\pi}{N}} \cdot R_{q\frac{2\pi}{N}} = R_{(p + q \mod N)\frac{2\pi}{N}}
    \end{equation*}
\label{def:cyclic_group}
\end{definition}

\begin{definition}[Dihedral Group]
    The \textit{dihedral group} $D_N$ of order $2N \in \sN$ is the group of $N$ discrete rotations (by angles which are integer multiples of $\frac{2\pi}{N}$) and $N$ reflections (generated by a reflection along an axis followed by any of the $N$ rotations), i.e. 
    \[
        \{R_{p\frac{2\pi}{N}} \ |\ p \in [0, 1, \dots, N-1] \}
        \cup
        \{R_{p\frac{2\pi}{N}}F \ |\ p \in [0, 1, \dots, N-1] \}
    \]
    where $F$ is a reflection along an axis.
    Note that the group $D_N$ has size $2N$ and contains the group $C_N$ as a subgroup.
\label{def:dihedral_group}
\end{definition}

Another important concept is that of \emph{group action}:

\begin{definition}[Group Action]
    The \textit{action} of a group $G$ on a set $\gX$ is a map $.: G \times \gX \to \gX, \ (g, x) \mapsto g.x$ satisfying the following axioms:
    \begin{itemize}
        \item identity: $\forall x \in \gX \quad e.x = x$
        \item compatibility: $\forall a, b \in G \ \ \forall x \in \gX \quad a.(b.x) = (ab).x$
    \end{itemize}
\label{def:group_action}
\end{definition}

For example, a group can act on functions over the group's elements: given a signal $x : G \to \R$, the action of $g \in G$ on $x$ is defined as $[g.x](h):= x(g^{-1}h)$, i.e., $g$ "translates" the function $x$.

The \textbf{orbit} of $x \in \gX$ through $G$ is the set  $G.x := \{g.x | g \in G\}$. The orbits of the elements in $\gX$ form a \textit{partition} of $\gX$. By considering the equivalence relation $\forall x, y \in \gX \quad x \sim_G y \iff x \in G.y$ (or, equivalently, $y \in G.x$), one can define the \textbf{quotient space} $\gX \rQ G:= \{G.x | x \in \gX\}$, i.e. the set of all different orbits.

\begin{definition}[Linear Representation]
Given a group $G$ and a vector space $V$, a linear representation of $G$ is a homomorphism $\rho: G \to \GL(V)$ associating to each element of $g\in G$ an invertible matrix acting on $V$, such that:
\begin{align*}
    \forall g,h\in G, \quad \rho(gh) = \rho(g) \rho(h).
\end{align*}
i.e., the matrix multiplication $\rho(g)\rho(h)$ needs to be compatible with the group composition $gh$.
\end{definition}

The most simple representation is the \emph{trivial representation} $\psi: G \to \R, g \mapsto 1$, mapping all elements to the multiplicative identity $1 \in V=\R$.
The common $2$-dimensional rotation matrices are an example of representation on $V=\R^2$ of the group $\SO2$:
\[
    \rho(r_\theta) = \begin{bmatrix}
        \cos \theta & -\sin \theta \\ \sin \theta & \cos \theta
    \end{bmatrix}
\]
with $\theta \in [0, 2\pi)$.
In the complex field $\mathbb{C}$, circular harmonics are other representations of the rotation group:
\[
    \psi_k(r_\theta) = e^{-i k \theta} \in V=\mathbb{C}^1
\]
where $k\in \Z$ is the harmonic's frequency.
Similarly, these representations can be constructed also for the finite \emph{cyclic group} $\CN \cong \Z/N\Z$:
\[
    \psi_k(p) = e^{-i k p \frac{2\pi}{N}} \in V=\mathbb{C}^1
\]
with $p \in \{0, 1, \dots, N-1\}$.

\paragraph{Regular Representation}
A particularly important representation is the \textit{regular representation} $\rho_\text{reg}$ of a \emph{finite} group $G$.
This representation acts on the space $V = \R^{|G|}$ of vectors representing functions over the group $G$.
The regular representation $\rho_\text{reg}(g)$ of an element $g \in G$ is a $|G| \times |G|$ \emph{permutation matrix}.
Each vector $\vx \in V=\R^{|G|}$ can be interpreted as a function over the group, $x:G\to \R$, with $x(g_i)$ being $i$-th entries of $\vx$. Then, the matrix-vector multiplication $\rho_\text{reg}(g) \vx$ represents the action of $g$ on the function $x$ which generates the "translated" function $g.x$.
These representations are of high importance because they describe the features of group convolution networks.

\paragraph{Direct Sum} Given two representations $\rho_1: G \to \GL(\R^{n_1})$ and $\rho_2: G \to \GL(\R^{n_2})$, their \textit{direct sum} $\rho_1 \oplus \rho_2: G \to \GL(\R^{n_1 + n_2})$ is a representation obtained by stacking the two representations as follows:
\[
    (\rho_1 \oplus \rho_2)(g) = 
    \begin{bmatrix}
    \rho_1(g) & 0        \\
    0         & \rho_2(g)\\
    \end{bmatrix} \ .
\]
Note that this representation acts on $\R^{n_1 + n_2}$ which contains the concatenation of the vectors in $\R^{n_1}$ and $\R^{n_2}$.
By combining $c$ copies of the \emph{regular representation} via the direct sum, one obtains a representation of $G$ acting on the features of a group convolution network with $c$ channels.


\paragraph{Fourier Transform}
The classical Fourier analysis of periodic discrete functions can be framed as the representation theory of the $\Z/N\Z$ group.
This is summarised by the following result: the regular representation of $G=\Z/N\Z$ is equivalent to the direct sum of all circular harmonics with frequency $k \in \{0, \dots, N-1\}$, i.e. there exists a unitary matrix $\mF \in \Cm^{N \times N}$ such that:
\[
    \rho_\text{reg}(p) = \mF^* \left(\bigoplus_k \psi_k(p) \right) \mF
\]
for $p \in \{0, \dots, N-1\} = \Z/N\Z$ and where $\psi_k$ is the circular harmonic of frequency $k$ as defined earlier in this section.
The unitary matrix $\mF$ is what is typically referred to as the (unitary)\emph{discrete Fourier Transform} operator, while its conjugate transpose $\mF^*$ is the \emph{Inverse Fourier Transform} one.

By indexing the dimensions of the vector space $V \cong \Cm^N$ on which $\rho_\text{reg}$ acts with the elements $p \in \{0, \dots, N-1\}$ of $G=\Z/N\Z$, the Fourier transform matrix can be explicitly constructed as 
\begin{equation}
    F_{k,p} = \frac{1}{\sqrt{N}} \psi_k(p) = \frac{1}{\sqrt{N}} e^{-i k p \frac{2\pi}{N}}
\end{equation}
One can verify that the matrix is indeed unitary.

In representation theoretic terms, the circular harmonics $\{\psi_k\}_k$ are referred to as the \textbf{irreducible representations} (or \emph{irreps}) of the group $G=\Z/N\Z$.
While we are mostly interested in commutative (abelian) finite groups in this work, this construction can be easily extended to square-integrable signals over non-abelian compact groups by using the more general concept of irreducible representations \cite{behboodi_pac-bayesian_2022,cesa2022a}.
See also \cite{serre1977linear} for rigorous details about representation theory.

\paragraph{Notation} Given a signal $x: G \to \Cm$, we typically write $\vx \in V=\Cm^{|G|}$ to refer to the vector containing the values of $x$ at each group element.
Note that this is the vector space on which $\rho_\text{reg}$ acts.
We also use $\hat{\vx} = \mF \vx$ to indicate the vector of Fourier coefficients and $\hat{x}$ to indicate the function associating to each frequency $k$ (or, equivalently, irrep $\psi_k$) the corresponding Fourier coefficient $\hat{x}(\psi_k) \in \Cm$.

\paragraph{Group Convolution}
Given a vector space $V$ associated with a representation $\rho: G \to \GL(V)$ of a group $G$, the \textit{group convolution} $\vw \gconv \vx \in \Cm^{|G|}$ of two elements $\vx, \vw \in V$ is defined as
\begin{align}
    \forall g \in G \quad  (\vw \gconv \vx)(g) :=  \vw^T\rho(g)^T \vx \ .
\label{def:group_conv}
\end{align}
What we defined is technically a \textbf{group cross-correlation} and so it differs from the usual definitions of convolution over groups.
We still refer to it as group convolution to follow the common terminology in the Deep Learning literature.
One can prove that group convolution is \textbf{equivariant} to $G$, i.e.:
\[
    \vw \gconv \rho(g)\vx = \rho_\text{reg}(g) \left(\vw \gconv \vx \right)
\]

If $V = \Cm^{|G|}$, then $x, w: G \to \Cm$ can be interpreted as signals over the group, and group convolution take the more familiar form
\begin{align}
    (w \gconv x)(g) = \sum_{h \in G} w(g^{-1} h) x(h)
\end{align}

Finally, we recall two important properties of the Fourier transform: given two signals $w, x : G \to \sC$, the following properties hold
\begin{align}
    \label{eq:fourier_inv_subspace}
    \widehat{g.x}(\psi) &= \psi(g) \hat{x}(\psi) \\
    \label{eq:fourier_conv_theorem}
    \widehat{w \gconv x}(\psi) &= \hat{w}(\psi) \hat{x}(\psi)
\end{align}
Eq.~\ref{eq:fourier_inv_subspace} guarantees that a transformation by $g$ of $x$ does not mix the coefficients associated to different irreps/frequencies.
Eq.~\ref{eq:fourier_conv_theorem} is the typical \emph{convolution theorem}.

Finally, some of these results related to the Fourier transform are generalized by the following theorems.
This last property is related to the more general result expressed by \emph{Schur's Lemma}:
\begin{theorem}[Schur's Lemma]
    \label{thm:schurslemma}
Let $G$ be a compact group (not necessarily abelian) and $\psi_1$ and $\psi_2$ two irreps of $G$.
Then, there exists a non-zero linear map $\mW$ such that
\[ \psi_1(g) \mW = \mW \psi_2(g) \]
for any $g\in G$ (i.e. an \emph{equivariant} linear map) if and only if $\psi_1$ and $\psi_2$ are equivalent representations, i.e. they differ at most by a change of basis $Q$: $\psi_1(g) = Q \psi_2(g) Q^{-1}$ for all $g \in G$.
\end{theorem}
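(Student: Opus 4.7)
The plan is to treat the two directions of the biconditional separately. The ``if'' direction is immediate: assuming $\psi_1(g) = Q \psi_2(g) Q^{-1}$ for all $g \in G$ with $Q$ a change-of-basis matrix, the choice $\mW = Q$ satisfies $\psi_1(g) \mW = Q \psi_2(g) Q^{-1} Q = \mW \psi_2(g)$, and $Q$ is non-zero because it is invertible. So the entire content of the theorem lies in the ``only if'' direction, which I would establish by the standard kernel/image invariant-subspace argument.

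For the ``only if'' direction, assume $\mW \neq 0$ satisfies the intertwining identity $\psi_1(g) \mW = \mW \psi_2(g)$ for every $g \in G$. First I would show that $\ker \mW$ is a $\psi_2$-invariant subspace of the representation space of $\psi_2$: for any $u$ with $\mW u = 0$, the identity gives $\mW \psi_2(g) u = \psi_1(g) \mW u = 0$, so $\psi_2(g) u \in \ker \mW$. Symmetrically, $\mathrm{image}(\mW)$ is $\psi_1$-invariant, since any $v = \mW u$ obeys $\psi_1(g) v = \mW \psi_2(g) u \in \mathrm{image}(\mW)$. Now irreducibility of $\psi_2$ forces $\ker \mW \in \{ \{0\}, \text{whole space} \}$, and since $\mW \neq 0$ its kernel is proper, hence $\ker \mW = \{0\}$ and $\mW$ is injective. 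Likewise, irreducibility of $\psi_1$ combined with $\mathrm{image}(\mW) \neq \{0\}$ forces $\mathrm{image}(\mW)$ to be all of the representation space of $\psi_1$, so $\mW$ is surjective. Therefore $\mW$ is a bijection between the two representation spaces; in particular they have the same dimension, and setting $Q \defeq \mW$ rewrites the intertwining identity as $\psi_1(g) = Q \psi_2(g) Q^{-1}$, which is precisely the equivalence of representations.

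The main technical point I would flag is the invocation of the irreducibility characterization: the argument uses that an irreducible representation admits no non-trivial invariant subspace. For finite-dimensional representations of compact groups — the setting implicit in the matrix notation $\mW$ used throughout the paper — this is the standard definition, so no additional regularity (complete reducibility, closed subspaces, etc.) enters the proof. No use of Haar measure, Peter--Weyl, or any compactness-specific machinery is needed; once irreducibility is expressed in terms of invariant subspaces, the argument is purely linear-algebraic and works verbatim over either $\R$ or $\Cm$.
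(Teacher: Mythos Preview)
Your proof is correct and is exactly the classical kernel/image invariant-subspace argument for Schur's Lemma. Note, however, that the paper does not actually supply its own proof of this statement: Theorem~\ref{thm:schurslemma} is stated in the appendix as a standard background result from representation theory (with a pointer to \cite{serre1977linear}) and is used without proof. So there is nothing to compare against beyond observing that your argument is the textbook one.
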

A second result generalizes the notion of Fourier transform to representations beyond the regular one:
\begin{theorem}[Peter-Weyl Theorem]
    \label{thm:peterweyltheorem}
Let $G$ be a compact group and $\rho$ a unitary representation of $G$.
Then, $\rho$ decomposes into a direct sum of irreducible representations of $G$, up to a change of basis, i.e. 
\[
    \rho(g) = U \left( \bigoplus_\psi \bigoplus_i^{m_\psi} \psi \right) U^T
\]
Each irrep $\psi$ can appear in the decomposition with a multiplicity $m_\psi \geq 0$.
\end{theorem}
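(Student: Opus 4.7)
The plan is to establish complete reducibility by combining the unitarity of $\rho$ with a compactness-based existence argument for irreducible invariant subspaces. The overall strategy has three parts: (i) a splitting lemma saying that invariant subspaces come with invariant orthogonal complements; (ii) existence of a nonzero finite-dimensional irreducible $G$-invariant subspace inside any nonzero $V$; (iii) a Zorn/induction step assembling these pieces and then grouping by isomorphism type to produce $U$.

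First I would prove the splitting lemma: if $W \subset V$ is a closed $G$-invariant subspace of a unitary representation, then so is $W^\perp$. This is immediate from
\[
\langle \rho(g) v, w \rangle = \langle v, \rho(g)^* w \rangle = \langle v, \rho(g^{-1}) w \rangle = 0
\]
for $v \in W^\perp$, $w \in W$, $g \in G$, since $\rho(g^{-1}) w \in W$. In the finite-dimensional case this already suffices: induct on $\dim V$, with the base case being an irreducible $V$, and in the inductive step split $V = W \oplus W^\perp$ for any nontrivial invariant $W$ and apply the inductive hypothesis to each summand.

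For the infinite-dimensional case, I would carry out the classical Peter-Weyl existence argument. Pick a nonzero $v \in V$, let $A$ be the rank-one self-adjoint projection onto $\mathrm{span}(v)$, and form the averaged operator
\[
T = \int_G \rho(g) A \rho(g)^{-1} \, d\mu(g),
\]
where $\mu$ is the normalized Haar measure, which exists because $G$ is compact. One verifies that $T$ is nonzero, self-adjoint, compact, and commutes with every $\rho(g)$. The spectral theorem for compact self-adjoint operators then produces finite-dimensional nonzero eigenspaces of $T$, each of which is $G$-invariant by the commutation, and any minimal such invariant subspace is irreducible. With an irreducible invariant subspace always available, I would apply Zorn's lemma to the poset of families of mutually orthogonal irreducible $G$-invariant subspaces of $V$: a maximal such family must span $V$, because otherwise its orthogonal complement is $G$-invariant by the splitting lemma and would itself contain an irreducible invariant subspace, contradicting maximality.

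The final assembly step groups the resulting orthogonal irreducible summands by isomorphism class. For each irrep $\psi$, collect all summands isomorphic to the model carrier of $\psi$, call the (possibly infinite) cardinality $m_\psi$, and fix a unitary intertwiner from each summand to the model. Concatenating these intertwiners along the orthogonal direct sum decomposition yields the unitary change of basis $U$ realizing
\[
\rho(g) = U \left( \bigoplus_\psi \bigoplus_{i=1}^{m_\psi} \psi(g) \right) U^T.
\]
The main obstacle is step (ii) in infinite dimensions, where both the existence of Haar measure and the compactness of the averaged operator $T$ rely crucially on the compactness of $G$; everything else is linear algebra or a standard Zorn argument, and for the finite-dimensional representations used throughout the rest of the paper the entire proof collapses to a one-line induction on $\dim V$ via the splitting lemma.
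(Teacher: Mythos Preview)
The paper does not prove the Peter--Weyl Theorem; it is stated as a background result in the appendix on representation theory and the reader is referred to Serre's textbook for details. So there is no ``paper's own proof'' to compare against.

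Your sketch is the standard argument and is essentially correct. A couple of small remarks. First, the compactness of the averaged operator $T$ deserves one more sentence: one typically argues that $A$ is Hilbert--Schmidt (being rank one), that the Hilbert--Schmidt operators form a Hilbert space on which $g \mapsto \rho(g) A \rho(g)^{-1}$ is continuous, and hence the Bochner integral $T$ is again Hilbert--Schmidt and therefore compact. Second, to show $T \neq 0$ it suffices to compute $\langle T v, v \rangle = \int_G |\langle \rho(g)^{-1} v, v \rangle|^2 \, d\mu(g) > 0$ since the integrand is continuous and positive at $g = e$. Finally, note that the paper uses $U^T$ rather than $U^*$; this is consistent with the paper's convention of working with real unitary (i.e.\ orthogonal) representations, so your change of basis $U$ should be taken orthogonal in that setting.
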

In particular, in the case of a regular representation, the change of basis is given precisely by the Fourier transform matrix $\mF$.

The combination of these two results is typically used in steerable CNNs and other equivariant designs to characterize arbitrary equivariant networks by reducing the study to individual irreducible components.
See also the next section.


\subsection{Equivariant Networks}
\label{app:def_equivariant_networks}

We now discuss some popular equivariant neural network designs.

In this work, we consider \emph{real} valued networks with $2$ layers, i.e., we limit our discussion to a hypothesis class of the form
\[
\gH \defeq  \left\{\vu^T\sigma(\mW\vx)\right\}.
\]
In a $G$-equivariant network, the group $G$ carries an \emph{action} on the intermediate features of the model; these actions are specified by \emph{group representations}, which we introduced in the previous section.
We assume the action of an element $g \in G$ on the input and the output of the first layer is given respectively by the matrices $\rho_0(g)$ and $\rho_1(g)$.
The first layer, including the activation function, is \textbf{equivariant} with respect to $G$ if:
\[
\forall g\in G,\quad 
\sigma(\mW\rho_0(g)\vx)
=\rho_1(g)\sigma(\mW\vx)
\]
Instead, the last layer of the network is invariant with respect to the action of the group $G$ on the input data via $\rho_0$, which means that:
\[ 
\forall g\in G, \quad 
\vu^T\sigma(\mW\vx)
= \vu^T\rho_1(g)\sigma(\mW\vx)
= \vu^T\sigma(\mW \rho_0(g)\vx) 
\]

\paragraph{GCNNs}
A typical way to construct networks equivariant to a finite group $G$ is via \textbf{group convolution}, which we introduced in the previous section; this is the \emph{Group Convolution neural network} (GCNN) design \cite{cohen_group_2016}.
Indeed, if the representation $\rho_1$ is chosen to be the \emph{direct sum} of $c_1$ copies of the \emph{regular representation} $\rho_\text{reg}$ of $G$, i.e. $\rho_1 = \bigoplus^{c_1} \rho_\text{reg}$
and the input  $\rho_0 = \bigoplus^{c_0} \rho_\text{reg}$ also consists of $c_0$ copies of the regular representation,
then the linear layer $\mW$ can always be expressed as $c_1 \times c_0$ group-convolution linear operators as in \eqref{def:group_conv}:
\[
        \mW = \left[
        \begin{array}{c|c|c|c}
            \mW_{(1,1)} & \mW_{(2,1)} & \dots & \mW_{(c_{0}, 1)} \\ \hline
            \mW_{(1,2)} & \mW_{(2,2)} & \dots & \mW_{(c_{0}, 2)} \\ \hline
            \vdots         & \vdots         & \ddots      &         \vdots          \\ \hline
            \mW_{(1,c_1)} & \mW_{(2,c_1)} & \dots & \mW_{(c_{0}, c_1)} \\ 
        \end{array}
        \right]
\]
where each $\mW_{(i,j)} \in \R^{|G| \times |G|}$ is a $G$-circular matrix (i.e. rows are "rotations" of each others via $\rho_\text{reg}$) of the form:
\[
\mW_{(i,j)} = \begin{bmatrix}
    & \vw_{(i,j)}^T\rho_\text{reg}(g_1) & \\
    & \vdots & \\
    & \vw_{(i,j)}^T\rho_\text{reg}(g_{|G|}) & \\
\end{bmatrix}
\]
In this case, note that the representation $\rho_1$ acts via permutation and, therefore, any activation function $\sigma$ which is applied to each entry of $\mW \vx$ entry-wise is equivariant: this includes the typical activations used in deep learning (e.g. ReLU).

\paragraph{General linear layer}
The representation theoretic tools we introduced earlier allow for a quite general framework to describe a wide family of equivariant networks beyond group-convolution by considering the generalization of the Fourier transform we discussed in the previous section.
By using Theorem~\ref{thm:peterweyltheorem}, a representation $\rho_{l}$ (for $l=0,1$) decomposes into direct sum of irreps as follows:
\[
\rho_{l} = U_{l} \paran{ \bigoplus_\psi \bigoplus_{i=1}^{m_{l, \psi}} \psi } U_{l}^{\top},
\]
where $ U_{l} $ is an orthogonal matrix, and $m_{l, \psi}$ is the multiplicity of the irrep $\psi$ in  the representation $\rho_{l}$. 
If $\dim_\psi$ is the dimensionality of the irrep $\psi$, the width of the network $n$ is equal to  $\sum m_{1, \psi}\dim_\psi$, and the input dimension $d$ is given by $\sum m_{0, \psi}\dim_\psi$.

Next, by combining this result with Theorem~\ref{thm:schurslemma}, an equivariant network can be parameterized entirely in terms of irreps, which is analogous to a Fourier space parameterization of a convolutional network. 
Defining $\widehat{\mW}_l = U_{l}^{-1}\mW_l U_{l-1}$, the equivariance condition writes as\[
\forall g \in G, \quad \widehat{\mW}_l \paran{ \bigoplus_\psi \bigoplus_{i=1}^{m_{{l-1}, \psi}} \psi(g) } = 
   \paran{ \bigoplus_\psi \bigoplus_{i=1}^{m_{l, \psi}} \psi(g) }\widehat{\mW}_l.
\]
This induces a block structure in $\hat{\mW}_l$, where the $(\psi, i; \psi', j)$-th block block maps the $i$-th input block transforming according to $\psi$ to the $j$-th output block transforming under $\psi'$,  with $i\in[m_{l-1,\psi}]$ and $j\in[m_{l,\psi'}]$.
By Theorem~\ref{thm:schurslemma}, there are no linear maps equivariant to $\psi$ and $\psi'$ whenever these representations are in-equivalent; this implies the matrix $\hat{\mW}_l$ is sparse since its $(\psi, i; \psi', j)$-th block is non-zero only when $\psi \cong \psi'$.
Then, the \emph{non-zero} $(\psi, i, j)$-th block (here, we drop the redundant $\psi'$ index) is denoted by $\hat{w}_{i, j}(\psi)$ and should commute with $\psi(g)$ for any $g\in G$.
This is achieved by expressing this matrix as a linear combination of a few fixed basis matrices spanning the space; see \cite{behboodi_pac-bayesian_2022,cesa2022a} for more details.

In the special case of GCNN with abelian group $G$, each irrep in the intermediate features occurs exactly $c_1$ times.
Similarly, if the input representation $\rho_0$ consists of $c_0$ copies of the regular representation, each irrep appears exactly $c_0$ times in $\rho_0$.
Then, $\hat{w}_{i, j}(\psi)$ is the Fourier transform of the filter $\vw_{(i, j)} \in \R^{|G|}$ at the frequency/irrep $\psi$.

\paragraph{Activation function $\sigma$} 
Each module in an equivariant network should commute with the action of the group on its own input in order to guarantee the overall equivariance of the model.
Hence, the activation function $\sigma$ used in the intermediate layer should be equivariant with respect to $\rho_1(g)$ as well, i.e., $\sigma(\rho_1(g)\vx)=\rho_2(g)\sigma(\vx), \forall g\in G$. 
Different activation functions are used in the literature.
As we use unitary representations, one can use norm nonlinearities, $\sigma(\vx)=\eta(\norm{\vx})\frac{\vx}{\norm{\vx}}$ for a suitable function $\eta: \R^+_0 \to \R^+_0$.
It can be seen that $\sigma(\rho_1(g)\vx)=\rho_1(g)\sigma(\vx)$. Some examples are norm ReLU \cite{Worrall2017-HNET}, squashing \cite{sabour_dynamic_2017} and gated nonlinearities \cite{Weiler2018-STEERABLE}.
\begin{table}[t]
    \centering
    \begin{tabular}{|c|c|}
       \hline
       \hline
       Activation function & Definition \\
       \hline
       \hline
       Norm ReLU \cite{Weiler2018-STEERABLE, Worrall2017-HNET}& $\eta(\norm{\vx})=\mathrm{ReLU}(\norm{\vx}-b) \quad (b <= 0)$ \\       
       \hline
       Squashing \cite{sabour_dynamic_2017}  & $\eta(\norm{\vx})=\frac{\norm{\vx}^2}{\norm{\vx}^2+1}$ \\
       \hline
       Gated  \cite{Weiler2018-STEERABLE} & $\eta(\norm{\vx})=\frac{1}{1+e^{-s(\vx)}}\norm{\vx}$\\
       \hline
    \end{tabular}
    \caption{Equivariant activation functions}
    \label{tab:activation}
\end{table}

As introduced in the GCNN paragraph, when the intermediate representation is built from the regular representation, any pointwise activation is admissible: since the group's action permutes the features' entries in the spatial domain, it intertwines with the non-linearity applied entry-wise.
There exist also other type of activations, such as tensor-product non-linearities 
(based on the Clebsh-Gordan transform%
) but we do not consider them here.

\paragraph{Last layer $\vu$}
The last layer maps the features after the activation to a single scalar and is assumed to be invariant. 
This linear layer can be thought of as a special case of the general framework above, with the choice of output representation $\rho_2$ being a collection of \emph{trivial representations}, i.e., $\rho_2(g) = I$ is the identity.
Since the trivial representation is an irrep, by Theorem~\ref{thm:schurslemma}, this linear map only captures the invariant information in the output of the activation function.
In the case of a GCNN with pointwise activation $\sigma$, this layer acts as an averaging pooling operator over the $|G|$ entries of each channel and learns the weights to linearly combine the $c_1$ independent channels.
Note that one can also include a custom \textbf{pooling operator} $P(\cdot)$ (e.g., max-pooling) between the activation layer $\sigma$ and the final layer $\vu$ like in \eqref{def:gcnn_1layer}: this gives additional freedom to construct the final invariant features beyond just average pooling.

\section{Mathematical Preliminaries}
This section gathers the main tools we will use throughout the proofs. 

First, we introduce the decoupling technique \cite{foucart_mathematical_2013}, which will be used in the proof of the next lemma.
\begin{theorem}
    Let $\vepsilon = (\epsilon_1,\dots,\epsilon_m)$ be a sequence of independent random variables with $\E\epsilon_i=0$ for all $i\in[m]$. Let $\vx_{j,k}$, $j,k\in[m]$ be a double sequence of elements in a finite-dimensional vector space $V$. If $F:V\to\R$ is a convex function, then:
    \begin{equation}
        \E F\paran{
        \sum_{
        \stackrel{j,k=1}{j\neq k}
        }^m \epsilon_j\epsilon_k\vx_{j,k}
        }\leq 
        \E F\paran{
        4\sum_{
        {j,k=1}
        }^m \epsilon_j\epsilon'_k\vx_{j,k}
        },
    \end{equation}
where $\vepsilon'$ is an independent copy of $\vepsilon$.
\label{thm:decoupling_thm}
\end{theorem}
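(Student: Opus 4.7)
The plan is to prove this via a randomized selection argument combined with two applications of Jensen's inequality. The core idea is to introduce an auxiliary random subset of $[m]$ that lets us convert the off-diagonal sum on the left into an integral of a restricted bilinear form, and then use the symmetry and independence of the Rademacher-like variables to swap in a fresh independent copy.

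First I would introduce a random set $S \subseteq [m]$ constructed by including each index independently with probability $1/2$, independent of $\vepsilon$ and $\vepsilon'$. The key combinatorial identity is $\mathbf{1}_{j\neq k} = 4\,\E_S[\mathbf{1}_{j\in S}\mathbf{1}_{k\notin S}]$: for $j \neq k$ the event $\{j\in S, k\notin S\}$ has probability $1/4$, and for $j=k$ it is impossible. Plugging this in and swapping the sum and the expectation gives
\[
\sum_{j\neq k} \epsilon_j\epsilon_k \vx_{j,k} \;=\; 4\,\E_S\!\sum_{\substack{j\in S \\ k\notin S}}\epsilon_j\epsilon_k \vx_{j,k}.
\]
Applying $F$ and using Jensen's inequality (convexity of $F$) together with Fubini yields
\[
\E_\vepsilon F\!\paran{\sum_{j\neq k}\epsilon_j\epsilon_k\vx_{j,k}} \;\leq\; \E_S \E_\vepsilon F\!\paran{4\sum_{j\in S, k\notin S}\epsilon_j\epsilon_k\vx_{j,k}},
\]
so there must exist a \emph{deterministic} $S^*$ achieving at least the average on the right.

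Next I would exploit independence to inject a fresh copy. Once $S^*$ is fixed, the families $(\epsilon_j)_{j\in S^*}$ and $(\epsilon_k)_{k\notin S^*}$ are independent, so the joint distribution of $(\epsilon_j, \epsilon_k)_{j\in S^*, k\notin S^*}$ coincides with $(\epsilon_j, \epsilon'_k)_{j\in S^*, k\notin S^*}$ where $\vepsilon'$ is an independent copy of $\vepsilon$. This gives
\[
\E_\vepsilon F\!\paran{4\sum_{j\in S^*, k\notin S^*}\epsilon_j\epsilon_k\vx_{j,k}} \;=\; \E_{\vepsilon,\vepsilon'} F\!\paran{4\sum_{j\in S^*, k\notin S^*}\epsilon_j\epsilon'_k\vx_{j,k}}.
\]

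Finally, to pass from the restricted index set $S^*\times (S^*)^c$ to the full grid $[m]\times[m]$, I would add in the missing terms $Z := 4\sum_{(j,k)\notin S^*\times(S^*)^c}\epsilon_j\epsilon'_k\vx_{j,k}$ and show that conditioned on $(\epsilon_j)_{j\in S^*}$ and $(\epsilon'_k)_{k\notin S^*}$, the random variable $Z$ has mean zero. Indeed, every term in $Z$ involves at least one of the yet-unfixed variables $\epsilon_j$ ($j\notin S^*$) or $\epsilon'_k$ ($k\in S^*$), and these are symmetric, mean-zero, and mutually independent of all fixed quantities. A second application of Jensen, this time as $F(Y)=F(\E[Y+Z\mid\cdot]) \leq \E[F(Y+Z)\mid\cdot]$, combined with tower-law, yields $\E F(Y) \leq \E F(Y+Z) = \E F\!\paran{4\sum_{j,k}\epsilon_j\epsilon'_k\vx_{j,k}}$, which is the desired bound.

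The step I expect to be slightly delicate is the last one: one has to make sure the conditioning is set up so that all the added cross-terms truly average to zero (not just the purely linear ones but also the $\epsilon_j\epsilon'_k$ products with both indices in the unfixed block), and that $F$ being convex is enough without any integrability caveats. Everything else is standard Fubini and symmetrization bookkeeping. The constant $4$ comes out cleanly from the factor in the combinatorial identity at the first step and propagates through unchanged.
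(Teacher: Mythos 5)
Your proof is correct: the random-subset identity $\mathbf{1}_{j\neq k}=4\,\E_S[\mathbf{1}_{j\in S}\mathbf{1}_{k\notin S}]$, Jensen plus the existence of a good deterministic $S^*$, the swap to an independent copy on the disjoint index blocks, and the conditional mean-zero fill-in of the remaining terms all go through exactly as you describe (the cross terms with $j\notin S^*$, $k\in S^*$ vanish in conditional expectation because $\epsilon_j$ and $\epsilon'_k$ are independent and mean zero). The paper itself states this theorem without proof, citing \cite{foucart_mathematical_2013}, and your argument is essentially the standard proof given there, so there is nothing to reconcile.
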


The following lemma can be found in \cite{foucart_mathematical_2013}, namely inequality (8.22), within the proof of Proposition 8.13. We re-state the lemma and the proof here, as it is of independent interest.
\begin{lemma}
   For any positive semidefinite matrix $\mB$, for $8\beta\norm{\mB}_{2\to 2}<1$,  and Rademacher vector $\vepsilon$, we have 
   \begin{equation*}
\E_\vepsilon 
\paran{
\exp\paran{
\beta \vepsilon^\top \mB \vepsilon
}
}\leq
\exp\paran{
\frac{
\beta \Tr(\mB)
}
{
1-8\beta\norm{\mB}_{2\to 2}
}
}.
\end{equation*}
\label{lem:log_partition_function}
\end{lemma}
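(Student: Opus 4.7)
The plan is to follow the classical Hanson--Wright route: separate the diagonal contribution from the quadratic form, apply the decoupling theorem to the off-diagonal part, estimate the resulting decoupled MGF with the sub-Gaussian bound on Rademacher variables, and then close the argument by iterating on $\beta$.

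First, since $\epsilon_i^2 = 1$, I would write
\[
\vepsilon^\top \mB \vepsilon \;=\; \Tr(\mB) \;+\; \sum_{j\neq k} B_{jk}\,\epsilon_j\epsilon_k,
\]
so that it suffices to bound the MGF of the off-diagonal part. Applying Theorem~\ref{thm:decoupling_thm} with the convex function $x\mapsto \exp(\beta x)$ and $\vx_{j,k} = B_{jk}$ gives
\[
\E_\vepsilon \exp\!\Bigl(\beta \sum_{j\neq k} B_{jk}\,\epsilon_j\epsilon_k\Bigr)
\;\leq\; \E_{\vepsilon,\vepsilon'} \exp\!\bigl(4\beta\,\vepsilon^\top \mB \vepsilon'\bigr).
\]
Conditioning on $\vepsilon'$, the inner expectation factorizes over the independent $\epsilon_j$, and the standard Rademacher sub-Gaussian bound $\E e^{\lambda \epsilon} \leq e^{\lambda^2/2}$ yields
\[
\E_\vepsilon \exp\!\bigl(4\beta\,\vepsilon^\top \mB \vepsilon'\bigr)
\;\leq\; \exp\!\bigl(8\beta^2 \|\mB\vepsilon'\|^2\bigr).
\]

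Next, because $\mB \succeq 0$, one has the operator inequality $\mB^2 \preceq \|\mB\|_{2\to 2}\,\mB$, so $\|\mB\vepsilon'\|^2 = (\vepsilon')^\top \mB^2 \vepsilon' \leq \|\mB\|_{2\to 2}\,(\vepsilon')^\top \mB \vepsilon'$. Setting $M \defeq \|\mB\|_{2\to 2}$ and $f(\beta) \defeq \E \exp(\beta\vepsilon^\top \mB \vepsilon)$, combining the above gives the self-referential inequality
\[
f(\beta) \;\leq\; e^{\beta \Tr(\mB)}\, f\!\bigl(T(\beta)\bigr), \qquad T(\beta) \defeq 8M\beta^2.
\]

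Finally, I would iterate this bound. Defining $\beta_0 = \beta$ and $\beta_{n+1} = T(\beta_n)$, and letting $r_n \defeq 8M\beta_n$, the recursion becomes $r_{n+1} = r_n^2$, so $r_n = r_0^{2^n}$ with $r_0 = 8M\beta < 1$ by hypothesis. Hence $\beta_n \to 0$ and $f(\beta_n) \to 1$, giving after $N$ iterations and letting $N\to\infty$
\[
f(\beta) \;\leq\; \exp\!\Bigl(\Tr(\mB)\sum_{n\geq 0} \beta_n\Bigr).
\]
The last step is to compare the sparse series $\sum_{n\geq 0} r_0^{2^n}$ with the full geometric series $\sum_{k\geq 1} r_0^{k}$: since $\{2^n\}_{n\geq 0}\subseteq \{k\geq 1\}$ and $r_0\in(0,1)$, one has
\[
\sum_{n\geq 0} \beta_n \;=\; \frac{1}{8M}\sum_{n\geq 0} r_0^{2^n} \;\leq\; \frac{1}{8M}\cdot\frac{r_0}{1-r_0} \;=\; \frac{\beta}{1-8\beta M},
\]
which yields the claimed bound. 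The main subtlety is the last step: one must verify that the doubly exponential sequence $\beta_n$ both converges to zero (giving a legitimate termination of the recursion) and sums to at most $\beta/(1-8\beta M)$, which is precisely what the bound $8\beta M<1$ buys us.
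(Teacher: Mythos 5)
Your argument is correct, and its first three steps (splitting off the diagonal via $\epsilon_i^2=1$, decoupling with $F(x)=\exp(\beta x)$, the Rademacher sub-Gaussian bound conditional on $\vepsilon'$, and the operator inequality $\mB^2 \preceq \norm{\mB}_{2\to 2}\mB$) are exactly those in the paper's proof of Lemma~\ref{lem:log_partition_function}. Where you diverge is in how you close the resulting estimate $f(\beta)\leq e^{\beta\Tr(\mB)}f(8M\beta^2)$: the paper does not iterate, but instead writes $\E\exp\paran{8\beta^2 M\,\vepsilon^\top\mB\vepsilon}=\E\bracket{\paran{\exp(\beta\vepsilon^\top\mB\vepsilon)}^{8\beta M}}\leq f(\beta)^{8\beta M}$ (Jensen/Lyapunov with exponent $8\beta M<1$), obtaining the self-bounding inequality $f(\beta)\leq e^{\beta\Tr(\mB)}f(\beta)^{8\beta M}$, which is solved algebraically in one line since $f(\beta)<\infty$. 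Your iteration $\beta_{n+1}=8M\beta_n^2$, with $r_n=(8M\beta)^{2^n}$ and the lacunary-series comparison $\sum_n r_0^{2^n}\leq r_0/(1-r_0)$, delivers the identical constant $\beta/(1-8\beta M)$, and your limiting step is legitimate because $\vepsilon^\top\mB\vepsilon\leq m\norm{\mB}_{2\to2}$ almost surely, so $f(\beta_N)\to 1$; note also that the self-consistency inequality needs no smallness of $\beta$ at any stage of the recursion, only for convergence of the sum, which you correctly observe. The trade-off is minor: the paper's Jensen step is shorter and avoids the limit, while your recursion is slightly more explicit about where the geometric-type series and the condition $8\beta\norm{\mB}_{2\to2}<1$ enter (and it silently assumes $\norm{\mB}_{2\to2}>0$ when dividing by $8M$, the case $\mB=0$ being trivial).
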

\begin{proof}
We start by using Theorem \ref{thm:decoupling_thm}:
    \begin{align*}
        \E\paran{
\exp\paran{
\beta \vepsilon^\top \mB \vepsilon
}        
        } & =
        \E\paran{
\exp\paran{\beta
\sum_{j=1}B_{jj}  + \beta \sum_{j\neq k}\epsilon_j\epsilon_k B_{jk} 
}        
        }\\
&\leq 
\paran{
\exp\paran{\beta \Tr(\mB)
}}        
\E\paran{
\exp\paran{4\beta \sum_{j,k}\epsilon_j\epsilon'_k B_{jk} 
}} 
\end{align*}
Then using the inequality $\E\exp(\beta \sum_{l=1}^m a_1\epsilon_l)\leq \E\exp(\beta^2 \sum_{l=1}^m a_1^2/2)$, we have:
\begin{align*}
\E\paran{
\exp\paran{4\beta \sum_{j,k}\epsilon_j\epsilon'_k B_{jk} 
}} 
&\leq 
\E\paran{
\exp\paran{8\beta^2 \sum_{k}
\paran{\sum_j \epsilon_j B_{jk} 
}^2
}} 
    \end{align*}
Next, we can see that:
\begin{align*}
\sum_{k}
\paran{\sum_j \epsilon_j B_{jk} 
}^2& = \norm{\mB\vepsilon}^2 =\norm{\mB^{1/2}\mB^{1/2}\vepsilon}^2 \\
&\leq \norm{\mB^{1/2}}^2_{2\to 2}\norm{\mB^{1/2}\vepsilon}^2 =\norm{\mB}_{2\to 2} \vepsilon^\top\mB\vepsilon.
\end{align*}
Therefore, for $8\beta\norm{\mB}_{2\to 2}<1$, we get:
\begin{align*}
        \E\paran{
\exp\paran{
\beta \vepsilon^\top \mB \vepsilon
}        
        } & \leq
    {
\exp\paran{\beta \Tr(\mB)
}}        
\E\paran{
\exp\paran{8\beta^2 \norm{\mB}_{2\to 2} \vepsilon^\top\mB\vepsilon
}}\\
&\leq
    {
\exp\paran{\beta \Tr(\mB)
}}        
\E\paran{
\paran{
\exp\paran{ \beta\vepsilon^\top\mB\vepsilon 
}
}
}^{8\beta \norm{\mB}_{2\to 2}}.
\end{align*}
Rearranging the terms yields the final result.
\end{proof}
\subsection{Rademacher Complexity Bounds}
The generalization analysis of this paper is based on the Rademacher analysis. We summarize the main theorems here. 
{The terms $\Ltrue(h)$ and $\Lemp(h)$ denotes, respectively, the test and the training error, formally defined as follows:}
\begin{equation}
{ \Ltrue(h) = \E_{\vx\sim\P_{\vx}}\paran{\ell\circ h(\vx)}, \quad \Lemp(h) = \frac{1}{m}\sum_{i=1}^m \ell\circ h(\vx_i).}
\end{equation}

The empirical Rademacher complexity is defined as: 
\begin{equation}
\mathcal{R}_\mathcal{S}(\mathcal{G}) 
= 
\mathbb{E}_{\epsilon} \sup_{g \in \mathcal{G}} \frac{1}{m} \sum_{i=1}^m \epsilon_i g(\vx_i),
\end{equation}

\begin{theorem}[{Theorem 26.5, \cite{shalev-shwartz_understanding_2014}}]
Let $\gH$ be a family of functions, and let $\gS$ be the training sequence of $m$ samples
drawn from the distribution $\mathcal{D}^{m}$. Let $\ell$ be a real-valued loss function satisfying $\card{\ell}\leq c$. Then, for $\delta \in (0,1)$, with probability at least $1-\delta$ 
we have, for all $h \in \gH$,
\begin{equation}
\Ltrue(h) \leq \Lemp(h) + 2\gR_\gS(\ell \circ\gH) + 4c \sqrt{\frac{2\log(4/\delta)}{m}}. 
\label{eq:ge_vs_rademacher}
\end{equation}
\label{thm:ge_vs_rademacher}
\end{theorem}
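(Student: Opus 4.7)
The statement is the textbook Rademacher-complexity-based generalization bound for bounded losses, following the proof of Theorem~26.5 in \cite{shalev-shwartz_understanding_2014}. My plan is the classical three-step route: concentration of the uniform generalization gap via McDiarmid, ghost-sample symmetrization to introduce Rademacher variables, and a second McDiarmid application to pass from the expected to the empirical Rademacher complexity.

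First I would define the uniform generalization gap
\[
\Phi(\gS) \defeq \sup_{h\in\gH}\,\paran{\Ltrue(h) - \Lemp(h)},
\]
and observe that, since $\card{\ell}\leq c$, swapping a single example in $\gS$ changes $\Lemp(h)$ uniformly in $h$ by at most $2c/m$; hence $\Phi$ satisfies the bounded-differences property with constants $2c/m$. McDiarmid's inequality then yields, with probability at least $1-\delta/2$, a bound of the form $\Phi(\gS) \leq \E_{\gS}\bracket{\Phi(\gS)} + 2c\sqrt{2\log(4/\delta)/m}$. The second step is the ghost-sample symmetrization: introduce an independent copy $\gS'$, rewrite $\Ltrue(h) = \E_{\gS'}\bracket{\Lemp_{\gS'}(h)}$, pull the supremum inside via Jensen's inequality, and insert i.i.d.\ Rademacher signs $\epsilon_i$ that swap the roles of $z_i$ and $z_i'$ (the joint law of $(\gS,\gS')$ is invariant under this swap). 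Splitting the supremum across the two halves of the symmetrized sum yields the familiar inequality $\E_{\gS}\bracket{\Phi(\gS)} \leq 2\,\E_{\gS}\bracket{\gR_{\gS}(\ell\circ\gH)}$.

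Because the statement is in terms of the empirical Rademacher complexity rather than its expectation, an additional concentration step is needed. Viewing $\gR_{\gS}(\ell\circ\gH)$ as a deterministic function of $\gS$ (with the Rademacher signs integrated out), a swap of a single $z_i$ again moves the Rademacher average by at most $2c/m$, so a second McDiarmid application yields, with probability at least $1-\delta/2$, $\E_{\gS}\bracket{\gR_{\gS}(\ell\circ\gH)} \leq \gR_{\gS}(\ell\circ\gH) + 2c\sqrt{2\log(4/\delta)/m}$. A union bound over the two concentration events, chained through $\Ltrue(h) - \Lemp(h) \leq \Phi(\gS)$, produces the claimed inequality for every $h\in\gH$ simultaneously.

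There is no genuine obstacle --- every ingredient is classical. The only bookkeeping is in tracking the constant in front of the tail term: the concentration of $\Phi$ contributes $2c\sqrt{2\log(4/\delta)/m}$, and the symmetrized concentration of $\gR_\gS$ contributes an additional $2c\sqrt{2\log(4/\delta)/m}$ after multiplication by the symmetrization factor of $2$, which together match the stated coefficient $4c$. Since this is a purely generic statistical-learning tool, the argument uses nothing about the equivariant or convolutional structure developed elsewhere in the paper; specialization to the hypothesis classes of interest (such as $\gH$ in \eqref{def:gcnn_hyp_space}) enters only later, through the Rademacher-complexity estimates that will be substituted into \eqref{eq:ge_vs_rademacher}.
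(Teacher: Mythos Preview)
Your proposal is correct and is exactly the standard proof of this result. Note, however, that the paper does not supply its own proof of this theorem: it is stated with attribution to \cite{shalev-shwartz_understanding_2014} (Theorem~26.5) and used as a black-box tool, so there is no paper-side argument to compare against beyond the textbook one you have reproduced.
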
 
Using Theorem \ref{thm:ge_vs_rademacher}, we can focus on finding upper bounds for the empirical Rademacher complexity of $\ell\circ\gH$. Below, we show how to remove the loss function $\ell(\cdot)$ and focus on $\gH$.

A function  $\phi:\R\to\R$ is called a contraction if $|\phi(x)-\phi(y)|\leq |x-y|$ for all $x,y\in\R$, or equivalently if the function is $1$-Lipschitz. 
The contraction lemma is a standard result in equation 4.20 of \cite{ledoux_probability_2011}. We will use that for the Rademacher complexity analysis.
\begin{lemma}
Let $\phi_i:\R\to\R$ be contractions such that $\phi_i(0)=0$. If $G:\R\to\R$ is a convex and increasing function, then
\begin{equation}
    \E G\paran{
    \sup_{t\in T} \sum_{i=1}^m \epsilon_i\phi_i(t_i) 
    }\leq \E G\paran{
    \sup_{t\in T} \sum_{i=1}^m \epsilon_i t_i
    }.
\end{equation}  
\label{lem:contraction}
\end{lemma}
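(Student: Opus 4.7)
The plan is to establish this by induction on $m$, reducing the general statement to a one-variable contraction step. This is the classical Ledoux--Talagrand argument (cf.\ \cite{ledoux_probability_2011}). The base case $m=0$ is trivial, and the inductive step amounts to showing that for each coordinate $i$ one can replace $\phi_i(t_i)$ by $t_i$ without increasing the expected value of $G$ applied to the supremum.

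First I would condition on $\{\epsilon_j : j \neq i\}$ and set $a(t) := \sum_{j \neq i} \epsilon_j \phi_j(t_j)$, which becomes a deterministic function of $t$ after this conditioning. The task then reduces to proving the one-variable inequality
\[
\E_\epsilon G\left( \sup_{t \in T} \bigl[ a(t) + \epsilon\, \phi_i(t_i) \bigr] \right) \leq \E_\epsilon G\left( \sup_{t \in T} \bigl[ a(t) + \epsilon\, t_i \bigr] \right)
\]
for any contraction $\phi_i$ with $\phi_i(0)=0$ and any function $a:T \to \R$. Applying this iteratively for $i=1,\dots,m$ (peeling one coordinate at a time and using the tower property for the outer expectation) yields the full lemma. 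Note that on each step, the remaining $\phi_j$ for $j>i$ can be absorbed into $a$ in the same way.

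For the one-variable step, I would expand the expectation over $\epsilon \in \{\pm 1\}$ and rewrite the left-hand side as
\[
\frac{1}{2} \sup_{t,s \in T} \Bigl[ G\bigl(a(t)+\phi_i(t_i)\bigr) + G\bigl(a(s)-\phi_i(s_i)\bigr) \Bigr],
\]
up to a standard approximation argument when the suprema are not attained. The key step is then to show that for any pair $(t,s)$ there exists a pair $(t',s')$ (obtained, if necessary, by swapping $t$ and $s$) such that
\[
G\bigl(a(t)+\phi_i(t_i)\bigr) + G\bigl(a(s)-\phi_i(s_i)\bigr) \leq G\bigl(a(t') + t'_i\bigr) + G\bigl(a(s') - s'_i\bigr).
\]
This follows from a case split on the signs of $t_i - s_i$ and of $\phi_i(t_i), \phi_i(s_i)$, combined with the contraction property $|\phi_i(t_i) - \phi_i(s_i)| \leq |t_i - s_i|$ and the chord inequality for convex increasing $G$ (namely $G(y+h) - G(y) \geq G(x+h) - G(x)$ whenever $y \geq x$ and $h \geq 0$). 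The assumption $\phi_i(0)=0$ is used precisely to handle the sub-case where $\phi_i(t_i)$ and $\phi_i(s_i)$ have opposite signs, allowing $0$ to be inserted as an intermediate comparison point.

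The main obstacle is the sign bookkeeping in the one-variable step: each of the four sub-cases must be treated separately, and convexity of $G$ must be invoked at precisely the right moment. A secondary technical subtlety is passing to approximate maximizers when the suprema are not attained, which is routine given continuity of $G$. In the applications in the present paper, $T$ arises from a finite sample and $G$ is either the identity or an exponential, so both of these subtleties become essentially vacuous.
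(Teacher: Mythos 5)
Your overall strategy (condition on the other Rademacher signs, reduce to a one-variable comparison, iterate coordinate by coordinate) is the classical Ledoux--Talagrand route; the paper itself offers no proof of this lemma and simply cites eq.~(4.20) of that reference, so the only issue is whether your argument stands on its own. It does not, as written: the key one-variable claim --- that for \emph{any} pair $(t,s)$ either the pair itself or its swap dominates --- is false. Concretely, take $G(x)=e^{x}$ and $\phi_i(x)=|x-1|-1$, which is a contraction with $\phi_i(0)=0$, and two points with $\left(a(t),t_i\right)=(-100,2)$ and $\left(a(s),s_i\right)=(0,1)$. Then
\[
G\left(a(t)+\phi_i(t_i)\right)+G\left(a(s)-\phi_i(s_i)\right)=e^{-100}+e,
\]
while the unswapped comparison equals $e^{-98}+e^{-1}$ and the swapped one equals $e+e^{-102}$; both are strictly smaller than $e^{-100}+e$. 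Hence no case analysis on the signs of $t_i-s_i$ and $\phi_i(t_i),\phi_i(s_i)$ can establish the dichotomy you state, and the inductive step collapses at this point.

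The step is repairable, because after the reduction you only need to dominate each pair by the two independent suprema; so it suffices to prove, for every pair,
\[
G\left(a(t)+\phi_i(t_i)\right)+G\left(a(s)-\phi_i(s_i)\right)\le G\left(\max\left(a(t)+t_i,\,a(s)+s_i\right)\right)+G\left(\max\left(a(t)-t_i,\,a(s)-s_i\right)\right),
\]
i.e.\ you must also allow the comparison pairs $(t,t)$ and $(s,s)$, not only the identity and the swap (in the counterexample above the dominating choice uses $s$ in both slots). This stronger-looking inequality is true and easy: the contraction property gives the sum bound $\phi_i(t_i)-\phi_i(s_i)\le|t_i-s_i|$, the hypothesis $\phi_i(0)=0$ gives $|\phi_i(t_i)|\le|t_i|$ and $|\phi_i(s_i)|\le|s_i|$, so both left-hand arguments are at most the larger of the right-hand arguments; then conclude with the elementary fact that for convex nondecreasing $G$, if $x_1+x_2\le y_1+y_2$ and $\max(x_1,x_2)\le\max(y_1,y_2)$ then $G(x_1)+G(x_2)\le G(y_1)+G(y_2)$ (a chord/majorization argument). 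Alternatively, your swap-only dichotomy can be salvaged if you exploit that $t$ and $s$ are (near-)maximizers of $a(\cdot)+\phi_i(\cdot_i)$ and $a(\cdot)-\phi_i(\cdot_i)$, since that forces $\phi_i(t_i)-\phi_i(s_i)\ge|a(t)-a(s)|$ and, combined with the contraction bound, rules out both right-hand maxima being attained strictly at the same point --- but this extra input appears nowhere in your sketch, which asserts the claim for arbitrary pairs. The conditioning, the tower-property iteration over coordinates, and the approximation of non-attained suprema are all fine.
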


Let's rewrite the Rademacher term, including the loss function explicitly: 
\[
\gR_\gS(\ell \circ\gH) = \E_{\vepsilon}\left[
\sup_{h\in\gH}\frac{1}{m}\sum_{i=1}^m\epsilon_i \ell\circ h(\vx_i)
\right].
\]
We assume a $1$-Lipschitz loss function, for which we can use the  contraction lemma, 4.20 in \cite{ledoux_probability_2011}, to get:
\[
\gR_\gS(\ell \circ\gH)\leq \gR_\gS(\gH).
\]
So finding an upper bound on $\gR_\gS(\gH)$ is sufficient for the generalization error analysis. 

\subsection{Dudley's Inequality and Covering Number Bounds}
Although, for most of the proofs, we try to directly bound the Rademacher complexity, there is a way of bounding it using the covering number of the underlying set. The covering number of a set $\gG$ is the minimum number of balls required to cover the set $\gG$ with the centers within the set, the distances defined according to a metric $d$, and the radius fixed  of a given size $\epsilon$. It is denoted by $\gN(\gG,d_0,\epsilon)$. The covering can be understood is a way approximating each point in $\gG$ by a set of finite points of $\gG$ with the fidelity $\epsilon$. The following result can be found in many references including \cite{foucart_mathematical_2013,bartlett_spectrally-normalized_2017,Mohri2018-ja}.

\begin{theorem}[Dudley's Inequality]
The Rademacher complexity can be upper bounded using the covering number as follows
\begin{align}
\mathbb{E}_{\epsilon} \sup_{g\in\gG} \frac 1m \sum_{i=1}^m \epsilon_i g(\vx_i) \leq \inf_{\alpha>0}\left({4\alpha}+\frac{4\sqrt{2}}{\sqrt{m}}\int_\alpha^{\frac{B_{\gS;\gG}}{2}} 
\sqrt{\log\mathcal{N}(\mathcal{G}, d, u)} \, \mathrm{d} u \right)
\end{align}
where $d_0(s,t)$ is metric defined on $\gG$ for any $s,t\in \gG$ as follows:
\begin{equation}
d(s,t)=\left(\frac{1}{m} \sum_{i=1}^m(s(\vx_i)- t(\vx_i))^2 \right)^{1/2}    
\label{def:covering_norm}
\end{equation}

and
\[ 
B_{\gS;\gG}=\sup_{g\in \mathcal G}  \sqrt{\frac{\sum_{i=1}^m g(\vx_i)^2}{m}}.
\]
\label{thm:dudley}
\end{theorem}

The idea is to find a bound on the covering number and then use Dudley's inequality to bound the Rademacher complexity.
\section{Proof of Main Theorems}
\label{app:proof_theorems}

 \subsection{Rademacher Complexity Bounds for Group Convolutional Networks}
Consider the group convolutional network with $c_0$ input channel, $c_1$ intermediate channels, and a last layer with pooling and aggregation. The input space is assumed to be $\R^{|G|\times c_0}$. Remember that the network was given as follows:
\[
h_{\vu,\vw}(\vx) =\vu^\top P\circ{\sigma
{
\begin{pmatrix}
\sum_{k=1}^{c_0}\vw_{(1,k)} \conv_G \vx{(k)}\\
\vdots\\
\sum_{k=1}^{c_0}\vw_{(c_1,k)} \conv_G \vx{(k)}
\end{pmatrix}
 }
 } = \vu^\top P\circ \sigma(\mW\vx),
\]
where $P(\cdot)$ is the pooling operation. We denote the first convolution layer with the circulant matrix $\mW\vx$.

To find a bound on the Rademacher complexity, We start with the following inequality, which is shared across the  proofs of some of the other theorems in the paper: 
\begin{align}
    \E_\vepsilon \paran{\sup_{\vu,\vw} \frac 1m \sum_{i=1}^m \eps_i \vu^\top P\circ{\sigma
\paran{
\mW\vx_i
 }
 }}&= \frac{M_1}{m}\E_\vepsilon \paran{\sup_{\vw} \norm{ \sum_{i=1}^m \eps_i P\circ\sigma(\mW\vx_i)}}.
\label{ineq:first_peel_off}
\end{align}
The above result follows from the Cauchy-Schwartz inequality and peels off the last aggregation layer. We continue the proofs from this step.
\subsection{Proof for Positively Homogeneous Activation Functions with Average Pooling (Theorem \ref{thm:gcnn_avgpooling}) }
\label{app:proof_avgpooling}

\begin{theorem}
Consider the hypothesis space $\gH$ defined in \eqref{def:gcnn_hyp_space}. If $P(\cdot)$ is the average pooling operation, $\sigma(\cdot)$ is a $1$-Lipschitz positively homogeneous activation function, then with probability at least $1-\delta$ and for all $h\in\gH$, we have:
    \[
\Ltrue(h) \leq \Lemp(h) + 2\frac{b_x M_1 M_2}{\sqrt{m}} + 4 \sqrt{\frac{2\log(4/\delta)}{m}}. 
    \]
\label{thm:gcnn_avgpooling}
\end{theorem}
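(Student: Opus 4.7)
The plan is to bound the empirical Rademacher complexity $\gR_\gS(\gH)$ and then invoke Theorem \ref{thm:ge_vs_rademacher}, together with the contraction principle (Lemma \ref{lem:contraction}) for the $1$-Lipschitz loss. The central observation is that when $P$ is the per-channel average pooling operator $\vz \mapsto \frac{1}{|G|}\mathbf{1}^\top \vz$, the composition $\vu^\top P(\cdot)$ is itself a linear functional on $\R^{c_1|G|}$. Hence the network $h_{\vu,\vw}(\vx) = \vu^\top P \circ \sigma(\mW \vx)$ can be rewritten as a vanilla two-layer network $\tilde{\vu}^\top \sigma(\mW \vx)$, where $\tilde{\vu} \in \R^{c_1 |G|}$ has entries $\tilde{u}_{j,g} = u_j/|G|$. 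This reduces the problem to applying the standard single-hidden-layer Rademacher machinery, but with carefully tracked effective norm budgets.

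Next I would compute the induced norm constraints. A direct calculation gives $\|\tilde{\vu}\|^2 = \sum_{j,g} u_j^2 / |G|^2 = \|\vu\|^2/|G| \leq M_1^2/|G|$, so $\|\tilde{\vu}\| \leq M_1/\sqrt{|G|}$. On the convolutional side, each filter $\vw_{(j,i)}$ generates a $G$-circulant block $\mW_{(j,i)}$ whose $|G|$ rows are shifted copies of the filter and therefore all have equal Euclidean norm $\|\vw_{(j,i)}\|$. This gives $\|\mW_{(j,i)}\|_F^2 = |G|\|\vw_{(j,i)}\|^2$; summing over $(j,i)$ and using $\|\vw\| \leq M_2$ yields $\|\mW\|_F \leq \sqrt{|G|}\,M_2$. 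The product of the two effective norm bounds is exactly $M_1 M_2$: the $\sqrt{|G|}$ factors cancel, and this cancellation is the source of the dimension-free character of the final estimate.

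Having reduced the problem to a standard two-layer network with $1$-Lipschitz positive homogeneous activation, I would invoke the peeling argument of \cite{golowich_size-independent_2018}. Starting from inequality \eqref{ineq:first_peel_off}, which uses Cauchy--Schwarz to pull the factor $\|\tilde{\vu}\|$ out of the supremum, one is left to bound $\E_\vepsilon \sup_{\vw}\|\sum_i \eps_i \sigma(\mW\vx_i)\|$. Positive homogeneity of $\sigma$ combined with a Jensen-type inequality allows one to move $\|\mW\|_F$ outside the supremum, and what remains reduces to $\E_\vepsilon \|\sum_i \eps_i \vx_i\| \leq (\sum_i \|\vx_i\|^2)^{1/2} \leq \sqrt{m}\,b_x$ by Jensen and independence of the Rademacher variables. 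Assembling the pieces gives $\gR_\gS(\gH) \leq M_1 M_2 b_x / \sqrt{m}$, and substituting into Theorem \ref{thm:ge_vs_rademacher} produces the stated bound.

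The main obstacle to watch for is the peeling step itself: one needs the careful Jensen and symmetrization argument of \cite{golowich_size-independent_2018} to extract $\|\mW\|_F$ (rather than the operator norm) at the cost of only a constant factor, and to verify that positive homogeneity is used in exactly the right place. Apart from this, the two algebraic identities $\|\tilde{\vu}\| \leq M_1/\sqrt{|G|}$ and $\|\mW\|_F \leq \sqrt{|G|}\,M_2$ are the load-bearing steps of the proof, and their exact cancellation is what makes the final bound independent of $|G|$, $c_0$, and $c_1$.
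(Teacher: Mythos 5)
Your proposal is correct, and it reaches the same constant $2\,b_xM_1M_2/\sqrt{m}$, but it follows a different route than the paper's appendix proof: you merge the average pooling with the last layer into a single linear functional $\tilde{\vu}$ with $\norm{\tilde{\vu}}\leq M_1/\sqrt{\card{G}}$ and then apply the standard two-layer peeling to the enlarged class of all matrices with $\norm{\mW}_F\leq \sqrt{\card{G}}\,M_2$, relying on the exact cancellation of the $\sqrt{\card{G}}$ factors. This is precisely the reduction the paper sketches in its main-text discussion of average pooling (to reuse results of Golowich et al.\ and Vardi et al.\ off the shelf), and since both norm computations ($\norm{\tilde{\vu}}^2=\norm{\vu}^2/\card{G}$ and $\norm{\mW}_F^2=\card{G}\norm{\vw}^2$) are right and class enlargement only increases the Rademacher complexity, the argument is sound; the only implicit step is removing $\sigma$ via the contraction lemma inside the squared supremum, which the paper glosses in the same way. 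The appendix proof of Theorem~\ref{thm:gcnn_avgpooling} instead never passes through $\norm{\mW}_F$: it writes the pooled convolution output as an average over the group permutations $\Pi_l$, pulls the $\frac{1}{\card{G}}\sum_l$ out by the triangle inequality, and for each fixed $l$ peels with the filter norm $\norm{\vw}$ directly (positive homogeneity, the Golowich-style sup-over-channels trick, contraction, Cauchy--Schwarz, and unitarity of $\Pi_l$). What your route buys is brevity and a transparent explanation of why the bound is dimension-free; what the paper's route buys is a template that does not require the pooling to be linear, so the same skeleton extends verbatim to the general $(\phi,\rho)$ pooling of Theorem~\ref{thm:gcnn_general_pooling}, to max pooling, and to the weight-sharing and locality variants, where folding the pooling into $\vu$ is impossible.
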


\begin{proof}
For an input $\vx\in\R^{|G|\times c_0}$, the action of the group $G$ on each channel is given by the permutation of the entries. Suppose that the $G-$permutations are given by $\Pi_1,\dots,\Pi_{|G|}$. We have:
\begin{align*}
\E_\vepsilon \paran{\sup_{\vw} \norm{ \sum_{i=1}^m \eps_i P\circ\sigma(\mW\vx_i)}}
&=
\E_\vepsilon 
\paran{
\sup_{\vw}
\norm{
 \sum_{i=1}^m \eps_i
\begin{pmatrix}
 \frac{1}{|G|}\mathbf{1}^\top\sigma\left(\sum_{k=1}^{c_0}\vw_{(1,k)} \conv_G \vx_i{(k)}\right)\\
 \vdots\\
   \frac{1}{|G|}\mathbf{1}^\top\sigma\left(\sum_{k=1}^{c_0}\vw_{(c_1,k)} \conv_G \vx_i{(k)}\right)
\end{pmatrix}
}
}
\\
&=
\E_\vepsilon 
\paran{
\sup_{\vw}
\norm{
 \sum_{i=1}^m \eps_i
 \sum_{l=1}^{|G|}\frac{1}{|G|}
\begin{pmatrix}
\sigma\left(\sum_{k=1}^{c_0}\vw_{(1,k)}^\top \Pi_l \vx_i{(k)}\right)\\
 \vdots\\
\sigma\left(\sum_{k=1}^{c_0}\vw_{(c_1,k)}^\top \Pi_l \vx_i{(k)}\right)\\
\end{pmatrix}
}
}
\\
 &   \leq 
 \E_\vepsilon 
\paran{
\sup_{\vw} \sum_{l=1}^{|G|}\frac{1}{|G|}
\norm{
 \sum_{i=1}^m \eps_i
 \begin{pmatrix}
\sigma\left(\sum_{k=1}^{c_0}\vw_{(1,k)}^\top \Pi_l \vx_i{(k)}\right)\\
 \vdots\\
\sigma\left(\sum_{k=1}^{c_0}\vw_{(c_1,k)}^\top \Pi_l \vx_i{(k)}\right)\\
\end{pmatrix}
}
}
\\
 &   \leq 
 \sum_{l=1}^{|G|}\frac{1}{|G|}
 \E_\vepsilon 
\paran{
\sup_{\vw} 
\norm{
 \sum_{i=1}^m \eps_i
 \begin{pmatrix}
\sigma\left(\sum_{k=1}^{c_0}\vw_{(1,k)}^\top \Pi_l \vx_i{(k)}\right)\\
 \vdots\\
\sigma\left(\sum_{k=1}^{c_0}\vw_{(c_1,k)}^\top \Pi_l \vx_i{(k)}\right)\\
\end{pmatrix}
}
}
\end{align*}
where we used triangle inequalities for the first inequality. The second inequality follows from swapping the supremum and the sum. Consider an arbitrary summand for a given $l$. We first use the moment inequality to get:
\begin{align}
    \E_\vepsilon &
\paran{
\sup_{\vw} 
\norm{
 \sum_{i=1}^m \eps_i
 \begin{pmatrix}
\sigma\left(\sum_{k=1}^{c_0}\vw_{(1,k)}^\top \Pi_l \vx_i{(k)}\right)\\
 \vdots\\
\sigma\left(\sum_{k=1}^{c_0}\vw_{(c_1,k)}^\top \Pi_l \vx_i{(k)}\right)\\
\end{pmatrix}
}
}  \nonumber\\
&\quad \leq
\sqrt{
\E_\vepsilon 
\paran{
\sup_{\vw} 
\norm{
 \sum_{i=1}^m \eps_i
 \begin{pmatrix}
\sigma\left(\sum_{k=1}^{c_0}\vw_{(1,k)}^\top \Pi_l \vx_i{(k)}\right)\\
 \vdots\\
\sigma\left(\sum_{k=1}^{c_0}\vw_{(c_1,k)}^\top \Pi_l \vx_i{(k)}\right)\\
\end{pmatrix}
}^2
}
}.
\label{eq:peeled_off_step}
\end{align}
We can continue the derivation as follows. First, define
$\vw_{(j,:)} \defeq (\vw_{j,i})_{i\in [c_0]}$, which is the weights used for generating the channel $j$. Now, using the assumption of positively homogeneity, we have:
\begin{align*}
&\E_\vepsilon 
\paran{
\sup_{\vw} 
\norm{
 \sum_{i=1}^m \eps_i
 \begin{pmatrix}
\sigma\left(\sum_{k=1}^{c_0}\vw_{(1,k)}^\top \Pi_l \vx_i{(k)}\right)\\
 \vdots\\
\sigma\left(\sum_{k=1}^{c_0}\vw_{(c_1,k)}^\top \Pi_l \vx_i{(k)}\right)\\
\end{pmatrix}
}^2
}
=
\E_\vepsilon 
\paran{
\sup_{\vw} 
\sum_{j=1}^{c_1} \paran{
 \sum_{i=1}^m \eps_i
\sigma\left(\sum_{k=1}^{c_0}\vw_{(j,k)}^\top \Pi_l \vx_i{(k)}\right)
}^2
}
\\
&=
\E_\vepsilon 
\paran{
\sup_{\vw} 
\sum_{j=1}^{c_1} {\norm{\vw_{(j,:)}}^2 }\paran{
 \sum_{i=1}^m \eps_i
\sigma\left(\frac{1}{\norm{\vw_{(j,:)}} }\sum_{k=1}^{c_0}\vw_{(j,k)}^\top \Pi_l \vx_i{(k)}\right)
}^2
},
\end{align*}
Since $\norm{\vw}^2\leq M_2^2 $, we have 
\[
\sum_{j=1}^{c_1} \norm{\vw_{(j,:)}}^2 \leq M_2^2.
\]
We use a similar trick to the paper \cite{golowich_size-independent_2018} to focus only on one channel for the rest:
\begin{align*}
&\E_\vepsilon 
\paran{
\sup_{\vw} 
\sum_{j=1}^{c_1} {\norm{\vw_{(j,:)}}^2 }\paran{
 \sum_{i=1}^m \eps_i
\sigma\left(\frac{1}{\norm{\vw_{(j,:)}} }\sum_{k=1}^{c_0}\vw_{(j,k)}^\top \Pi_l \vx_i{(k)}\right)
}^2
}
\\
&\leq
M_2^2
\E_\vepsilon 
\paran{
\sup_{\vw} \sup_{j\in[c_l]}
\paran{
 \sum_{i=1}^m \eps_i
\sigma\left(\frac{1}{\norm{\vw_{(j,:)}} }\sum_{k=1}^{c_0}\vw_{(j,k)}^\top \Pi_l \vx_i{(k)}\right)
}^2
}
\\
&\leq
M_2^2
\E_\vepsilon 
\paran{
\sup_{\tilde{\vw}:\norm{\tilde{\vw}}\leq 1} 
\paran{
 \sum_{i=1}^m \eps_i
\sigma\left( \sum_{k=1}^{c_0}\tilde{\vw}_{(k)}^\top \Pi_l \vx_i{(k)}\right)
}^2
}.
\end{align*}
Now, we can use the contraction lemma and get:
\begin{align}
\E_\vepsilon 
\paran{
\sup_{\tilde{\vw}:\norm{\tilde{\vw}}\leq 1} 
\paran{
 \sum_{i=1}^m \eps_i
\sigma\left( \sum_{k=1}^{c_0}\tilde{\vw}_{(k)}^\top \Pi_l \vx_i{(k)}\right)
}^2
}&\leq 
\E_\vepsilon 
\paran{
\sup_{\tilde{\vw}:\norm{\tilde{\vw}}\leq 1} 
\paran{
 \sum_{i=1}^m \eps_i
 \sum_{k=1}^{c_0}\tilde{\vw}_{(k)}^\top \Pi_l \vx_i{(k)}
}^2
}
\nonumber\\
&\leq 
\E_\vepsilon 
\paran{
\sup_{\tilde{\vw}:\norm{\tilde{\vw}}\leq 1} 
\paran{\sum_{k=1}^{c_0}\tilde{\vw}_{(k)}^\top\paran{
   \sum_{i=1}^m \eps_i\Pi_l \vx_i{(k)}
}}^2
}\nonumber\\
&\leq 
\E_\vepsilon 
\paran{
\norm{
   \sum_{i=1}^m \eps_i\Pi_l \vx_i
}^2
}\leq m b_x^2 \label{ineq:proof_final_step},
\end{align}
where we used the fact that the permutation matrix $\Pi_l$ is unitary. Putting all this together, we get:
\begin{align*}
     \sum_{l=1}^{|G|}\frac{1}{|G|}
 \E_\vepsilon 
\paran{
\sup_{\vw} 
\norm{
 \sum_{i=1}^m \eps_i
 \begin{pmatrix}
\sigma\left(\sum_{k=1}^{c_0}\vw_{(1,k)}^\top \Pi_l \vx_i{(k)}\right)\\
 \vdots\\
\sigma\left(\sum_{k=1}^{c_0}\vw_{(c_1,k)}^\top \Pi_l \vx_i{(k)}\right)\\
\end{pmatrix}
}
}\leq  \sum_{l=1}^{|G|}\frac{1}{|G|} \frac{b_xM_1M_2}{\sqrt{m}} =\frac{b_xM_1M_2}{\sqrt{m}},
\end{align*}
which finalizes the desired results. 
\end{proof}

\subsection{Max pooling Operation - Covering Number Based Results}
\label{sec:max_pooling_with_covering_number}
\paragraph{Single Channel Output.} Before going into another result for max pooling, we focus on a simpler example. We assume ReLU activation throughout this subsection. First, consider the following network:
\[
h_{\vw}(\vx) = P\circ{\sigma
\paran{
\sum_{k=1}^{c_0}\vw_{(1,k)} \conv_G \vx{(k)}
}
 }
\]
where $P(\cdot)$ is the max pooling operation. Just like above, we assume that the norm of the parameters is bounded by $M_2$ while the input norm is bounded by $b_X$. A special case of this setup has been considered in \cite{vardi_sample_2022} in Theorem 7, where $c_0=1$. We recap their proof, providing a more refined final bound. The only difference is that, in our proof, we use Dudley's inequality.

\begin{theorem}[Single Channel Max pooling]
Consider the hypothesis space $\gH$ of single channel group convolutional network with $P(\cdot)$ as max pooling, and $\sigma(\cdot)$ as ReLU. With probability at least $1-\delta$ and for all $h\in\gH$, we have:
\[
\Ltrue(h) \leq \Lemp(h) + \frac{b_xM_2}{\sqrt{m}}
\paran{144\sqrt{2} 
 \sqrt{\log(2(8\sqrt{m}+2)m|G|+1)} \log(m)}
 + 
 4 \sqrt{\frac{2\log(4/\delta)}{m}}.
    \]    
\end{theorem}

\begin{proof}
The proof consists of deriving the covering number $\gN(\gH,d,u)$ and then using Dudley's inequality. We have included the essential components of such proofs in 
the section on mathematical preliminaries.

First, we can rewrite the convolution as:
\[
\sum_{k=1}^{c_0}\vw_{(1,k)} \conv_G \vx{(k)} = \paran{\sum_{k=1}^{c_0}\vw_{(1,k)}^\top \Pi_g\vx{(k)}}_{g\in G}
= \paran{\vw_{1}^\top \vx_g}_{g\in G},
\]
where $\vw_1$ is the vectorized form of concatenated convolution vectors, and $\vx_g$ is the vectorized form of concatenated $\Pi_g\vx{(k)}$. Using this notation, we have:
\[
d(h_{\vw}, h_{\vv})=\left( \frac{1}{m}\sum_{i=1}^m\paran{
 P\circ{\sigma
\paran{\vw^\top \vx_{i,g}}_{g\in G} }
 - P\circ{\sigma
\paran{\vv^\top \vx_{i,g}}_{g\in G} }
}^2 
\right)^{1/2},
\]
and using the Lipschitz continuity of max-pooling and ReLU, we get:
\[
\card{P\circ{\sigma
\paran{\vw^\top \vx_g}_{g\in G} }
 - P\circ{\sigma
\paran{\vv_{1}^\top \vx_g}_{g\in G} }}\leq 
\max_{g\in G}\card{(\vw-\vv)^\top \vx_g}.
\]
This means that it suffices to get a covering number 
for the linear function $(\vw^\top\vx_{i,g})_{i\in[m],g\in G}$ on the extended dataset.
We use the following lemma from \cite{zhang_covering_2002}, which is stronger than a similar result using Maurey's empirical lemma \cite{pisier_remarques_1980, pisier_volume_1989}.
\begin{lemma}
If $h(x)=\vw^\top\vx$ with $\norm{\vx}_p\leq b_x$ and $\norm{\vw}_q\leq w$ for $2\leq p\leq \infty$ and $1/p+1/q=1$, then for all $u>0$
\[
\log\gN(\gH,\norm{\cdot}_{\infty},u)\leq (36 (p-1) wb_x /u)^2\log(2(4wb_x/u+2)m+1).
\]
where $m$ is the number of samples.
\end{lemma}
This lemma bounds the $\ell_\infty$-norm covering number, which is an upper bound on the covering number $\gN(\gH, d, u)$ used in Dudley's inequality.
 
In our case, the effective number of samples is $m|G|$, and therefore the covering number is bounded as:
\[
\log\gN(\gH,d,u)\leq
 (36 b_xM_2 /u)^2\log(2(4b_xM_2/u+2)m|G|+1).
 \]
We just need to plug the above covering number into Dudley's inequality and use standard inequalities:
\begin{align*}
{4\alpha}+\frac{4\sqrt{2}}{\sqrt{m}}\int_\alpha^{\frac{B_{\gS;\gH}}{2}} 
\sqrt{\log\mathcal{N}(\mathcal{H}, d, u)} \, \mathrm{d} u & \leq {4\alpha}+\frac{4\sqrt{2}}{\sqrt{m}}\int_\alpha^{\frac{ b_x.M_2}{2}} 
 (36 b_xM_2 /u)\sqrt{\log(2(4b_xM_2/u+2)m|G|+1)}.\, \mathrm{d} u\ \\
 & \leq {4\alpha}+\frac{4\sqrt{2}}{\sqrt{m}} 
 (36 b_xM_2)\sqrt{\log(2(4b_xM_2/\alpha+2)m|G|+1)} \log(b_xM_2/2\alpha)\ \\
\end{align*}
by choosing $\alpha = b_x.M_2/2\sqrt{m}$, we get the following bound:
\begin{align*}
\frac{2b_xM_2}{\sqrt{m}}+\frac{4\sqrt{2}}{\sqrt{m}} 
 (36 b_xM_2)\sqrt{\log(2(8\sqrt{m}+2)m|G|+1)} \log(\sqrt{m}),
\end{align*}
which yields the desired result.

\end{proof}

Note that the bound is independent of the number of channels $c_0$, but has logarithmic dependence on the group size, as well as other constant terms. In comparison with \cite{vardi_sample_2022}, we do not have the dependence on the spectral norm of the circulant matrix, although, as we indicated before, their proof works well the norm of parameters as well. In this sense, our result can be seen as the generalization of \cite{vardi_sample_2022} to the multi-channel input.

\paragraph{Multi-Channel Output.} 
Next, we focus on the following setup with multi-channel convolution given by the matrix $\mW$:
\[
h_{\vu,\vw}(\vx) = \vu^\top P\circ{\sigma
\paran{
\mW\vx
}
 }.
\]
We have the following theorems for this case.

\begin{theorem}[Multiple Channel Max pooling]
Consider the hypothesis space $\gH$ of multiple channel group convolutional network with $P(\cdot)$ as max pooling, and $\sigma(\cdot)$ as ReLU. With probability at least $1-\delta$ and for all $h\in\gH$, we have:
\[
\Ltrue(h) \leq \Lemp(h) +
\frac{b_xM_1M_2\sqrt{c_1}}{\sqrt{m}}
\paran{288\sqrt{2} 
 \sqrt{\log(2(16\sqrt{m}+2)m|G|+1)} \log(m)}
 + 
 4 \sqrt{\frac{2\log(4/\delta)}{m}}.
    \]    
\end{theorem}
\begin{proof}
We would need a covering number for $h_{\vu,\vw}(\mX):=(h_{\vu,\vw}(\vx_1),\dots,h_{\vu,\vw}(\vx_m))$ in $\ell_2$-norm. We do this in two steps similar to the strategy in \cite{bartlett_spectrally-normalized_2017}. As the first step, we find a $\delta_1$-covering $\vu_k$ for the following set:
\[
\gH_1=\{ P\circ{\sigma
\paran{
\mW\mX
}
 }: \norm{\vw}\leq M_2 \}
\]
where  we used the notation $\mW\mX$ to denote the concatenation of all points in the training set, namely:
\[
\mW\mX= \paran{
P\circ{\sigma
\paran{
\mW\vx_1
}
},
\dots,
P\circ{\sigma
\paran{
\mW\vx_m
}
}
}.
\]
To simplify the notation, we assume that that matrix operations are broadcasted through $\mX$ as if we have parallel compute over $\vx_i$'s. 

The second step consists of finding a $\delta_2$-covering for the set of $\vu^\top\vu_k$ for each $\vu_k$'s from the first covering. Using these two coverings, We have:
\begin{align}
d\paran{h_{\vu,\vw}(\mX),\vv_l} & \leq d\paran{h_{\vu,\vw}(\mX),\vu^\top\vu_k} + d\paran{\vu^\top\vu_k,\vv_l}   \nonumber \\
& \leq  M_2d\paran{P\circ{\sigma
\paran{
\mW\mX
}
 },\vu_k} + d\paran{\vu^\top\vu_k,\vv_l}\leq M\delta_1+\delta_2
 \label{ineq:covering_num_combo}
\end{align}
where the norm $d$ is defined in \eqref{def:covering_norm} for any functions defined over the dataset $(\vx_1,\dots,\vx_m)$.
Besides, we assumed the points $\vv_l$ and $\vu_k$ are chosen from the cover to satisfy the norm inequalities above.
The final covering number is the product of the covering numbers from each step with the covering radius $M\delta_1+\delta_2$.

We start with the second step, namely to cover the following set for any $\vu_k$ in the first cover:
\[
\gH_{2,k}:= \{\vu^\top\vu_k: \norm{\vu}\leq M_1\}.
\]
The elements of the first cover, $\vu_k$, are themselves instances of the first layer of the network. Therefore, we can bound the second covering number for all $k$ as follows:
\[
\gN(\gH_{2,k},d,\delta_2) \leq \sup_{\mW}\gN( \{\vu^\top P\circ\sigma(\mW\mX): \norm{\vu}\leq M_2\},d,\delta_2).
\]
This is an instance of covering linear functions. We will use the following result from \cite{zhang_covering_2002}, which leverages Maurey's empirical lemma.
\begin{lemma}
    If $h(x)=\vw^\top\vx$ with $\norm{\vx}_p\leq b_x$ and $\norm{\vw}_q\leq w$, then
\[
\log\gN(\gH,d,u)\leq (2wb_x /u)^2\log(2m+1).
\]
where $m$ is the number of samples.
\label{lem:zhang_maurey}
\end{lemma}

Since the norm of $\vu$ is bounded by $M_1$, we only need to bound the norm of $\vu_k$. First, note that $\norm{P\circ{\sigma
\paran{
\mW\mX
}
 }
 }^2 = \sum_{i=1}^m \norm{P\circ{\sigma
\paran{
\mW\vx_i
}
 }
 }^2$. For each element of the sum, we can get the following upper bound:
\begin{align*}
\norm{P\circ{\sigma
\paran{
\mW\vx_i
}
 }
 }^2 & = \sum_{i=1}^{c_1} \card{
  P\circ{\sigma
\paran{
\sum_{k=1}^{c_0}\vw_{(i,k)} \conv_G \vx{(k)}
}
 }
 }^2 \\
 & \leq
  \sum_{i=1}^{c_1} \card{
  \max{
\paran{
\sum_{k=1}^{c_0}\vw_{(i,k)} \conv_G \vx{(k)}
}
 }
 }^2
 \\
 & \leq
  \sum_{i=1}^{c_1} \paran{\sum_{k=1}^{c_0} \card{
  \max{
\paran{
\vw_{(i,k)} \conv_G \vx{(k)}
}
 }
 }}^2\\
  & \leq
  \sum_{i=1}^{c_1} \paran{\sum_{k=1}^{c_0} \norm{
\vw_{(i,k)} }
 \norm{\vx{(k)}}
 }^2
 \\
  & \leq
  \sum_{i=1}^{c_1} 
\paran{\sum_{k=1}^{c_0} 
  \norm{
\vw_{(i,k)} }^2
 }
\paran{\sum_{k=1}^{c_0} 
 \norm{\vx{(k)}}^2
 }
 \\
  & \leq
  \norm{\vx}^2 \sum_{i=1}^{c_1}  
  \sum_{k=1}^{c_0} 
  \norm{
\vw_{(i,k)} }^2 \leq b_x^2M_1^2,
\end{align*}
where the first inequality follows from the property of ReLU, and the thrid inequality follows from Young's convolutional inequality. Using this inequality, we can use Lemma \ref{lem:zhang_maurey} to get the following:
\begin{equation}
\log\gN(\gH_{2,k},d_0,\delta_2) \leq (2b_xM_1M_2 \sqrt{m}/\delta_2)^2\log(2m+1).
\label{eq:second_layer_covering_number}
\end{equation}
Now we can move to the covering number for $\gH_1=\{ P\circ{\sigma
\paran{
\mW\mX
}
 }: \norm{\vw}\leq M_1 \}$. Given that the ReLU function is 1-Lipschitz, it suffices to find a covering for $P(\mW\mX)$, and note that the covering norm is a mixed norm, namely:
 \begin{align*}
 d(P(\mW\mX),\mU)&=\paran{\frac{1}{m}\sum_{i=1}^m \norm{P(\mW\vx_i)-\mU_i}^2 }^{1/2}\\
 &=\paran{\frac{1}{m}\sum_{i=1}^m \sum_{j=1}^{c_1} \left(\max{
\paran{
\sum_{k=1}^{c_0}\vw_{(j,k)} \conv_G \vx_i{(k)}
}
 } -
 \mU_{i,j}\right)^2}^{1/2}.
 \end{align*}
To find the covering number, we use the following set of inequalities:
 \begin{align*}
 d(P(\mW\mX),\mU)&\leq \paran{ \max_{i\in[m]} \sum_{j=1}^{c_1} \left(\max{
\paran{
\sum_{k=1}^{c_0}\vw_{(j,k)} \conv_G \vx_i{(k)}
}
 } -
 \mU_{i,j}\right)^2}^{1/2}\\
 &\leq \paran{ \max_{i\in[m]} \sum_{j=1}^{c_1} \norm{\vw_{(j,:)}}^2\left(\max{
\paran{
\sum_{k=1}^{c_0}\frac{\vw_{(j,k)}}{\norm{\vw_{(j,:)}}} \conv_G \vx_i{(k)}
}
 } -
 \frac{ \mU_{i,j}}{\norm{\vw_{(j,:)}}}
 \right)^2}^{1/2}.
 \end{align*}
Suppose that we find a cover $\tilde{\mU}_{i,j}$ that satisfies the following inequality 
\begin{equation}
\max_{i\in[m], j\in[c_1]} \card{ \max{
\paran{
\sum_{k=1}^{c_0}\frac{\vw_{(j,k)}}{\norm{\vw_{(j,:)}}} \conv_G \vx_i{(k)}
}
 } -\tilde{\mU}_{i,j}
 } \leq \frac{\delta_1}{M_1}.
\label{eq:covering_for_the_first_layer_per_channel}    
\end{equation}

Then, it is easy to see that since $\sum_{j=1}^{c_1} \norm{\vw_{(j,:)}}^2\leq M_1$, then 
 \begin{align*}
 d(P(\mW\mX),\mU)&=\paran{\frac{1}{m}\sum_{i=1}^m \sum_{j=1}^{c_1} \left(\max{
\paran{
\sum_{k=1}^{c_0}\vw_{(j,k)} \conv_G \vx_i{(k)}
}
 } -
 \norm{\vw_{(j,:)}} \tilde{\mU}_{i,j}\right)^2}^{1/2}\leq \delta_1.
 \end{align*}
 To find such covering, first find a cover for this set:
\[
\left\{
\left(\max{
\paran{
\sum_{k=1}^{c_0}\tilde{\vw}_{(k)} \conv_G \vx_i{(k)}
}
 } \right)_{i\in[m]}: \norm{\tilde{\vw}}\leq 1
\right\}.
\]
 Note that this is a special case of the covering number of the single-channel networks, which we had computed above. This set is the super-set of functions represented by each channel when the weights are normalized to have unit norm. We find $c_1$ independent covers, one for each channel, and this will give us the desired covering in \eqref{eq:covering_for_the_first_layer_per_channel}. We use the covering number for the single channel case using $M_2=1$ and $u=\delta_1/M_2$ (which basically does not change the bound on the covering number) to get the ultimate covering number for this layer given by:
 \begin{equation}
 \log\gN(\gH_1,d,\delta_1)\leq c_1 \gN(\gH,d,\delta_1/M_2) \leq c_1 (36b_xM_2/\delta_1)^2\log(2(4b_xM_2/\delta_1+2)m|G|+1).  
 \label{eq:first_layer_covering_number}
 \end{equation}

The final covering number is the product of the new covering number, and what we obtained for the last layer. Therefore, using the inequality \eqref{ineq:covering_num_combo} and the covering numbers \eqref{eq:second_layer_covering_number} and \eqref{eq:first_layer_covering_number} , the final covering number can be bounded as:
\begin{align*}
\log\gN(\gH,d,\delta) & \leq \log\max_{k}\gN(\gH_{2,k},d,\delta/2) + \log\gN(\gH_{1},d,\delta/2M_1) \\
&\leq (4b_xM_1M_2 \sqrt{m}/\delta)^2\log(2m+1)+
c_1 (72 b_xM_1M_2/\delta)^2\log(2(8b_xM_1M_2/\delta+2)m|G|+1)
\\ 
&\leq 2c_1 (72 b_xM_1M_2/\delta)^2\log(2(8b_xM_1M_2/\delta+2)m|G|+1). 
\end{align*}
We can plug into Dudley's inequality, which gives us the following bound:
\begin{align*}
{4\alpha}+\frac{4\sqrt{2}}{\sqrt{m}}& \int_\alpha^{\frac{B_{\gS;\gH}}{2}} 
\sqrt{\log\mathcal{N}(\mathcal{H}, d, u)} \, \mathrm{d} u \\
 & \leq {4\alpha}+\frac{4\sqrt{2}}{\sqrt{m}} 
 \sqrt{c_1}(72 b_xM_1M_2)\sqrt{\log(2(8b_xM_1M_2/\alpha+2)m|G|+1)} \log(b_xM_1M_2/2\alpha).
 \end{align*}
We can choose $\alpha = b_x.M_1 M_2/2\sqrt{m}$, and we get the following bound:
\begin{align*}
\frac{2b_xM_1 M_2}{\sqrt{m}}+\frac{2\sqrt{2}}{\sqrt{m}} 
 \sqrt{c_1}(72 b_xM_1M_2)\sqrt{\log(2(16\sqrt{m}+2)m|G|+1)} \log(m)
\end{align*}
The result follows from a standard inequality.
\end{proof}
 Apart from constant terms and other logarithmic terms, this new theorem has an extra dependence on $\sqrt{c_1}$, which is the dimension of the output channels. For the rest, we recover the term $b_xM_1M_2/\sqrt{m}$ which is what we expected. Unfortunately, for max-pooling, the dependence on $\sqrt{c_1}$ seems to be the artifact of using the covering number argument. As we will see in the next section, the direct analysis of Rademacher complexity shows that the dependence on $c_1$ can be completely removed with the price of additional dependence on the group size and input dataset. 
 
\begin{remark} 
The covering argument above is strictly better than the method used in Lemman 3.2 of \cite{bartlett_spectrally-normalized_2017} for covering matrix products.    If we use this lemma, we get a covering number of the form:
    \[
    \log(\gN(\gH_1,d,u))\leq c_1\paran{\frac{b_xM_2|G|}{u}}^2 \log(2|G|^2c_1c_0).
    \]

As one can see, we have now further dimension dependence in the bound, basically on the number of input channels $c_0$ and $|G|$. In the original paper, they could use mixed norms $\norm{\cdot}_{q,s}$ to get rid of some of the dimension dependencies, which we cannot do in our setting.
\end{remark}
\subsection{Homogeneous non-decreasing activation with max pooling}
\label{app:proof_maxpooling}
The following result provides a similar upper bound for max-pooling. A bit of notation: fix the training data matrix $\mX$. Denote the permutation action of the group element $l\in G$ by $\Pi_l$. The vector $\vl=(l_1,\dots, l_m)\in[|G|]^m$ determines the group permutation index individually applied to each training data points. For each $\vl$, we get the group-augmented version of the dataset denoted as $\mX_{\vl}=\paran{ \Pi_{l_1} \vx_1 \dots  \Pi_{l_m} \vx_m}$.

\begin{theorem}
Consider the hypothesis space $\gH$ defined in \eqref{def:gcnn_hyp_space}. Suppose that $P(\cdot)$ is the max pooling operation, and $\sigma(\cdot)$ is a $1$-Lipschitz and non-decreasing positively homogeneous activation function. Fix the training data matrix $\mX$. Then with probability at least $1-\delta$ and for all $h\in\gH$, we have:
    \[
\Ltrue(h) \leq \Lemp(h) + 2\frac{M_1 M_2 g(\mX) }{\sqrt{m}} + 4 \sqrt{\frac{2\log(4/\delta)}{m}}, 
    \]
with $g(\mX)$ defined as:
\[
g(\mX) =\sqrt{8\log(|G|) M^{\max}_{2,X} +  b_x^2 + \sqrt{8\log(|G|) M^{\max}_{2,X} b_x^2}} 
\]
where $M^{\max}_{2,X} = \max_{\vl}
\norm{\mX_{\vl}^\top\mX_{\vl}}_{2\to 2}$ for the group-augmented training data matrix $\mX_{\vl}$.
\label{thm:gcnn_max_pooling}
\end{theorem}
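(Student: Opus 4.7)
The plan is to bound the Rademacher complexity $\gR_\gS(\gH)$ by mimicking the argument of Theorem~\ref{thm:gcnn_avgpooling}, but replacing the linearity of average pooling by a union-bound over $|G|^m$ choices of maximizers. Starting from \eqref{ineq:first_peel_off}, Cauchy--Schwarz peels off $\vu$ yielding $\gR_\gS(\gH)\leq\frac{M_1}{m}\E_\vepsilon\sup_\vw\|\sum_i\eps_i P\circ\sigma(\mW\vx_i)\|$. The first key observation is that, because $\sigma$ is non-decreasing, max pooling commutes with the activation: $P(\sigma(\vz))=\max_l\sigma(z_l)=\sigma(\max_l z_l)=\sigma(P(\vz))$. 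Hence the $j$-th channel after pooling is simply $\sigma(\max_l\tilde\vw_j^\top\Pi_l\vx_i)$, where $\Pi_1,\dots,\Pi_{|G|}$ enumerate the permutations induced by the regular action of $G$.

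Next, I would apply Jensen ($\E X\leq\sqrt{\E X^2}$), expand the squared norm as a sum over channels, and use positive homogeneity to extract per-channel norms, pulling out $M_2^2$ and replacing the sup by one over a single unit-norm filter $\hat\vw$ (exactly as in the step leading to \eqref{ineq:proof_final_step}). The crux is then to eliminate the inner $\max_l$. For each fixed $\vw$ and $\vepsilon$, the maximum is attained at some $l_i^\star(\vw)\in[|G|]$ per sample; choosing $\vl=(l_i^\star)_{i\in[m]}$ shows that
\begin{equation*}
\Bigl(\sum_i\eps_i\max_l\sigma(\hat\vw^\top\Pi_l\vx_i)\Bigr)^2\leq\max_{\vl\in[|G|]^m}\Bigl(\sum_i\eps_i\sigma(\hat\vw^\top\Pi_{l_i}\vx_i)\Bigr)^2,
\end{equation*}
so after swapping $\sup_\vw$ with $\max_\vl$ the problem reduces to bounding $\E_\vepsilon\max_\vl A_\vl$ with $A_\vl=\sup_{\|\hat\vw\|\leq 1}(\sum_i\eps_i\sigma(\hat\vw^\top\Pi_{l_i}\vx_i))^2$ over $|G|^m$ augmented datasets $\mX_\vl=[\Pi_{l_1}\vx_1,\dots,\Pi_{l_m}\vx_m]$.

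The discrete max is handled by a Chernoff-type exponential trick: for any $\beta>0$, $\E\max_\vl A_\vl\leq\frac{m\log|G|}{\beta}+\frac{1}{\beta}\log\max_\vl\E\exp(\beta A_\vl)$. For fixed $\vl$, since the feasible set contains $\vzero$ the sup-value $\sup_{\|\hat\vw\|\leq 1}\sum_i\eps_i\sigma(\hat\vw^\top\Pi_{l_i}\vx_i)$ is non-negative, and $G(x)=\exp(\beta x^2)$ is convex and non-decreasing on $[0,\infty)$. A suitable extension of Lemma~\ref{lem:contraction} (with the 1-Lipschitz contraction $\sigma$) then allows replacing $\sigma$ by the identity, giving $\E\exp(\beta A_\vl)\leq\E\exp(\beta\|\sum_i\eps_i\Pi_{l_i}\vx_i\|^2)=\E\exp(\beta\vepsilon^\top\mX_\vl^\top\mX_\vl\vepsilon)$, to which Lemma~\ref{lem:log_partition_function} applies directly using $\Tr(\mX_\vl^\top\mX_\vl)\leq m b_x^2$ and $\|\mX_\vl^\top\mX_\vl\|_{2\to 2}\leq M^{\max}_{2,X}$. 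This produces the bound $\E\max_\vl A_\vl\leq\frac{m\log|G|}{\beta}+\frac{m b_x^2}{1-8\beta M^{\max}_{2,X}}$, and optimizing $\beta\in(0,1/(8M^{\max}_{2,X}))$ yields the advertised $g(\mX)^2$ expression.

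The main obstacle is the use of contraction with the \emph{non-monotone} function $G(x)=\exp(\beta x^2)$: Lemma~\ref{lem:contraction} as stated needs $G$ increasing on all of $\R$. Since $\sigma$ need not be odd (e.g.\ ReLU), the symmetry $\vw\to-\vw$ does not convert $\sup f$ into $\sup|f|$ for free. The cleanest fix I see is to write $\sup_\vw f(\vw)^2\leq(\sup_\vw f)^2+(\sup_\vw(-f))^2$, bound each term by applying the contraction lemma's monotone-$G$ version on $[0,\infty)$ (both suprema are non-negative because $\vzero$ lies in the feasible set), and invoke the distributional symmetry $\vepsilon\stackrel{d}{=}-\vepsilon$ to equate the two MGF bounds. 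A careful accounting of the resulting constants will then pin down the exact form of $g(\mX)$ stated in the theorem.
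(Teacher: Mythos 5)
Your proposal is correct and takes essentially the same route as the paper's proof: the same swap of $\sigma$ with the max (using monotonicity), the same channel-peeling via positive homogeneity, the same reduction of max pooling to a maximum over the $|G|^m$ group-augmented datasets $\mX_{\vl}$, the same soft-max/union-bound combined with Lemma~\ref{lem:log_partition_function}, and the same optimization over $\beta$. The only deviation is the ordering of the contraction step --- you remove $\sigma$ inside the moment generating function with $G(x)=\exp(\beta x^2)$, whereas the paper removes it earlier with $G(x)=x^2$ in \eqref{ineq:max_pooling_crucial_step} --- and the monotonicity caveat you carefully flag applies equally to the paper's own step (which glosses over it); your fix, or composing $G$ with $\max(\cdot,0)$, resolves it with at most a negligible change of constants.
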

The proof for max pooling leverages new techniques and is presented in Appendix \ref{app:proof_maxpooling}. The generalization bound in the above theorem is completely dimension-free (apart from a logarithmic dependence on the group size in $g(\mX)$). The norm dependency is also quite minimal. The bound merely depends on the Euclidean norm of parameters per layer. Note that the norm is computed for the convolutional kernel $\vw$, which is tighter than both spectral and Frobenius norm of the respective matrix $\mW$. Finally, there is no dependency on the dimension or the number of input and output channels ($c_0,c_1$). The term $M^{\max}_{2,X}$ is new compared to other results in the literature. This term essentially captures the covariance of the data in the quotient feature space. The term $M^{\max}_{2,X}$ is defined in terms of the spectral norm, which is lower-bounded by the norm of column vectors, namely $b_x$. See Appendix \ref{app:proof_maxpooling} for more discussions on $M^{\max}_{2,X}$.

We start from \eqref{ineq:first_peel_off}, where the first layer is peeled off. 
One of the key ideas behind the proof is that
if $\sigma(\cdot)$ is positively homogeneous and non-decreasing, the activation $\sigma(\cdot)$ and the $\max$ operation can swap their place. We have:
 \begin{align*}
\E_\vepsilon \paran{\sup_{\vw} \norm{ \sum_{i=1}^m \eps_i P\circ\sigma(\mW\vx_i)}}
&=
\E_\vepsilon 
\paran{
\sup_{\vw}
\norm{
 \sum_{i=1}^m \eps_i
\begin{pmatrix}
 \max\sigma\left(\sum_{k=1}^{c_0}\vw_{(1,k)} \conv_G \vx_i{(k)}\right)\\
 \vdots\\
   \max\sigma\left(\sum_{k=1}^{c_0}\vw_{(c_1,k)} \conv_G \vx_i{(k)}\right)
\end{pmatrix}
}
}
\\
&=
\E_\vepsilon 
\paran{
\sup_{\vw}
\norm{
 \sum_{i=1}^m \eps_i
\begin{pmatrix}
 \sigma\left(\max\sum_{k=1}^{c_0}\vw_{(1,k)} \conv_G \vx_i{(k)}\right)\\
 \vdots\\
   \sigma\left(\max\sum_{k=1}^{c_0}\vw_{(c_1,k)} \conv_G \vx_i{(k)}\right)
\end{pmatrix}
}
}
\end{align*} 
Then, similar to the previous proof, we use the following moment inequality:
 \begin{align*}
&\E_\vepsilon 
\paran{
\sup_{\vw}
\norm{
 \sum_{i=1}^m \eps_i
\begin{pmatrix}
 \sigma\left(\max\sum_{k=1}^{c_0}\vw_{(1,k)} \conv_G \vx_i{(k)}\right)\\
 \vdots\\
   \sigma\left(\max\sum_{k=1}^{c_0}\vw_{(c_1,k)} \conv_G \vx_i{(k)}\right)
\end{pmatrix}
}
}\\
&\quad\quad\quad\leq \sqrt{
\E_\vepsilon 
\paran{
\sup_{\vw}
\norm{
 \sum_{i=1}^m \eps_i
\begin{pmatrix}
 \sigma\left(\max\sum_{k=1}^{c_0}\vw_{(1,k)} \conv_G \vx_i{(k)}\right)\\
 \vdots\\
   \sigma\left(\max\sum_{k=1}^{c_0}\vw_{(c_1,k)} \conv_G \vx_i{(k)}\right)
\end{pmatrix}
}^2
}
}
\end{align*} 
We can then use the peeling-off technique similar to the proof above for the term inside the square root:

 \begin{align*}
\E_\vepsilon 
&\paran{
\sup_{\vw}
\norm{
 \sum_{i=1}^m \eps_i
\begin{pmatrix}
 \sigma\left(\max\sum_{k=1}^{c_0}\vw_{(1,k)} \conv_G \vx_i{(k)}\right)\\
 \vdots\\
   \sigma\left(\max\sum_{k=1}^{c_0}\vw_{(c_1,k)} \conv_G \vx_i{(k)}\right)
\end{pmatrix}
}^2
}\\
&=
\E_\vepsilon 
\paran{
\sup_{\vw}
\norm{
 \sum_{i=1}^m \eps_i
\begin{pmatrix}
 \sigma\left(\max_{l\in[|G|]}\sum_{k=1}^{c_0}\vw_{(1,k)}^\top  \Pi_l\vx_i{(k)}\right)\\
 \vdots\\
   \sigma\left(\max_{l\in[|G|]}\sum_{k=1}^{c_0}\vw_{(c_1,k)}^\top\Pi_l \vx_i{(k)}\right)
\end{pmatrix}
}^2
}\\
& = \E_\vepsilon 
\paran{
\sup_{\vw}
\sum_{j=1}^{c_1}
\paran{
 \sum_{i=1}^m \eps_i 
  \sigma\left(\max_{l\in[|G|]}\sum_{k=1}^{c_0}\vw_{(j,k)}^\top  \Pi_l\vx_i{(k)}\right)
}^2
}  
\\
&\leq M_2^2
{
\E_\vepsilon 
\paran{
\sup_{\vw:\norm{\vw}\leq 1 }
\paran{
 \sum_{i=1}^m \eps_i\sigma\left(
 \max_{l\in[|G|]}\sum_{k=1}^{c_0}\vw_{(k)}^\top \Pi_l \vx_i{(k)}\right)
}^2
}
}
\end{align*} 
To continue, we define $\vl\in[|G|]^m$ as the vector of $(l_1,\dots,l_m)$ with $l_i\in[|G|]$. Now, we can first use the contraction inequality and some other standard techniques to get the following:
\begin{align}
\E_\vepsilon &
\paran{
\sup_{\vw:\norm{\vw}\leq 1 }
\paran{
 \sum_{i=1}^m \eps_i\sigma\left(
 \max_{l\in[|G|]}\sum_{k=1}^{c_0}\vw_{(k)}^\top \Pi_l \vx_i{(k)}\right)
}^2
}
\nonumber\\
&\leq 
\E_\vepsilon 
\paran{
\sup_{\vw:\norm{\vw}\leq 1 }
\paran{
 \sum_{i=1}^m \eps_i\max_{l\in[|G|]}\left(
 \sum_{k=1}^{c_0}\vw_{(k)}^\top \Pi_l \vx_i{(k)}\right)
}^2
}
\label{ineq:max_pooling_crucial_step}
\\
&\leq
\E_\vepsilon 
\paran{
\sup_{\vw:\norm{\vw}\leq 1 }
\max_{\vl}
\paran{
 \sum_{i=1}^m \eps_i\left(
 \sum_{k=1}^{c_0}\vw_{(k)}^\top \Pi_{l_i} \vx_i{(k)}\right)
}^2
}\nonumber\\
&=
\E_\vepsilon 
\paran{
\sup_{\vw:\norm{\vw}\leq 1 }
\max_{ \vl}
\paran{
 \sum_{k=1}^{c_0}\vw_{(k)}^\top
 \left(\sum_{i=1}^m \eps_i \Pi_{l_i} \vx_i{(k)}\right)
}^2
}
\nonumber\\
&\leq
\E_\vepsilon 
\paran{
\max_{\vl}
\norm{
 \sum_{i=1}^m \eps_i \Pi_{l_i}\vx_i
}^2
}\nonumber
\end{align} 
where we abuse the notation $\Pi_{l_i}\vx_i$ to denote the channel-wise application of the permutation to $\vx_i$ (assuming $\vx_i$ is shaped as $|G|\times c_0$). We use the same term $\Pi_{l_i}\vx_i$  to denote its vectorized version to avoid the notation overhead.

Now, see that for $\beta>0$, :
\begin{align*}
   \beta \E_\vepsilon 
\paran{
\max_{\vl}
\norm{
 \sum_{i=1}^m \eps_i \Pi_{l_i} \vx_i
}^2
} &  = \E_\vepsilon 
\paran{ \log \exp
\max_{\vl}
\beta\norm{
 \sum_{i=1}^m \eps_i \Pi_{l_i} \vx_i
}^2
}\\
& \leq \log \E_\vepsilon 
\paran{
\max_{\vl}
\exp\beta\norm{
 \sum_{i=1}^m \eps_i \Pi_{l_i} \vx_i
}^2
}\\
& \leq \log \E_\vepsilon 
\paran{
\sum_{\vl}
\exp\beta\norm{
 \sum_{i=1}^m \eps_i \Pi_{l_i} \vx_i
}^2
}
\end{align*}
We can now use the Lemma \ref{lem:log_partition_function}. Note that, in our case, we have:
\[
\norm{
 \sum_{i=1}^m \eps_i \Pi_{l_i} \vx_i
}^2 = \norm{ \mX_{\vl}\vepsilon
 }^2 = \vepsilon^\top \mX_{\vl}^\top\mX_{\vl} \vepsilon,
\]
where $\vepsilon = (\epsilon_1,\dots,\epsilon_m)$, $\vl=(l_1,\dots, l_m) $, and $\mX_{\vl}=\paran{ \Pi_{l_1} \vx_1 \dots  \Pi_{l_m} \vx_m}$. We now choose $\mB=\mX_{\vl}^\top\mX_{\vl}$ in Lemma \ref{lem:log_partition_function} to get:
\begin{align*}
\log \E_\vepsilon 
\paran{
\sum_{\vl}
\exp\beta\norm{
 \sum_{i=1}^m \eps_i \Pi_{l_i} \vx_i
}^2
} & \leq 
\log \paran{
\sum_{\vl}
\exp\paran{
\frac{
\beta \Tr(\mX_{\vl}^\top\mX_{\vl})
}
{
1-8\beta\norm{\mX_{\vl}^\top\mX_{\vl}}_{2\to 2}
}
}
}
\\
& =
\log \paran{
\sum_{\vl}
\exp\paran{
\frac{
\beta \Tr(\mX^\top\mX)
}
{
1-8\beta\norm{\mX_{\vl}^\top\mX_{\vl}}_{2\to 2}
}
}
}
\\
& \leq
\log \paran{
\sum_{\vl}
\exp\paran{
\frac{
\beta mb_x^2
}
{
1-8\beta M^{\max}_{2,X}
}
}
}
\\
& \leq
\log \paran{
|G|^m
\exp\paran{
\frac{
\beta mb_x^2
}
{
1-8\beta M^{\max}_{2,X}
}
}
}
\end{align*}
where $M^{\max}_{2,X} = \max_{\vl}
\norm{\mX_{\vl}^\top\mX_{\vl}}_{2\to 2}$. Note that we have assumed $8\beta M^{\max}_{2,X} \leq 1$. We then have: 
\begin{align*}
    \E_\vepsilon 
\paran{
\max_{\vl}
\norm{
 \sum_{i=1}^m \eps_i \Pi_{l_i} \vx_i
}^2
} & \leq
\frac{m\log(|G|)}{\beta} +
\frac{
mb_x^2
}
{
1-8\beta M^{\max}_{2,X}
}
\end{align*}
Note that
\[
\max_{\beta} \frac{a}{\beta}+\frac{b}{1-c\beta} = ac + b + 2\sqrt{abc},
\]
obtained for $\beta = 1/\paran{c+\sqrt{bc/a}}$. We need to assume that $8\beta M^{\max}_{2,X} \leq 1$, which means:
\[
8 M^{\max}_{2,X} \leq\paran{
8 M^{\max}_{2,X}  + \sqrt{8 M^{\max}_{2,X} b_x^2/\log|G|}
}.
\]
This is always true, so choosing $\beta$ accordingly, we have:
\begin{align*}
    \E_\vepsilon 
\paran{
\max_{\vl}
\norm{
 \sum_{i=1}^m \eps_i \Pi_{l_i} \vx_i
}^2
} & \leq
8m\log(|G|) M^{\max}_{2,X} +  mb_x^2 + m\sqrt{8\log(|G|) M^{\max}_{2,X} b_x^2},
\end{align*}
which gives us the total bound:
\[
\frac{M_1M_2\sqrt{8\log(|G|) M^{\max}_{2,X} +  b_x^2 + \sqrt{8\log(|G|) M^{\max}_{2,X} b_x^2}}}
{\sqrt{m}} 
\]
\begin{remark}
Ignoring the term $M^{\max}_{2,X}$, the bound is dimension-free. The term $M^{\max}_{2,X}$ depends on the data matrix's spectral norm with normalized columns. Remember that:
 \[
 M^{\max}_{2,X} = \max_{\vl}
\norm{\mX_{\vl}^\top\mX_{\vl}}_{2\to 2}.
\]
The spectral norm of $\mX^\top\mX$ is the spectral norm of the sample covariance matrix of the data. 
Consider an extreme case of i.i.d data samples with the diagonal covariance matrix; the spectral norm of $\mX^\top\mX$ will be $\gO(m)$ as $m\to\infty$. The situation remains the same for an arbitrary covariance matrix. In other words, this bound is loose for large samples. However, in small samples, the spectral norm can be smaller (for example, assume $\mX$ is approximately an orthogonal matrix). 

It is important to note that the  term $M^{\max}_{2,X}$ appears for bounding $\E_\vepsilon 
\paran{
\max_{\vl}
\norm{
 \sum_{i=1}^m \eps_i \Pi_{l_i} \vx_i
}^2
}$, which itself appeared in the step \eqref{ineq:max_pooling_crucial_step}. We can consider max-pooling as a Lipschitz pooling operation and use techniques similar to \cite{vardi_sample_2022,graf_measuring_2022} to bound the term. However, this would incur an additional logarithmic dependence on the input dimension and $m$, but it will be tighter as $m\to\infty$. Ideally, the ultimate generalization bound should be the minimum of these two cases.
\end{remark}
\subsection{General Pooling}
\label{app:proof_general_pooling}
In this section, we consider the case of general pooling. The proof steps are similar to the classical average pooling case, so we present it more compactly. We first peel off the last layer and use the moment inequality:
\begin{align*}
    \E_\vepsilon \paran{\sup_{\vw} \norm{ \sum_{i=1}^m \eps_i P\circ\sigma(\mW\vx_i)}}
&=
\E_\vepsilon 
\paran{
\sup_{\vw}
\norm{
 \sum_{i=1}^m \eps_i
\begin{pmatrix}
 \phi\paran{\frac{1}{|G|}\mathbf{1}^\top\sigma\left(\sum_{k=1}^{c_0}\vw_{(1,k)} \conv_G \vx_i{(k)}\right)}\\
 \vdots\\
\phi\paran{\frac{1}{|G|}\mathbf{1}^\top\sigma\left(\sum_{k=1}^{c_0}\vw_{(c_1,k)} \conv_G \vx_i{(k)}\right)}
\end{pmatrix}
}
}
\\
&\leq
\sqrt{
\E_\vepsilon 
\paran{
\sup_{\vw}
\norm{
 \sum_{i=1}^m \eps_i
\begin{pmatrix}
 \phi\paran{\frac{1}{|G|}\mathbf{1}^\top\sigma\left(\sum_{k=1}^{c_0}\vw_{(1,k)} \conv_G \vx_i{(k)}\right)}\\
 \vdots\\
\phi\paran{\frac{1}{|G|}\mathbf{1}^\top\sigma\left(\sum_{k=1}^{c_0}\vw_{(c_1,k)} \conv_G \vx_i{(k)}\right)}
\end{pmatrix}
}^2
}
}.
\end{align*}
The next step is to use the peeling argument and contraction inequality step by step:
\begin{align*}
\E_\vepsilon &
\paran{
\sup_{\vw}
\norm{
 \sum_{i=1}^m \eps_i
\begin{pmatrix}
 \phi\paran{\frac{1}{|G|}\mathbf{1}^\top\sigma\left(\sum_{k=1}^{c_0}\vw_{(1,k)} \conv_G \vx_i{(k)}\right)}\\
 \vdots\\
\phi\paran{\frac{1}{|G|}\mathbf{1}^\top\sigma\left(\sum_{k=1}^{c_0}\vw_{(c_1,k)} \conv_G \vx_i{(k)}\right)}
\end{pmatrix}
}^2
} \\
&
= \E_\vepsilon 
\paran{
\sup_{\vw}
\sum_{j=1}^{c_1} 
\paran{
 \sum_{i=1}^m \eps_i  \phi\paran{\frac{1}{|G|}\mathbf{1}^\top\sigma\left(\sum_{k=1}^{c_0}\vw_{(j,k)} \conv_G \vx_i{(k)}\right)}
}^2
}
\\
&
= \E_\vepsilon 
\paran{
\sup_{\vw}
\sum_{j=1}^{c_1} \norm{\vw_{(j,:)}}^2
\paran{
 \sum_{i=1}^m \eps_i \phi\paran{\frac{1}{|G|}\mathbf{1}^\top\sigma\left(
 \frac{1}{\norm{\vw_{(j,:)}}}
 \sum_{k=1}^{c_0}\vw_{(j,k)} \conv_G \vx_i{(k)}\right)}
}^2
}
\\
&
\leq M^2_2 \E_\vepsilon 
\paran{
\sup_{\vw}
\paran{
 \sum_{i=1}^m \eps_i \phi\paran{\frac{1}{|G|}\mathbf{1}^\top\sigma\left(
 \frac{1}{\norm{\vw}}
 \sum_{k=1}^{c_0}\vw_{(k)} \conv_G \vx_i{(k)}\right)}
}^2
}
\\
&
\leq M^2_2 \E_\vepsilon 
\paran{
\sup_{\vw}
\paran{
 \sum_{i=1}^m \eps_i \frac{1}{|G|}\mathbf{1}^\top\sigma\left(
 \frac{1}{\norm{\vw}}
 \sum_{k=1}^{c_0}\vw_{(k)} \conv_G \vx_i{(k)}\right)
}^2
}
\\
&
\leq M^2_2 \frac{1}{|G|} \sum_{l=1}^{|G|}\E_\vepsilon 
\paran{
\sup_{\vw}
\paran{
 \sum_{i=1}^m \eps_i \sigma\left(
 \frac{1}{\norm{\vw}}
 \sum_{k=1}^{c_0}\vw_{(k)}^\top\Pi_l \vx_i{(k)}\right)
}^2
}
\end{align*}
And then we can re-use what we proved in \eqref{ineq:proof_final_step} to get:
\begin{align*}
    \E_\vepsilon 
 &\paran{
\sup_{\vw}
\paran{
 \sum_{i=1}^m \eps_i \sigma\left(
 \frac{1}{\norm{\vw}}
 \sum_{k=1}^{c_0}\vw_{(k)}^\top\Pi_l \vx_i{(k)}\right)
}^2
} \\
&=\E_\vepsilon 
 \paran{
\sup_{\vw}
\paran{
 \sum_{i=1}^m \eps_i \left(
 \frac{1}{\norm{\vw}}
 \sum_{k=1}^{c_0}\vw_{(k)}^\top\Pi_l \vx_i{(k)}\right)
}^2
} \leq mb_x^2.
\end{align*}
Putting all of this together, we get the final theorem.
{
\subsection{Proof for Multi-Layer Group Convolutional Networks}
\label{app:proof_multi_layer_networks}
{In this section, we prove Theorem \ref{thm:gcnn_general_pooling_L_layer}. We start again by simply peeling off the last layer:}
\begin{align}
 {   \E_\vepsilon \paran{\sup_{\vu,\{\vw^{(l)}, l\in[L]\}} \frac 1m \sum_{i=1}^m \eps_i \vu^\top P\circ{\sigma
\paran{
\mW^{L}\vx_i^{(L-1)}
 }
 }}}&= {\frac{M_1}{m}\E_\vepsilon \paran{\sup_{\{\vw^{(l)}, l\in[L]\}}\norm{ \sum_{i=1}^m \eps_i P\circ\sigma(\mW\vx^{(L-1)}_i)}},}
\end{align}{where, for brevity, we use $\vx^{(L-1)}$ to denote the output of the hidden layer $L-1$. We can expand the average pooling operation and use the positive homogeneity property of ReLU function to }
\begin{align*}
{   \E_\vepsilon \paran{\sup_{\{\vw^{(l)}, l\in[L]\}}\norm{ \sum_{i=1}^m \eps_i P\circ\sigma(\mW\vx^{(L-1)}_i)}} \leq \sum_{g=1}^{|G|}\frac{1}{|G|}
 \E_\vepsilon 
\paran{
\sup_{\{\vw^{(l)}, l\in[L]\}} 
\norm{
 \sum_{i=1}^m \eps_i
 \begin{pmatrix}
\sigma\left(\sum_{k=1}^{c_{L-1}}\vw_{(1,k)}^{(L),\top} \Pi_g \vx^{(L-1)}_i{(k)}\right)\\
 \vdots\\
\sigma\left(\sum_{k=1}^{c_{L-1}}\vw_{(c_L,k)}^{(L),\top} \Pi_g \vx^{(L-1)}_i{(k)}\right)\\
\end{pmatrix}
}
}}
\end{align*}
and then, we focus on a single term and use simple Jensen to get:
\begin{align*}
\E_\vepsilon &
\paran{
\sup_{\{\vw^{(l)}, l\in[L]\}} 
\norm{
 \sum_{i=1}^m \eps_i
 \begin{pmatrix}
\sigma\left(\sum_{k=1}^{c_{L-1}}\vw_{(1,k)}^{(L),\top} \Pi_g \vx^{(L-1)}_i{(k)}\right)\\
 \vdots\\
\sigma\left(\sum_{k=1}^{c_{L-1}}\vw_{(c_L,k)}^{(L),\top} \Pi_g \vx^{(L-1)}_i{(k)}\right)\\
\end{pmatrix}
}
} \\
& \leq \sqrt{
\E_\vepsilon 
\paran{
\sup_{\{\vw^{(l)}, l\in[L]\}} 
\norm{
 \sum_{i=1}^m \eps_i
 \begin{pmatrix}
\sigma\left(\sum_{k=1}^{c_{L-1}}\vw_{(1,k)}^{(L),\top} \Pi_g \vx^{(L-1)}_i{(k)}\right)\\
 \vdots\\
\sigma\left(\sum_{k=1}^{c_{L-1}}\vw_{(c_L,k)}^{(L),\top} \Pi_g \vx^{(L-1)}_i{(k)}\right)\\
\end{pmatrix}
}^2
}
},
\end{align*}
{and we can continue with a similar approach to simplify this further:}
\begin{align*}
\E_\vepsilon &
\paran{
\sup_{\{\vw^{(l)}, l\in[L]\}} 
\norm{
 \sum_{i=1}^m \eps_i
 \begin{pmatrix}
\sigma\left(\sum_{k=1}^{c_{L-1}}\vw_{(1,k)}^{(L),\top} \Pi_g \vx^{(L-1)}_i{(k)}\right)\\
 \vdots\\
\sigma\left(\sum_{k=1}^{c_{L-1}}\vw_{(c_L,k)}^{(L),\top} \Pi_g \vx^{(L-1)}_i{(k)}\right)\\
\end{pmatrix}
}^2
}\\
&=
\E_\vepsilon 
\paran{
\sup_{\{\vw^{(l)}, l\in[L]\}} 
\sum_{j=1}^{c_L} {\norm{\vw^{(L)}_{(j,:)}}^2 }\paran{
 \sum_{i=1}^m \eps_i
\sigma\left(\frac{1}{\norm{\vw^{(L)}_{(j,:)}} }\sum_{k=1}^{c_{L-1}}\vw_{(j,k)}^{(L),\top} \Pi_g \vx^{(L-1)}_i{(k)}\right)
}^2
}\\
&\leq M_L^2
\E_\vepsilon 
\paran{
\sup_{\{\vw^{(l)}, l\in[L-1], \tilde{\vw^{(L)}}: \norm{\tilde{\vw^{(L)}}}\leq 1 \}} \paran{
 \sum_{i=1}^m \eps_i
\sigma\left(\sum_{k=1}^{c_{L-1}}\tilde{\vw}_{(k)}^{(L),\top} \Pi_g \vx^{(L-1)}_i{(k)}\right)
}^2
}\\
&\leq M_L^2
\E_\vepsilon 
\paran{
\sup_{\{\vw^{(l)}, l\in[L-1] \}} {
\norm{ \sum_{i=1}^m \eps_i
\left( \vx^{(L-1)}_i\right)
}^2}
},
\end{align*}
{where $\Pi_g$ is a unitary matrix and was removed. Now, we have removed the last layer from the bound, and we can focus on the rest of layers.}
\begin{align*}
    \E_\vepsilon & 
\paran{
\sup_{\{\vw^{(l)}, l\in[L-1] \}} {
\norm{ \sum_{i=1}^m \eps_i
\left( \vx^{(L-1)}_i\right)
}^2}
}\\
& =\E_\vepsilon  
\paran{
\sup_{\{\vw^{(l)}, l\in[L-1]\}} 
\norm{
 \sum_{i=1}^m \eps_i
 \begin{pmatrix}
\sigma\left(\sum_{k=1}^{c_{L-2}}\vw_{(1,k)}^{(L-1),\top} \conv \vx^{(L-2)}_i{(k)}\right)\\
 \vdots\\
\sigma\left(\sum_{k=1}^{c_{L-2}}\vw_{(c_{L-1},k)}^{(L-1),\top} \conv \vx^{(L-2)}_i{(k)}\right)\\
\end{pmatrix}
}^2
} \\
& =
\E_\vepsilon 
\paran{
\sup_{\{\vw^{(l)}, l\in[L-1]\}} 
\sum_{g=1}^{|G|}\sum_{j=1}^{c_{L-1}} {\norm{\vw^{(L-1)}_{(j,:)}}^2 }\paran{
 \sum_{i=1}^m \eps_i
\sigma\left(\frac{1}{\norm{\vw^{(L-1)}_{(j,:)}} }\sum_{k=1}^{c_{L-2}}\vw_{(j,k)}^{(L-1),\top} \Pi_g \vx^{(L-2)}_i{(k)}\right)
}^2
}
 \\
& \leq 
\sum_{g=1}^{|G|} \E_\vepsilon 
\paran{
\sup_{\{\vw^{(l)}, l\in[L-1]\}} 
\sum_{j=1}^{c_{L-1}} {\norm{\vw^{(L-1)}_{(j,:)}}^2 }\paran{
 \sum_{i=1}^m \eps_i
\sigma\left(\frac{1}{\norm{\vw^{(L-1)}_{(j,:)}} }\sum_{k=1}^{c_{L-2}}\vw_{(j,k)}^{(L-1),\top} \Pi_g \vx^{(L-2)}_i{(k)}\right)
}^2
}\\
&\leq 
|G| M_{L-1}^2
\E_\vepsilon 
\paran{
\sup_{\{\vw^{(l)}, l\in[L-2] \}} {
\norm{ \sum_{i=1}^m \eps_i
\left( \vx^{(L-2)}_i\right)
}^2}
}.
\end{align*}
{Note that the last step involves the very same peeling argument for each term in the sum, and we removed it. Doing this iteratively, we can peel off all the layers, and get the final result. It is worth noting that the authors \cite{golowich_size-independent_2018} provide a way of converting exponential depth dependence to polynomial dependence. Using this technique we can change the depth dependence from $|G|^{L-1/2}$ to $(L-1)\log |G| $ for $L>1$. See Section 3 of \cite{golowich_size-independent_2018} for more discussions. }
}
\subsubsection{A generalization bound for gradual pooling}
In this part, we consider a version of the multi-layer network with gradual pooling. We consider the following model:
\begin{equation}
\hat{h}_{\vu,\{\vw^{(l)},l\in[L]\}}(\vx) \defeq \vu^\top P_L\circ{\sigma
(\mW^{(L)} \sigma
(\mW^{(L-1)} \dots P_1\circ \sigma (\mW^{(1)})\vx \dots ) },
\label{def:gcnn_Llayer_gradual}    
\end{equation}
where the pooling operation changes the group size of each layer gradually to 1 as follows:
\[
\hat{G}_0=G 
\xlongrightarrow{P_1 (G_1 = \hat{G}_0/\hat{G}_1)} \hat{G}_1
\xlongrightarrow{P_2 (G_2 = \hat{G}_1/\hat{G}_2)}
\hat{G}_2 \longrightarrow \dots 
\hat{G}_{L-1} 
\xlongrightarrow{P_L (G_L= \hat{G}_{L-1}/\hat{G}_L}
\hat{G}_L  = 1.
\]
In other words,  the pooling layer at layer $l$ pools from the cosets  $G_{l}:=\hat{G}_{l-1}/\hat{G}_{l}$, for example by averaging out the elements of each left cosets. It can be seen as moving the operation from the group $\hat{G}_{l-1}$ to $\hat{G}_{l}$. This also means that: 
\[
G= |\hat{G}_0| = |G_1||\hat{G}_1| = |G_1||G_2||\hat{G}_2| = \dots = |G_{1}|\times\dots\times |G_{L}|.
\]
Define the corresponding hypothesis space of functions with gradual pooling as follows:
    \begin{equation}
    {\gH^{(L - g.p.)} \defeq \left\{
    \hat{h}_{\vu,\{\vw^{(l)},l\in[L]\}}: \norm{\vu} \leq M_{1}, \norm{\vw_i}\leq M_{i+1}, i\in[L] \right\}.}
    \label{def:gcnn_hyp_space_L_layer_gradual_pooling}
\end{equation}

We have the following theorem for this network.
\begin{theorem}
{Consider the hypothesis space 
in \eqref{def:gcnn_hyp_space_L_layer_gradual_pooling}
consisting of functions defined in \eqref{def:gcnn_Llayer_gradual}. With probability at least $1-\delta$ and for all $h\in\gH^{(L - g.p.)}$, we have:}
    \[
    {\Ltrue(h) \leq \Lemp(h) + 2\paran{\prod_{l=1}^L |\hat{G}_{l}|} \frac{b_x  M_1 M_2\dots M_{L+1}}{\sqrt{m}} + 4 \sqrt{\frac{2\log(4/\delta)}{m}}. }
    \]
\label{thm:gcnn_general_gradual_pooling_L_layer}
\end{theorem}

\begin{proof}  
For the rest of the proof, for simplicity, we assume that the elements of $G_l=\hat{G}_{l-1}/\hat{G}_{l}$ is represented by a member in $\hat{G}_{l-1}$ such that any element in $\hat{G}_{l-1}$ can be written uniquely as $g'.g$ where $g' =\hat{G}_{l}, g\in {G}_{l}$.

The key for the proof is the intuitive observation that at each layer, we could use the group size utilized at that layer, more formally: 
\begin{align*}
    \E_\vepsilon  &
\paran{
\sup_{\{\vw^{(l)}, l\in[L-1]\}} 
\norm{
 \sum_{i=1}^m \eps_i P_{L-1}\circ\sigma\left(\sum_{k=1}^{c_{L-2}}\vw_{(1,k)}^{(L-1)} \conv \vx^{(L-2)}_i{(k)}\right)
}^2
} 
\\
 & =  \E_\vepsilon  
\paran{
\sup_{\{\vw^{(l)}, l\in[L-1]\}} \sum_{g'\in \hat{G_{L-1}}} 
\norm{
 \sum_{i=1}^m \eps_i \frac{1}{G_{L-1}}\sum_{g=1}^{G_{L-1}} \sigma\left(\sum_{k=1}^{c_{L-2}}\vw_{(1,k)}^{(L-1),\top} \Pi^{L-1}_{g'g} \vx^{(L-2)}_i{(k)}\right)
}^2
}
\\
 &  \leq  \frac{1}{G_{L-1}} \E_\vepsilon  
\paran{
\sup_{\{\vw^{(l)}, l\in[L-1]\}} \sum_{g'\in \hat{G_{L-1}}} \sum_{g=1}^{G_{L-1}}
\norm{
 \sum_{i=1}^m \eps_i  \sigma\left(\sum_{k=1}^{c_{L-2}}\vw_{(1,k)}^{(L-1),\top} \Pi^{L-1}_{g'g} \vx^{(L-2)}_i{(k)}\right)
}^2
}
\\
 & \leq  \frac{1}{G_{L-1}} \E_\vepsilon  
\paran{
\sup_{\{\vw^{(l)}, l\in[L-1]\}} \sum_{g\in \hat{G_{L-2}}} 
\norm{
 \sum_{i=1}^m \eps_i  \sigma\left(\sum_{k=1}^{c_{L-2}}\vw_{(1,k)}^{(L-1),\top} \Pi^{L-1}_{g} \vx^{(L-2)}_i{(k)}\right)
}^2
}
\\
& \leq  \frac{\hat{G}_{L-2}}{G_{L-1}} \norm{\vw_{(1,:)}^{(L-1)}}^2 \E_\vepsilon 
\paran{
\sup_{\{\vw^{(l)}, l\in[L-2] \}} {
\norm{ \sum_{i=1}^m \eps_i
\left( \vx^{(L-2)}_i\right)
}^2}
}
\\
& \leq  \hat{G}_{L-1} \norm{ \vw_{(1,:)}^{(L-1)} }^2 \E_\vepsilon 
\paran{
\sup_{\{\vw^{(l)}, l\in[L-2] \}} {
\norm{ \sum_{i=1}^m \eps_i
\left( \vx^{(L-2)}_i\right)
}^2}}.
\end{align*}
Using the above argument, we can revisit the proof above and see that:
\begin{align*}
    \E_\vepsilon & 
\paran{
\sup_{\{\vw^{(l)}, l\in[L-1] \}} {
\norm{ \sum_{i=1}^m \eps_i
 \vx^{(L-1)}_i 
}^2}
}\\
& =\E_\vepsilon  
\paran{
\sup_{\{\vw^{(l)}, l\in[L-1]\}} 
\norm{
 \sum_{i=1}^m \eps_i
 \begin{pmatrix}
P_{L-1}\circ\sigma\left(\sum_{k=1}^{c_{L-2}}\vw_{(1,k)}^{(L-1),\top} \conv \vx^{(L-2)}_i{(k)}\right)\\
 \vdots\\
P_{L-1}\circ\sigma\left(\sum_{k=1}^{c_{L-2}}\vw_{(c_{L-1},k)}^{(L-1),\top} \conv \vx^{(L-2)}_i{(k)}\right)\\
\end{pmatrix}
}^2
} \\
&\leq 
|\hat{G}_{L-1}| M_{L-1}^2
\E_\vepsilon 
\paran{
\sup_{\{\vw^{(l)}, l\in[L-2] \}} {
\norm{ \sum_{i=1}^m \eps_i
\left( \vx^{(L-2)}_i\right)
}^2}
}.
\end{align*}
We omitted many steps in the proof as they are exactly similar to the proof above.
\end{proof}
\begin{remark}
    We would like to highlight that the gradual pooling breaks the equivariance with respect to the original $G$, and it is not a common practice in geometric deep learning to do it, in contrast with typical convolutional neural networks. 
\end{remark}
\section{General Equivariant Networks}
\label{app:proof_equiv_networks}
The general equivariant networks are defined in Fourier space as follows:
\[
\gH_{\hat{\vu},\hat{\vw}} \defeq  \left\{\hat{\vu}^\top \mQ_2\sigma(\mQ_1\hat{\mW}\hat{\vx})\right\}.
\]
Here, we assumed that the input is already represented in the Fourier space $\hat{\vx}$. The input and hidden layer representations are the direct sums of the group irreps $\psi$ each, respectively, with the multiplicity $m_{0,\psi}$ and $m_{1,\psi}$. Since we are working with the point-wise non-linearity $\sigma$ in the \textit{spatial domain}, two unitary transformations  $\mQ_1$ and  $\mQ_2$ are applied as Fourier transforms from the irrep space to the spatial domain. Finally, to get an invariant function, the vector $\hat{\vu}$ only mixes the frequencies of the trivial representation $\psi_0$, and it is zero otherwise. To use an analogy with group convolutional networks, $\mQ_1$ and $\mQ_2$ are the Fourier matrices, and $\hat{\vu}$ is a combination of the pooling, which projects into the trivial representation of the group, and the last aggregation step. The hypothesis space is represented as
\[
\gH \defeq  \left\{\hat{\vu}^\top \mQ_2\sigma\paran{ \mQ_1\bigoplus_\psi \bigoplus_{i=1}^{m_{1, \psi}} \sum_{j=1}^{m_{0, \psi}}\hat{\mW}(\psi,i,j)\hat{\vx}(\psi,j)}\right\}.
\]
We start the proof as follows:
\begin{align*}
    \E_\vepsilon \paran{\sup_{\hat\vu,\hat\mW} \frac 1m \sum_{i=1}^m \eps_i 
    \hat{\vu}^\top \mQ_2\sigma(\mQ_1\hat{\mW}\hat{\vx}_i)
}&\leq \frac{M_1}{m}\E_\vepsilon \paran{\sup_{\hat\mW} \norm{
\sum_{i=1}^m \eps_i \sigma(\mQ_1\hat{\mW}\hat{\vx}_i)
}}.
 \\
 &\leq \frac{M_1}{m}\paran{\E_\vepsilon \paran{\sup_{\hat\mW} \norm{
 \sum_{i=1}^m \eps_i \sigma(\mQ_1\hat{\mW}\hat{\vx}_i)
 }^2}}^{1/2}
\end{align*}
From this step, we can re-use the pooling operation:
\begin{align*}
\E_\vepsilon & \paran{\sup_{\hat\mW} \norm{
 \sum_{i=1}^m \eps_i \sigma(\mQ_1\hat{\mW}\hat{\vx}_i)
 }^2}\\
&=
\E_\vepsilon 
\paran{
\sup_{\hat\mW} 
\sum_{j} \paran{
 \sum_{i=1}^m \eps_i
\sigma\left(
 \vq_{1,j}^\top\hat{\mW}\hat{\vx}_i
\right)
}^2
}
\\
&=
\E_\vepsilon 
\paran{
\sup_{\hat\mW} 
\sum_{j}  \norm{
 \vq_{1,j}^\top\hat{\mW}
}^2
\paran{
 \sum_{i=1}^m \eps_i
\sigma\left(
\frac{1}{\norm{ \vq_{1,j}^\top\hat{\mW}}}
 \vq_{1,j}^\top\hat{\mW}\hat{\vx}_i
\right)
}^2
}
\\
&\leq
\E_\vepsilon 
\paran{
\sup_{\hat\mW} 
\sum_{j}  \norm{
 \vq_{1,j}^\top\hat{\mW}
}^2
\sup_{\hat\mW, j} 
\paran{
 \sum_{i=1}^m \eps_i
\sigma\left(
\frac{1}{\norm{ \vq_{1,j}^\top\hat{\mW}}}
 \vq_{1,j}^\top\hat{\mW}\hat{\vx}_i
\right)
}^2
}
\\
&\leq
M_2^2 \E_\vepsilon 
\paran{
\sup_{\hat\mW, j} 
\paran{
 \sum_{i=1}^m \eps_i
\sigma\left(
\frac{1}{\norm{ \vq_{1,j}^\top\hat{\mW}}}
 \vq_{1,j}^\top\hat{\mW}\hat{\vx}_i
\right)
}^2
}
\end{align*}
Note that we used the following:
\[
\sum_{j=1}^{c_1}  \norm{
 \vq_{1,j}^\top\hat{\mW}
}^2 = \norm{
 \mQ_{1}\hat{\mW}
}^2 = \norm{\hat{\mW}}^2\leq M_2^2.
\]
Then, using the contraction lemma, we obtain:
\begin{align*}
    \E_\vepsilon 
& \paran{
\sup_{\hat\mW, j} 
\paran{
 \sum_{i=1}^m \eps_i
\sigma\left(
\frac{1}{\norm{ \vq_{1,j}^\top\hat{\mW}}}
 \vq_{1,j}^\top\hat{\mW}\hat{\vx}_i
\right)
}^2
}\\
&\leq 
\E_\vepsilon 
\paran{
\sup_{\hat\mW, j} 
\paran{
 \sum_{i=1}^m \eps_i
\left(
\frac{1}{\norm{ \vq_{1,j}^\top\hat{\mW}}}
 \vq_{1,j}^\top\hat{\mW}\hat{\vx}_i
\right)
}^2
}\\
& =
\E_\vepsilon 
\paran{
\sup_{\hat\mW, j} 
\paran{
\frac{1}{\norm{ \vq_{1,j}^\top\hat{\mW}}}
 \vq_{1,j}^\top\hat{\mW}
\left( \sum_{i=1}^m \eps_i
\hat{\vx}_i
\right)
}^2
}\\
&\leq 
\E_\vepsilon 
\paran{
\norm{
\sum_{i=1}^m \eps_i
\hat{\vx}_i
}^2
} \leq mb_x^2.
\end{align*}
\section{Lower Bound on Rademacher Complexity - Proof of Theorem \ref{thm:rademacher_lower_bound}}
\label{app:lower_bound}
We now provide a lower bound on the Rademacher complexity for average pooling and ReLU activation function. The authors in \cite{bartlett_spectrally-normalized_2017} and \cite{golowich_size-independent_2018} provide lower bounds on the Rademacher complexity. Their results, however, do not apply for group equivariant architectures given the underlying structure of weight kernels. Similar to \cite{golowich_size-independent_2018}, our result holds for a class of data distributions.

As the starting point, we can again peel off of the last layer:

\begin{align}
    \E_\vepsilon \paran{\sup_{\vu,\vw} \frac 1m \sum_{i=1}^m \eps_i \vu^\top P\circ{\sigma
\paran{
\mW\vx
 }
 }}&= \frac{M_1}{m}\E_\vepsilon \paran{\sup_{\vw} \norm{ \sum_{i=1}^m \eps_i P\circ\sigma(\mW\vx_i)}}.
\end{align}
The next step is to find a lower bound on the right hand side. For that, we consider only weight kernels that are non-zero in the first channel and zero otherwise. Plugging in the average pooling, this means:
\begin{align}
    \E_\vepsilon \paran{\sup_{\vw} \norm{ \sum_{i=1}^m \eps_i P\circ\sigma(\mW\vx_i)}} \geq 
    \E_\vepsilon \paran{\sup_{\vw\in\gW_{1}} \norm{ \sum_{i=1}^m \eps_i 
    \begin{pmatrix}
 \frac{1}{|G|}\mathbf{1}^\top\sigma\left(\sum_{k=1}^{c_0}\vw_{(1,k)} \conv_G \vx_i{(k)}\right)\\
 \mathbf{0} \\
 \vdots\\
   \mathbf{0}
\end{pmatrix}
    }} 
\end{align}
where $\gW_{1}$ is the set of weight kernels with zero channels everywhere except the first channel. This can be further simplified to:
\begin{align*}
\E_\vepsilon &\paran{\sup_{\vw\in\gW_{1}} \norm{ \sum_{i=1}^m \eps_i 
\begin{pmatrix}
 \frac{1}{|G|}\mathbf{1}^\top\sigma\left(\sum_{k=1}^{c_0}\vw_{(1,k)} \conv_G \vx_i{(k)}\right)\\
 \mathbf{0} \\
 \vdots\\
   \mathbf{0}
\end{pmatrix}
    }} 
\\
&=
\E_\vepsilon 
\paran{
\sup_{\vw\in\gW_{1}}
\norm{
 \sum_{i=1}^m \eps_i
 \sum_{l=1}^{|G|}\frac{1}{|G|}
\begin{pmatrix}
\sigma\left(\sum_{k=1}^{c_0}\vw_{(1,k)}^\top \Pi_l \vx_i{(k)}\right)\\
 \mathbf{0} \\
 \vdots\\
   \mathbf{0}
\end{pmatrix}
}
}
\\
&=
\E_\vepsilon 
\paran{
\sup_{\vw\in\gW_{1}}
\card{
 \sum_{i=1}^m \eps_i
 \sum_{l=1}^{|G|}\frac{1}{|G|}
\sigma\left(\sum_{k=1}^{c_0}\vw_{(1,k)}^\top \Pi_l \vx_i{(k)}\right)}
}
\end{align*}
We assume data distributions over $\vx$ such that each channel $\vx(k)$ is  supported in a single orthant. Without loss of generality assume that $\vx(k)$'s are supported over the positive orthant, which means that $\mathbf{1}^\top\vx(k) = \norm{\vx(k)}_1$. We also assume that the data points have maximum norm $\norm{\vx_i}=B$. Now consider a subset $\hat{\gW}_{1}^+$ of $\gW_1$ such that $\vw_{(1,k)}=w_{(1,k)}\mathbf{1}$, where $w_{(1,k)}$ is a positive scalar value. With all these assumptions, we get:
\begin{align*}
\E_\vepsilon 
\paran{
\sup_{\vw\in\gW_{1}}
\card{
 \sum_{i=1}^m \eps_i
 \sum_{l=1}^{|G|}\frac{1}{|G|}
\sigma\left(\sum_{k=1}^{c_0}\vw_{(1,k)}^\top \Pi_l \vx_i{(k)}\right)}
} & \geq \E_\vepsilon 
\paran{
\sup_{\vw\in\hat{\gW}_{1}^+}
\card{
 \sum_{i=1}^m \eps_i
 \sum_{l=1}^{|G|}\frac{1}{|G|}
\sigma\left(\sum_{k=1}^{c_0}\vw_{(1,k)}^\top \Pi_l \vx_i{(k)}\right)}
} \\
& = \E_\vepsilon 
\paran{
\sup_{\vw\in\hat{\gW}_{1}^+}
\card{
 \sum_{i=1}^m \eps_i
 \sum_{l=1}^{|G|}\frac{1}{|G|}
\sigma\left(\sum_{k=1}^{c_0}w_{(1,k)}\mathbf{1}^\top \Pi_l \vx_i{(k)}\right)}
} 
\\
& = \E_\vepsilon 
\paran{
\sup_{\vw\in\hat{\gW}_{1}^+}
\card{
 \sum_{i=1}^m \eps_i
 \sum_{l=1}^{|G|}\frac{1}{|G|}
\sigma\left(\sum_{k=1}^{c_0}w_{(1,k)}\mathbf{1}^\top \vx_i{(k)}\right)}
} 
\\
& = \E_\vepsilon 
\paran{
\sup_{\vw\in\hat{\gW}_{1}^+}
\card{
 \sum_{i=1}^m \eps_i
 \left(\sum_{k=1}^{c_0}w_{(1,k)}\mathbf{1}^\top \vx_i{(k)}\right)}
}, 
\end{align*}
where the last step follows from the positiveness of $w_{(1,k)}$'s and the assumption on the data distribution. We can now simplify the last bound further to get:
\begin{align*}
\E_\vepsilon 
\paran{
\sup_{\vw\in\hat{\gW}_{1}^+}
\card{
 \sum_{i=1}^m \eps_i
 \left(\sum_{k=1}^{c_0}w_{(1,k)}\mathbf{1}^\top \vx_i{(k)}\right)}
} &=
\E_\vepsilon 
\paran{
\sup_{\vw\in\hat{\gW}_{1}^+}
\card{
 \sum_{k=1}^{c_0}w_{(1,k)} 
 \left(\sum_{i=1}^m \eps_i\mathbf{1}^\top \vx_i{(k)}\right)}
}\\
&= M_2
\E_\vepsilon 
\paran{
\norm{
 \sum_{i=1}^m \eps_i \begin{pmatrix}
\mathbf{1}^\top \vx_i{(1)}  \\
\vdots\\
\mathbf{1}^\top \vx_i{(c_0)}
 \end{pmatrix}
}
}.
\end{align*}
Using Khintchine's inequality, we can conclude that there is a constant $c>0$ such that 
\begin{align*}
 \E_\vepsilon 
\paran{
\norm{
 \sum_{i=1}^m \eps_i \begin{pmatrix}
\mathbf{1}^\top \vx_i{(1)}  \\
\vdots\\
\mathbf{1}^\top \vx_i{(c_0)}
 \end{pmatrix}
}
}&\geq c \sqrt{
 \sum_{i=1}^m \norm{\begin{pmatrix}
\mathbf{1}^\top \vx_i{(1)}  \\
\vdots\\
\mathbf{1}^\top \vx_i{(c_0)}
 \end{pmatrix}
 }_2^2
} \\
 & = c
 \sqrt{
 \sum_{i=1}^m \sum_{k=1}^{c_0} (\mathbf{1}^\top \vx_i{(k)})^2
} 
\\
 & = c
 \sqrt{
 \sum_{i=1}^m \sum_{k=1}^{c_0} \norm{\vx_i{(k)}}_1^2
} 
\\
 & \geq 
c \sqrt{
 \sum_{i=1}^m \sum_{k=1}^{c_0} \norm{\vx_i{(k)}}_2^2
}  =c \sqrt{
 \sum_{i=1}^m \norm{\vx_i}_2^2 = B \sqrt{m}.
}
\end{align*}
For the last steps, we have used the assumptions on the data distribution. This yields the intended lower bound.

\begin{remark}
The lower bounds on the sample complexity are commonly obtained via fat-shattering dimension as in \cite{vardi_sample_2022}. The construction of input-label samples shattered by non-equivariant networks would not extend to equivariant networks (ENs), as the ENs can only shatter data points that satisfy the exact symmetry. There are works on VC dimension of ENs, however they are not dimension-free and do not include norm bounds similar to ours. 
\end{remark}

\section{Frequency Domain Analysis}
\label{app:frequency_domain}

Let's consider the representation in the frequency domain. The Fourier transform is given by a unitary matrix $\mF$. 
For simplicity, here we assume the simplest setting of a \emph{commutative compact group} $G$, for which we have:
\[
\mF(\vw \conv_G \vx) = \diag(\hat{\vw})\hat{\vx},
\]
where $\hat{\vx}=\mF\vx$, and $\diag(\hat{\vw})$ arises from the Fourier based decomposition of the circulant matrix $\mW$. Each frequency component in $\hat{\vw}$ is an irreducible representation of the group $G$ denoted by $\psi$. 
Commutativity implies that each irrep has a multiplicity equal to $1$ in the Fourier transform. Therefore, using the direct sum notation, we have:
\begin{equation}
\diag(\hat{\vw}) = \bigoplus_\psi \hat{w}(\psi),    
\label{eq:freq_domain_kernel_gcn}
\end{equation}
See the Supplementary Materials~\ref{app:repr_theory}
for more details. Note that the point-wise non-linearity should be applied in the spatial domain, so we need an inverse Fourier transform before activation. The network representation in the Fourier space is given by:
\begin{equation}
h_{\vu,\vw}(\vx)\defeq \vu^\top P\circ{\sigma
{
\begin{pmatrix}
\mF^*\sum_{i=1}^{c_0} \paran{\bigoplus_\psi \hat{w}_{(1,i)}(\psi)}\hat{\vx{(i)}})\\
\vdots\\
\mF^*\sum_{i=1}^{c_0}\paran{\bigoplus_\psi \hat{w}_{(c_1,i)}(\psi)}  \hat{\vx{(i)}})
\end{pmatrix}
 }
 }
\label{def:gcnn_1layer_freq_domain}    
\end{equation}
where $\mF^*$ denotes the conjugate transpose of the unitary matrix $\mF$.
Note that we have not touched the last layers. The last layer merely focuses on getting an invariant representation.

We show that conducting the Rademacher analysis in the frequency domain brings no additional gain. The situation may be different if the input is bandlimited. We summarize this result in the following proposition.

\begin{proposition}
For the hypothesis space $\gH$ defined in \eqref{def:gcnn_hyp_space}, the average pooling operation, $\sigma(\cdot)$ as a $1$-Lipschitz positively homogeneous activation function, the generalization error is bounded as $\gO(\frac{b_x M_1 M_2}{\sqrt{m}})$.
\label{prop:gcnn_avgpooling_freq_domain}
\end{proposition}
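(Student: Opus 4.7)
The plan is to argue that the frequency-domain parametrization in \eqref{def:gcnn_1layer_freq_domain} is unitarily equivalent to the spatial parametrization in \eqref{def:gcnn_1layer}, so the bound reduces to Theorem~\ref{thm:gcnn_avgpooling} rather than needing any new machinery. The whole content of the proposition is that the Fourier basis is just a change of coordinates, and the Euclidean norm constraint we use is unitarily invariant.

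Concretely, the first step is to apply the convolution theorem to each output channel: for every $j \in [c_1]$,
\[
\mF^* \sum_{i=1}^{c_0}\paran{\bigoplus_\psi \hat{w}_{(j,i)}(\psi)} \hat{\vx}(i) \;=\; \sum_{i=1}^{c_0} \vw_{(j,i)} \conv_G \vx(i),
\]
where $\vw_{(j,i)} := \mF^* \hat{\vw}_{(j,i)}$ is the spatial filter corresponding to the frequency-domain parameters $\hat{\vw}_{(j,i)}$. Thus the entire preactivation of the first layer is identical to the spatial form, only indexed through a different parametrization. The second step is to apply Parseval's identity, which gives $\norm{\vw_{(j,i)}} = \norm{\hat{\vw}_{(j,i)}}$ for every $(j,i)$, and therefore $\norm{\vw} = \norm{\hat{\vw}}\leq M_2$ after summing. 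The norm bound on $\vu$ is untouched. Hence the frequency-domain hypothesis class is precisely the spatial-domain class $\gH$ of \eqref{def:gcnn_hyp_space}, and Theorem~\ref{thm:gcnn_avgpooling} delivers the bound $2\frac{b_x M_1 M_2}{\sqrt{m}}$ plus the standard concentration term.

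An alternative route, if one prefers a self-contained derivation in the Fourier domain, is to redo the calculation of Appendix~\ref{app:proof_avgpooling} step by step. After peeling off $\vu$ via \eqref{ineq:first_peel_off}, applying the moment inequality, expanding the average pooling over the $|G|$ permutations of the spatial output, peeling off one channel at a time using positive homogeneity as in \eqref{eq:peeled_off_step}, and invoking the contraction lemma, one reaches a term of the form $\E_\vepsilon \norm{\sum_i \eps_i \mF^*\Pi_l \mF \hat{\vx}_i}^2$. Since $\mF^*\Pi_l \mF$ is unitary for every $l$, this equals $\E_\vepsilon \norm{\sum_i \eps_i \hat{\vx}_i}^2 \leq m b_x^2$, exactly as in \eqref{ineq:proof_final_step}. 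The main (non-)obstacle is simply to verify that the norm constraint on complex Fourier coefficients matches the real Frobenius norm on spatial filters, which is immediate from unitarity of $\mF$; no step of the argument is genuinely harder than its spatial counterpart, and this is precisely why moving to the frequency domain yields no gain here.
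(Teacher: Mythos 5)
Your overall strategy --- view the frequency-domain network as a unitary reparametrization of \eqref{def:gcnn_1layer} and reduce to Theorem~\ref{thm:gcnn_avgpooling} --- is sound in spirit, and the paper's own proof is indeed just the spatial computation redone with Fourier bookkeeping. But your two key displayed claims cannot both hold under the parametrization of \eqref{def:gcnn_1layer_freq_domain}, and the discrepancy is exactly the delicate point this proposition is about. In that parametrization the diagonal blocks satisfy $\mF^*\paran{\bigoplus_\psi \hat{w}(\psi)}\mF=\mW$, so the vector $\hat{\vw}$ consists of the eigenvalues of the circulant matrix, i.e.\ the \emph{unnormalized} DFT of the filter. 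Consequently $\mF^*\hat{\vw}$ equals $\sqrt{\card{G}}$ times a (reflected copy of) the spatial filter, and Parseval reads $\norm{\hat{\vw}}^2=\card{G}\,\norm{\vw}^2$, not $\norm{\hat{\vw}}=\norm{\vw}$; the paper flags this explicitly ("we should be careful to include the group size"). So with your definition $\vw_{(j,i)}:=\mF^*\hat{\vw}_{(j,i)}$, the preactivation is $\tfrac{1}{\sqrt{\card{G}}}$ times the convolution with (the reflection of) $\vw_{(j,i)}$: either your convolution-theorem identity or your Parseval identity is off by a factor $\sqrt{\card{G}}$, and "immediate from unitarity of $\mF$" is precisely the trap.

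Why this matters: the content of the proposition is this cancellation, not the final number. In the paper's proof the channel-peeling step uses the frequency-domain norms and gives $\sum_j\norm{\hat{\vw}_{(j,:)}}^2\le\card{G}M_2^2$, and the extra $\card{G}$ is then killed by the bound $\sup_{\hat{\vw}}\norm{\paran{\vf_l^*\paran{\bigoplus_\psi\hat{w}_{(k)}(\psi)}}_{k\in[c_0]}}_F^2\le 1/\card{G}$, which uses that every entry of a row of $\mF^*$ has modulus $1/\sqrt{\card{G}}$; that $\card{G}\cdot\tfrac{1}{\card{G}}$ cancellation is the theorem's message that the frequency-domain analysis yields no gain. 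If one instead identifies $\norm{\hat{\vw}}$ with $\norm{\vw}$ as you do, then constraining the Fourier coefficients by $M_2$ would produce a bound smaller by $\sqrt{\card{G}}$, i.e.\ an apparent gain from parameterizing in frequency --- the very conclusion the proposition rules out. Your second route suffers the same slip in its last sentence; once you peel channels with the spatial norm you are simply re-running Theorem~\ref{thm:gcnn_avgpooling}, which does prove the literal statement about $\gH$ in \eqref{def:gcnn_hyp_space} but bypasses the frequency-domain accounting that is the point here. The fix is easy: define the spatial filter as $\tfrac{1}{\sqrt{\card{G}}}$ times the (reflected) inverse transform of $\hat{\vw}$ so both of your identities hold and the induced constraint is still $\norm{\vw}\le M_2$, or track the $\card{G}$ and $1/\card{G}$ factors as the paper does.
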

\begin{proof}

Consider the network represented in the frequency domain as follows:
\begin{equation}
h_{\vu,\vw}(\vx)\defeq \vu^\top P\circ{\sigma
{
\begin{pmatrix}
\mF^*\sum_{i=1}^{c_0} \paran{\bigoplus_\psi \hat{w}_{(1,i)}(\psi)}\hat{\vx{(i)}})\\
\vdots\\
\mF^*\sum_{i=1}^{c_0}\paran{\bigoplus_\psi \hat{w}_{(c_1,i)}(\psi)}  \hat{\vx{(i)}})
\end{pmatrix}
 }
 }
\end{equation}
A few clarifications before continuing further. Consider the circular convolution $\vw\conv_G \vx$. Note that the decomposition $\paran{\bigoplus_\psi \hat{w}(\psi)}$ is obtained by using the Fourier decomposition of the equivalent circulant matrix:
\[
\mF^* \paran{\bigoplus_\psi \hat{w}(\psi)} \mF = \mW,
\]
and therefore, when using Parseval's relation, we should be careful to include the group size as follows:
\[
\sum_{\psi} \hat{w}(\psi)^2= \norm{\mW}_F=  |G| \norm{\vw}^2.
\]
Now, let's continue the proof. For average pooling, we can start by peeling off the last linear layers and average pooling and continue from there, namely from \eqref{eq:peeled_off_step}. Note that $l$'th entry can be computed using an inner product with the $l$'th row of $\mF^*$, represented by $\vf_l^*$. We have:
\begin{align*}
\E_\vepsilon &
\paran{
\sup_{\vw} 
\norm{
 \sum_{i=1}^m \eps_i
 \begin{pmatrix}
\sigma\left(
\vf_l^*\sum_{k=1}^{c_0} \paran{\bigoplus_\psi \hat{w}_{(1,k)}(\psi)}\hat{\vx_i{(k)}})
\right)\\
 \vdots\\
\sigma\left(
\vf_l^*\sum_{k=1}^{c_0} \paran{\bigoplus_\psi \hat{w}_{(c_1,k)}(\psi)}\hat{\vx_i{(k)}})
\right)\\
\end{pmatrix}
}^2
} \\
&=
\E_\vepsilon 
\paran{
\sup_{\vw} 
\sum_{j=1}^{c_1} \paran{
 \sum_{i=1}^m \eps_i
\sigma\left(
\vf_l^*\sum_{k=1}^{c_0} \paran{\bigoplus_\psi \hat{w}_{(j,k)}(\psi)}\hat{\vx_i{(k)}})
\right)
}^2
}
\\
&=
\E_\vepsilon 
\paran{
\sup_{\vw} 
\sum_{j=1}^{c_1} {\norm{\hat{\vw}_{(j,:)}}^2 }\paran{
 \sum_{i=1}^m \eps_i
\sigma\left(\frac{1}{\norm{\hat{\vw}_{(j,:)}} }
\vf_l^*\sum_{k=1}^{c_0} \paran{\bigoplus_\psi \hat{w}_{(j,k)}(\psi)}\hat{\vx_i{(k)}})
\right)
}^2
},
\end{align*}
where we used the positively homogeneity property of the activation function, and we defined:
\[
\norm{\hat{\vw}_{(j,:)}}^2 = \sum_{\psi,k\in[c_0]} \hat{w}_{(j,k)}(\psi)^2.
\]
We can use Parseval's theorem, and based on the discussion above, we know that:
\[
\sum_{j\in[c_1]}\norm{\hat{\vw}_{(j,:)}}^2= 
|G| \sum_{j\in[c_1]}\norm{{\vw}_{(j,:)}}^2 \leq |G| M_2^2.
\]
And then, we can continue similarly to the proof of average pooling:
\begin{align*}
\E_\vepsilon &
\paran{
\sup_{\vw} 
\sum_{j=1}^{c_1} {\norm{\hat{\vw}_{(j,:)}}^2 }\paran{
 \sum_{i=1}^m \eps_i
\sigma\left(\frac{1}{\norm{\hat{\vw}_{(j,:)}} }
\vf_l^*\sum_{k=1}^{c_0} \paran{\bigoplus_\psi \hat{w}_{(j,k)}(\psi)}\hat{\vx_i{(k)}})
\right)
}^2
}\\
&\leq |G| M_2^2
\E_\vepsilon 
\paran{
\sup_{\hat{\vw}:\norm{\hat{\vw}}\leq 1} 
\paran{
 \sum_{i=1}^m \eps_i
\sigma\left(
\vf_l^*\sum_{k=1}^{c_0} \paran{\bigoplus_\psi \hat{w}_{(k)}(\psi)}\hat{\vx_i{(k)}})
\right)
}^2
}
\end{align*}
From which we can continue using contraction inequality:
\begin{align*}
\E_\vepsilon &
\paran{
\sup_{\hat{\vw}:\norm{\hat{\vw}}\leq 1} 
\paran{
 \sum_{i=1}^m \eps_i
\sigma\left(
\vf_l^*\sum_{k=1}^{c_0} \paran{\bigoplus_\psi \hat{w}_{(k)}(\psi)}\hat{\vx_i{(k)}})
\right)
}^2
}\\
&\leq \E_\vepsilon 
\paran{
\sup_{\hat{\vw}:\norm{\hat{\vw}}\leq 1} 
\paran{
 \sum_{i=1}^m \eps_i
\left(
\vf_l^*\sum_{k=1}^{c_0} \paran{\bigoplus_\psi \hat{w}_{(k)}(\psi)}\hat{\vx_i{(k)}})
\right)
}^2
}\\
&\leq \E_\vepsilon 
\paran{
\sup_{\hat{\vw}:\norm{\hat{\vw}}\leq 1} 
\paran{
\vf_l^*\sum_{k=1}^{c_0} \paran{\bigoplus_\psi \hat{w}_{(k)}(\psi)} 
\paran{
\sum_{i=1}^m \eps_i\hat{\vx_i{(k)}})
}
}^2
}\\
&\leq 
\sup_{\hat{\vw}:\norm{\hat{\vw}}\leq 1} 
\norm{
\paran{
\vf_l^* \paran{\bigoplus_\psi \hat{w}_{(k)}(\psi)} }_{k\in[c_0]}}_F^2
\E_\vepsilon 
\paran{
\norm{
   \sum_{i=1}^m \eps_i \hat{\vx}_i
}^2
}
\\
&\leq 
\sup_{\hat{\vw}:\norm{\hat{\vw}}\leq 1} 
\norm{
\paran{
\vf_l^* \paran{\bigoplus_\psi \hat{w}_{(k)}(\psi)} }_{k\in[c_0]}}_F^2
m b_x^2.
\end{align*}
To simplify the norm on the left-hand side, it is important to note that each entry of $\vf^*_l$ has the modulus $1/\sqrt{|G|}$, which means that
\begin{align*}
\sup_{\hat{\vw}:\norm{\hat{\vw}}\leq 1} 
\norm{
\paran{
\vf_l^* \paran{\bigoplus_\psi \hat{w}_{(k)}(\psi)} }_{k\in[c_0]}}_F^2 \leq \frac{1}{|G|}.
\end{align*}
The term $1/|G|$ cancels the term $|G|$ in the previous inequality and yields the bound. The result shows that there is no gain in the frequency domain analysis. 
\end{proof}

\section{Proofs for Weight Sharing}
\label{app:proof_for_weight_sharing}
Consider the network:
\begin{equation*}
h_{\vu,\vw}(\vx)\defeq \vu^\top P\circ{\sigma
{
\begin{pmatrix}
\sum_{c=1}^{c_0} 
\paran{\sum_{k=1}^{|G|}
\vw_{(1,c)}(k)\mB_k 
}
\vx{(c)}\\
\vdots\\
\sum_{i=1}^{c_0}
\paran{\sum_{k=1}^{|G|}
\vw_{(c_1,c)}(k)\mB_k 
}
\vx{(c)}
\end{pmatrix}
 }
 }.
\end{equation*}
For the pooling operation $P(\cdot)$, we consider the average pooling operation. The Rademacher complexity analysis starts similarly to the proof of group convolution networks. This means that we can peel off the first layer $\vu$ and consider a single term in the average pooling operation, namely:
\begin{align*}
\E_\vepsilon &
\paran{
\sup_{\vw} 
\norm{
\sum_{i=1}^m\epsilon_i P\circ \sigma
{
\begin{pmatrix}
\sum_{c=1}^{c_0} 
\paran{\sum_{k=1}^{|G|}
\vw_{(1,c)}(k)\mB_k 
}
\vx_i{(c)}\\
\vdots\\
\sum_{c=1}^{c_0}
\paran{\sum_{k=1}^{|G|}
\vw_{(c_1,c)}(k)\mB_k 
}
\vx_i{(c)}
\end{pmatrix}
 }
}
} \\
& =
\E_\vepsilon \paran{
\frac{1}{|G|}\sup_{\vw}\norm{
\sum_{l=1}^{|G|} 
\sum_{i=1}^m\epsilon_i  \sigma
{
\begin{pmatrix}
\sum_{c=1}^{c_0} 
\paran{\sum_{k=1}^{|G|}
\vw_{(1,c)}(k)\vb_{k,l}^\top
}
\vx_i{(c)}\\
\vdots\\
\sum_{c=1}^{c_0}
\paran{\sum_{k=1}^{|G|}
\vw_{(c_1,c)}(k){\vb_{k,l}^\top  }
}
\vx_i{(c)}
\end{pmatrix}
 }
}
}
\\
& \leq 
\frac{1}{|G|}\sum_{l=1}^{|G|} 
\E_\vepsilon \paran{
\sup_{\vw}
\norm{
\sum_{i=1}^m\epsilon_i \sigma
{
\begin{pmatrix}
\sum_{c=1}^{c_0} 
\paran{\sum_{k=1}^{|G|}
\vw_{(1,c)}(k)\vb_{k,l}^\top 
}
\vx_i{(c)}\\
\vdots\\
\sum_{c=1}^{c_0}
\paran{\sum_{k=1}^{|G|}
\vw_{(c_1,c)}(k)\vb_{k,l}^\top 
}
\vx_i{(c)}
\end{pmatrix}
 }
}
},
\end{align*}
where $\vb_{k,l}^\top$ is the $l$'th row of $\mB_k$. From here, we can continue similarly. First, we use the following moment inequality:
\begin{align*}
    \E_\vepsilon & \paran{
\sup_{\vw}
\norm{
\sum_{i=1}^m\epsilon_i \sigma
{
\begin{pmatrix}
\sum_{c=1}^{c_0} 
\paran{\sum_{k=1}^{|G|}
\vw_{(1,c)}(k)\vb_{k,l}^\top 
}
\vx_i{(c)}\\
\vdots\\
\sum_{c=1}^{c_0}
\paran{\sum_{k=1}^{|G|}
\vw_{(c_1,c)}(k)\vb_{k,l}^\top 
}
\vx_i{(c)}
\end{pmatrix}
 }
}
}
\\
& \leq
\sqrt{
\E_\vepsilon \paran{
\sup_{\vw}
\norm{
\sum_{i=1}^m\epsilon_i \sigma
{
\begin{pmatrix}
\sum_{c=1}^{c_0} 
\paran{\sum_{k=1}^{|G|}
\vw_{(1,c)}(k)\vb_{k,l}^\top 
}
\vx_i{(c)}\\
\vdots\\
\sum_{c=1}^{c_0}
\paran{\sum_{k=1}^{|G|}
\vw_{(c_1,c)}(k)\vb_{k,l}^\top
}
\vx_i{(c)}
\end{pmatrix}
 }
}^2
}
}
\\
& =
\sqrt{\E_\vepsilon  
\paran{
\sup_{\vw}
\sum_{j=1}^{c_1}\paran{
\sum_{i=1}^m\epsilon_i \sigma
\paran{
\sum_{c=1}^{c_0} 
\paran{\sum_{k=1}^{|G|}
\vw_{(j,c)}(k)\vb_{k,l}^\top 
}
\vx_i{(c)}
}
}^2
}
}.
\end{align*}
For the rest, denote (by abuse of notation, we drop the index $l$ when it is evident in context):
\[
\norm{\vw_{(j,:)}}_{\mB} \defeq 
\norm{
\paran{\sum_{k=1}^{|G|}
\vw_{(j,c)}(k)\vb_{k,l}^\top 
}_{c\in[c_0]}
}_F, 
\norm{\vw}_{\mB} \defeq 
\max_{l\in[|G|]} \norm{
\paran{\sum_{k=1}^{|G|}
\vw_{(j,c)}(k)\vb_{k,l}^\top 
}_{c\in[c_0], j\in[c_1]}
}_F 
\]
Note that:
\[
\sum_{j=1}^{c_1}\norm{\vw_{(j,:)}}_{\mB}^2 = \norm{
\paran{\sum_{k=1}^{|G|}
\vw_{(j,c)}(k)\vb_{k,l}^\top 
}_{c\in[c_0], j\in[c_1]}
}_F^2 \leq \max_{l\in[|G|]} \norm{
\paran{\sum_{k=1}^{|G|}
\vw_{(j,c)}(k)\vb_{k,l}^\top 
}_{c\in[c_0], j\in[c_1]}
}_F^2 \leq (M^{w.s.}_2)^2.
\]
We can continue the proof as follows:
\begin{align*}
\E_\vepsilon & 
\paran{
\sup_{\vw}
\sum_{j=1}^{c_1}\paran{
\sum_{i=1}^m\epsilon_i \sigma
\paran{
\sum_{c=1}^{c_0} 
\paran{\sum_{k=1}^{|G|}
\vw_{(j,c)}(k)\vb_{k,l}^\top 
}
\vx_i{(c)}
}
}^2
} \\
& \leq \E_\vepsilon 
\paran{
\sup_{\vw}
\sum_{j=1}^{c_1} \norm{\vw_{(j,:)}}_{\mB}^2 \paran{
\sum_{i=1}^m\epsilon_i \sigma
\paran{
\frac{1}{\norm{\vw_{(j,:)}}_{\mB}}
\sum_{c=1}^{c_0} 
\paran{\sum_{k=1}^{|G|}
\vw_{(j,c)}(k)\vb_{k,l}^\top 
}
\vx_i{(c)}
}
}^2
}\\
& \leq \E_\vepsilon 
\paran{
\sup_{\vw}
\sum_{j=1}^{c_1} \norm{\vw_{(j,:)}}_{\mB}^2 \paran{
\sum_{i=1}^m\epsilon_i \sigma
\paran{
\frac{1}{\norm{\vw_{(j,:)}}_{\mB}}
\sum_{c=1}^{c_0} 
\paran{\sum_{k=1}^{|G|}
\vw_{(j,c)}(k)\vb_{k,l}^\top 
}
\vx_i{(c)}
}
}^2
}\\
& \leq (M_2^{w.s.})^2 \E_\vepsilon 
\paran{
\sup_{\vw}
\sup_j \paran{
\sum_{i=1}^m\epsilon_i \sigma
\paran{
\frac{1}{\norm{\vw_{(j,:)}}_{\mB}}
\sum_{c=1}^{c_0} 
\paran{\sum_{k=1}^{|G|}
\vw_{(j,c)}(k)\vb_{k,l}^\top 
}
\vx_i{(c)}
}
}^2
}\\
& \leq (M_2^{w.s.})^2 \E_\vepsilon 
\paran{
\sup_{\hat{\vw}}
\paran{
\sum_{i=1}^m\epsilon_i \sigma
\paran{
\sum_{c=1}^{c_0} 
\paran{\sum_{k=1}^{|G|}
\hat{\vw}_{(c)}(k)\vb_{k,l}^\top 
}
\vx_i{(c)}
}
}^2
},
\end{align*}
where the supremum is taked over all vectors $\hat{\vw}$ satisfying:
\[
\norm{
\paran{\sum_{k=1}^{|G|}
\hat{\vw}_{(c)}(k)\vb_{k,l}^\top 
}_{c\in[c_0]}
}_F \leq 1.
\]
We can use the contraction inequality and the Cauchy-Schwartz inequality to obtain the following: 
\begin{align*}
\E_\vepsilon &
\paran{
\sup_{\hat{\vw}}
\paran{
\sum_{i=1}^m\epsilon_i \sigma
\paran{
\sum_{c=1}^{c_0} 
\paran{\sum_{k=1}^{|G|}
\hat{\vw}_{(c)}(k)\vb_{k,l}^\top 
}
\vx_i{(c)}
}
}^2
} \\
&\leq
\E_\vepsilon 
\paran{
\sup_{\hat{\vw}}
\paran{
\sum_{i=1}^m\epsilon_i 
\paran{
\sum_{c=1}^{c_0} 
\paran{\sum_{k=1}^{|G|}
\hat{\vw}_{(c)}(k)\vb_{k,l}^\top 
}
\vx_i{(c)}
}
}^2
}
 \\
&= 
\E_\vepsilon 
\paran{
\sup_{\hat{\vw}}
\paran{
\sum_{c=1}^{c_0} \paran{\sum_{k=1}^{|G|}
\hat{\vw}_{(c)}(k)\vb_{k,l}^\top 
}
\paran{
\sum_{i=1}^m\epsilon_i 
\vx_i{(c)}
}
}^2
}
 \\
&\leq  
\sup_{\hat{\vw}}\norm{
\paran{\sum_{k=1}^{|G|}
\hat{\vw}_{(c)}(k)\vb_{k,l}^\top 
}_{c\in[c_0]}
}_F^2
\E_\vepsilon 
\paran{
\norm{
\sum_{i=1}^m\epsilon_i 
\vx_i
}^2
}\leq mb_x^2.
\end{align*}
\section{Proofs for Local Filters - Proposition \ref{thm:gcnn_locaity}}
\label{app:proof_locality}
Consider the network defined as:
\[
h_{\vu,\vw}(\vx) =\vu^\top P\circ{\sigma
{
\begin{pmatrix}
\sum_{k=1}^{c_0} \paran{\vw_{(1,k)}^\top\phi_l(\vx{(k)})}_{l\in[|G|]}\\
\vdots\\
\sum_{k=1}^{c_0} \paran{\vw_{(c_1,k)}^\top\phi_l(\vx{(k)})}_{l\in[|G|]}
\end{pmatrix}
 }
 } = \vu^\top P\circ \sigma(\mW\vx),
\]
where similar to \cite{vardi_sample_2022}, $\mW$ is the matrix \textit{conforming} to the patches $\Phi$ and the weights. The first steps for average pooling are similar to what we have done in Section \ref{app:proof_avgpooling}. The only difference is that $\Pi_l\vx(k)$ is replaced with $\phi_l(\vx(k))$. We will not repeat the arguments and condense them in the following steps:
\begin{align*}
\E_\vepsilon \paran{\sup_{\vu,\vw} \frac 1m \sum_{i=1}^m \eps_i \vu^\top P\circ{\sigma
\paran{
\mW\vx
 }
 }}&\leq \frac{M_1}{m}\E_\vepsilon \paran{\sup_{\vw} \norm{ \sum_{i=1}^m \eps_i P\circ\sigma(\mW\vx_i)}}.
\\
&   \leq 
  \frac{M_1}{m} \sum_{l=1}^{|G|}\frac{1}{|G|}
 \E_\vepsilon 
\paran{
\sup_{\vw} 
\norm{
 \sum_{i=1}^m \eps_i
 \begin{pmatrix}
\sigma\left(\sum_{k=1}^{c_0}\vw_{(1,k)}^\top \phi_l (\vx_i{(k)})\right)\\
 \vdots\\
\sigma\left(\sum_{k=1}^{c_0}\vw_{(c_1,k)}^\top \phi_l( \vx_i{(k)})\right)\\
\end{pmatrix}
}
}
\end{align*}
And, we can repeat the peeling arguments to get:
\begin{align*}
 \E_\vepsilon 
\paran{
\sup_{\vw} 
\norm{
 \sum_{i=1}^m \eps_i
 \begin{pmatrix}
\sigma\left(\sum_{k=1}^{c_0}\vw_{(1,k)}^\top \phi_l (\vx_i{(k)})\right)\\
 \vdots\\
\sigma\left(\sum_{k=1}^{c_0}\vw_{(c_1,k)}^\top \phi_l( \vx_i{(k)})\right)\\
\end{pmatrix}
}
} \leq M_2\sqrt{
\E_\vepsilon 
\paran{
\sup_{\tilde{\vw}:\norm{\tilde{\vw}}\leq 1} 
\paran{
 \sum_{i=1}^m \eps_i
\sigma\left( \sum_{k=1}^{c_0}\tilde{\vw}_{(k)}^\top \phi_l(\vx_i{(k)})\right)
}^2
}
}.
\end{align*}
The only change in the proof is about the way we simplify the term in the square root:
\begin{align}
\E_\vepsilon 
\paran{
\sup_{\tilde{\vw}:\norm{\tilde{\vw}}\leq 1} 
\paran{
 \sum_{i=1}^m \eps_i
\sigma\left( \sum_{k=1}^{c_0}\tilde{\vw}_{(k)}^\top \phi_l(\vx_i{(k)})\right)
}^2
}&\leq 
\E_\vepsilon 
\paran{
\sup_{\tilde{\vw}:\norm{\tilde{\vw}}\leq 1} 
\paran{
 \sum_{i=1}^m \eps_i
 \sum_{k=1}^{c_0}\tilde{\vw}_{(k)}^\top  \phi_l(\vx_i{(k)})
}^2
}
\nonumber\\
&\leq 
\E_\vepsilon 
\paran{
\sup_{\tilde{\vw}:\norm{\tilde{\vw}}\leq 1} 
\paran{\sum_{k=1}^{c_0}\tilde{\vw}_{(k)}^\top\paran{
   \sum_{i=1}^m \eps_i \phi_l(\vx_i{(k)})
}}^2
}\nonumber\\
&\leq 
\E_\vepsilon 
\paran{
\norm{
   \sum_{i=1}^m \eps_i \phi_l(\vx_i)
}^2
} = \sum_{i=1}^m \norm{\phi_l(\vx_i)}^2.
\end{align}
Now we use Jensen inequality to get the proof:
\begin{align*}
\E_\vepsilon \paran{\sup_{\vu,\vw} \frac 1m \sum_{i=1}^m \eps_i \vu^\top P\circ{\sigma
\paran{
\mW\vx
 }
 }} & \leq \frac{M_1M_2}{m} \frac{1}{|G|} \sum_{l=1}^{|G|}\sqrt{\sum_{i=1}^m \norm{\phi_l(\vx_i)}^2}\\
 &
 \leq  
 \frac{M_1M_2}{m} \sqrt{ \frac{1}{|G|}\sum_{l=1}^{|G|}\sum_{i=1}^m \norm{\phi_l(\vx_i)}^2}
 \\
 &
\leq  \frac{M_1M_2}{m} \sqrt{ \frac{mO_\Phi b_x^2}{|G|}} = \frac{M_1M_2b_x}{\sqrt{m}} \sqrt{ \frac{O_\Phi}{|G|}}.
\end{align*}
\section{Uncertainty Principle for Local Filters - Proposition \ref{thm:gcnn_uncertainty}}
\label{app:proof_gcnn_uncertainty}

We start by stating the uncertainty principle for finite Abelian groups.

\begin{theorem}
Let $G$ be a finite Abelian group. Suppose the function $f:G\to\Cm$ is non-zero, and $\hat{f}$ is its Fourier transform. We have
\[
|\supp(f)|.|\supp(\hat{f})|\geq |G|.
\]
\end{theorem}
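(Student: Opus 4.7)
The plan is to follow the classical Donoho--Stark style argument, chaining Parseval, a Cauchy--Schwarz bound on the support, and the elementary $\ell^\infty$--$\ell^1$ Hausdorff--Young inequality for the Fourier transform on $G$. Set $T := |\supp(f)|$ and $W := |\supp(\hat{f})|$; the goal is to derive $TW \geq |G|$ purely from norm comparisons.

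First I would fix the unitary normalization of the Fourier transform consistent with the paper, so $\hat{f}(\psi) = \frac{1}{\sqrt{|G|}} \sum_{g \in G} f(g) \overline{\psi(g)}$, which gives Parseval $\|\hat{f}\|_2 = \|f\|_2$. Because $G$ is abelian, its irreducible characters $\psi$ all satisfy $|\psi(g)| = 1$, so the inverse Fourier formula yields the elementary pointwise bound $|f(g)| \leq \frac{1}{\sqrt{|G|}} \sum_\psi |\hat{f}(\psi)|$, hence $\|f\|_\infty \leq \frac{1}{\sqrt{|G|}} \|\hat{f}\|_1$. This is the only place where the abelian structure (characters of modulus $1$) is used.

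Next I would apply Cauchy--Schwarz restricted to the supports:
\[
\|f\|_2^2 = \sum_{g \in \supp(f)} |f(g)|^2 \leq T \, \|f\|_\infty^2, \qquad \|\hat{f}\|_1 = \sum_{\psi \in \supp(\hat{f})} |\hat{f}(\psi)| \leq \sqrt{W} \, \|\hat{f}\|_2.
\]
Combining these with the step from the previous paragraph and Parseval,
\[
\|f\|_2^2 \leq T \, \|f\|_\infty^2 \leq \frac{T}{|G|} \, \|\hat{f}\|_1^2 \leq \frac{TW}{|G|} \, \|\hat{f}\|_2^2 = \frac{TW}{|G|} \, \|f\|_2^2.
\]
Since $f \neq 0$ we have $\|f\|_2^2 > 0$, so dividing through yields $TW \geq |G|$, which is the claim.

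There is no real obstacle here; the only subtle point is bookkeeping the Fourier normalization so that the factors of $\sqrt{|G|}$ align correctly between the pointwise bound on $f$ and Parseval. The abelian hypothesis enters exactly through $|\psi(g)|=1$ in the inverse transform; a non-abelian analogue would instead require replacing $W$ by a sum of irrep dimensions and is not needed for the stated inequality.
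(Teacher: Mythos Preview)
Your argument is correct and is precisely the classical Donoho--Stark proof. The paper itself does not prove this statement: it is quoted in the appendix as a known result (with references to Donoho and Folland in the main text) and used as a black box to derive Proposition~\ref{thm:gcnn_uncertainty}, so there is no ``paper's own proof'' to compare against beyond noting that you have supplied the standard one.
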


If the filters have $B$ non-zero entries in the frequency domain, then the uncertainty principle implies that the filter in the spatial domain has at least $|G|/B$ non-zero entries. In other words, the smallest filter size will have $|G|/B$ non-zero elements. Therefore, the smallest possible upper bound we can obtain using our approach will use $O_\Phi = |G|/B$. Plugging this in Proposition \ref{thm:gcnn_locaity} yields the proposition.


\section{Comparison with other existing bounds}
\label{app:comparisons}
The following works do not consider the generalization error for equivariant networks. However, the convolutional networks, which are studied in their work, are closely related to our results.  
{We have also summarized our results and their comparison in Table \ref{tab:result_comparison}. }

\begin{table}[t]
    \centering
    \begin{tabular}{|c|c|}
       \hline
        \textbf{Reference} & \textbf{Bound}\\
       \hline
       Equivariance (Theorem \ref{thm:gcnn_general_pooling}) & $\frac{b_xM_1M_2}{\sqrt{m}}$\\
       \hline
       Weight sharing (Proposition \ref{prop:weight_sharing})& 
       $ \frac{b_x M_1 M^{w.s.}_2}{\sqrt{m}}$
       \\
       \hline
       Locality (Proposition \ref{thm:gcnn_locaity}) & $\sqrt{\frac{O_\Phi}{|G|}} \frac{b_xM_1M_2}{\sqrt{m}}$ \\
       \hline
       \hline
        Non-Equivariant Networks \cite{golowich_size-independent_2018} &  $\frac{b_x M_1\norm{\mW}_F}{\sqrt{m}}$ \\
        \hline
        Weight sharing, orthogonal $\vb_{k,l},k\in[|G|]$ & $\frac{b_xM_1M_2}{\sqrt{m}}$ \\
        \hline 
        $B$-sparse filters (Proposition \ref{thm:gcnn_uncertainty}& $b_xM_1M_2/\sqrt{mB}$ \\
       \hline
       \hline
       Linear convolution \cite{vardi_sample_2022}& $\sqrt{O_\Phi} \frac{b_xM_1\norm{\mW}_{2\to 2}}{\sqrt{m}}$  \\       
       \hline
        CNN with Pooling \cite{vardi_sample_2022}& $ \frac{b_x\norm{\mW}_{2\to 2}\textcolor{blue}{\log m \sqrt{\log(mc_1|G|)}}}{\sqrt{m}}$  \\       
       \hline
       \cite{graf_measuring_2022}  & 
       $\gO\paran{\frac
{b_xM_2\log(|G|\textcolor{blue}{\max(c_0,c_1)) }
\sqrt{ \norm{\mW}_{2\to 2} + \norm{\vw}_{2,1}} 
}
{\sqrt{m}}
}$\\
       \hline
    \end{tabular}
    \caption{
    {The table summarizes the main results of the paper and compares with the selected prior works. The top three rows are the main results of the paper expressed in their full generality (Reminder : $M^{w.s.}_2=\max_{l\in[|G|]} \norm{
\paran{\sum_{k=1}^{|G|}
\vw_{(j,c)}(k)\vb_{k,l}^\top 
}_{c\in[c_0], j\in[c_1]}
}_F$). Next three rows summarize the result for the case where the network is not equivariant, the optimal weight sharing and the band-limited filters. The bound from the most relevant results are also summarized in the last rows of the table. The explicit dimension dependency of these bounds are highlighted with a different color.}
}
    \label{tab:result_comparison}
\end{table}

\paragraph{Comparison with \cite{vardi_sample_2022}.} Our work is closely related to \cite{vardi_sample_2022}. We focus on equivariant networks and upper bounds on the sample complexity.  The networks considered in this work are slightly more general with per-channel linear and non-linear pooling operations. They also provide a bound in Theorem 7 of their paper for pooling operations $\rho:\R^{|G|}\to\R$ that are 1-Lipschitz with respect to the $\ell_\infty$ norm. This class includes average and max pooling operations. Their bound has logarithmic dimension dependence, the artifact of using the covering number-based arguments (see below for more discussions).  The bound is also dependent on the spectral norm of the matrix $\mW$ instead of the norm of the weight $\vw$. It is not clear if the dependence on the spectral norm can be removed in their proof. 
For local filters, they provide a dimension-free bound that also depends on the spectral norm of $\mW$. However, this time, the dependence on the spectral norm can be substituted with the norm of the weight vector, as it is evident from their proof.  In general, all our bounds depend only on the norms of the weight vector, which is tighter than using Frobenius or spectral norm of $\mW$. The authors in \cite{vardi_sample_2022} provide lower bounds on the sample complexity, which we relegate to future works. 
\paragraph{Comparison with \cite{graf_measuring_2022}.} The paper of \cite{graf_measuring_2022} relies mainly on Maurey's empirical lemma and covers a number of arguments. Rademacher complexity bounds either work directly on bounding the Rademacher sum through a set of inequalities or utilize chaining arguments and Dudley's integral inequality. Dudley's integral requires an estimate of the covering number. The main techniques of finding the covering number, like volumetric, Sudakov, and Maurey's lemma techniques, manifest different dimension dependencies, with the latter usually providing the mildest dependence. Completely dimension-free bounds are obtained mainly via the direct analysis of the Rademacher sum. Therefore, the bounds in \cite{graf_measuring_2022} are a mixture of dimension-dependent and norm-dependent terms. Their bounds are, however, quite general and consider different activation functions and residual connections.

To investigate their bound further, let's consider their setup, focusing on what matters for the current paper. They consider a network with $L$ residual blocks, where the residual block $i$ has $L_i$ layers. That is:
\[
f=\sigma_L\circ f_L\circ \dots\circ \sigma_1\circ f_1 \text{ with } f_i(x)=g_i(x) + \sigma_{iL_i}\circ h_{iL_i}\circ\dots\circ\sigma_{i1}\circ h_{i1} (x).
\]
 The activation functions $\sigma_i$ and $\sigma_{ij}$ have respectively the Lipschitz constants of $\rho_i$ and $\rho_{ij}$. The function $g_i(\cdot)$ represents the residual connection with $g_i(0)=0$. The map $h_{ij}$ represents the convolutional operation with the Kernel $K_{ij}$. The Lipschitz constant of the layer is given by $s_{ij}$, which corresponds to the spectral norm of the matrix that \textit{conforms} with the layer (similar to the spectral norm of $\mW$). The $\ell_{2,1}$ norm of the kernels $K_{ij}$ is bounded by $b_{ij}$ (this can be initialization dependent or independent). The norm of $K\in\R^{c_1\times c_0\times d}$is defined as
 \[
 \norm{K}_{2,1} = \sum_{i,j}\norm{K(i,:,j) }_{2}.
 \]
 
 Define $s_{i}=\text{Lip}(g_i)+\prod_{j=1}^{L_i} \rho_{ij}s_{ij}$. 
Denote the total number of layers by $\overline{L}$, $W_{ij}$ is the number of weights in the $j$-th layer in the $i$-th residual block. Set $W=\max W_{ij}$. And $C_{ij}$ represents the dependence on the weight and data norms: 
\[
C_{ij} = \frac{2\norm{\mX}}{\sqrt{m}} \paran{\prod_{l=1}^L s_l\rho_l }
\frac{\prod_{k=1}^{L_i} s_{ik}\rho_{ik}}{s_i}
\frac{b_{ij}}{s_{ij}}.
\]
Define $\tilde{C}_{ij} = 2{C}_{ij}/\gamma$ where $\gamma$ is the margin.  
The authors have two bounds where the dependence on $W$ appears logarithmically or directly. We focus on the former. The equation (16) gives the generalization error as
\[
\gO\paran{
\sqrt{
\frac
{\log(2W)\sum_{i,j}\overline{L}^2\tilde{C}_{ij}^2}
{m}
}
}
\]
First, we have $W=|G|\max(c_0,c_1)$. Secondly, for the first layer, we have:
\[
C_{11} \leq b_x \paran{s_{11}s_{12}} \frac{s_{11}s_{12}}{s_{11}s_{12}}\frac{b_{11}}{s_{11}} = b_x b_{11} s_{12}\leq b_x M_2 \norm{\vw}_{2,1},
\]
with 
\[
 \norm{\vw}_{2,1} = \sum_{j=1}^{c_0}\norm{\vw_{:,j} }_{2}.
 \]
 Finally, for the last layer, we get:
\[
C_{12} \leq b_x \paran{s_{11}s_{12}} \frac{s_{11}s_{12}}{s_{11}s_{12}}\frac{b_{11}}{s_{11}} = b_x b_{12} s_{11}\leq b_xM_2 \norm{\mW}_{2\to 2}
\]
Ignoring the constants and margins, the generalization error will be:
\[
\gO\paran{\frac
{b_xM_2\log(|G|\max(c_0,c_1)) 
\sqrt{ \norm{\mW}_{2\to 2} + \norm{\vw}_{2,1}} 
}
{\sqrt{m}}
}.
\]
Apart from its dimension dependence, both norms $\norm{\mW}_{2\to 2}$ and $\norm{\vw}_{2,1}$ are bigger than $\norm{\vw}$ used in our bounds. Therefore, our bound is tighter. However, they consider more general and exhaustive scenarios, including deep networks with residual connections and general Lipschitz activations.

\paragraph{Comparison with \cite{wang_theoretical_2023}.}  In \cite{wang_theoretical_2023}, they study the same set of problems as ours, related to locality and weight sharing, but from different perspectives. First, the approximation theory perspective, studying the class of functions a model can approximate, is complementary to our paper on the generalization error using statistical learning theory. The learning theory part of the paper is focused on a particular "separation task" (see (5)) with a fixed input distribution. They also use Rademacher complexity but bound it with a covering number.  We work with general input distributions and arbitrary tasks. Besides, our bounds are dimension independent, while their bound explicitly depends on the input dimension, an artifact of using covering number for bounding RC.         

\paragraph{Comparison with \cite{li_why_2021}.} In \cite{li_why_2021}, the authors consider the sample complexity of convolutional and fully connected models and construct a distribution for which there is a fundamental gap between them in the sample complexity. Their analysis is based on the VC dimension and uses Benedek-Itai’s lower bound. Their final bound is, therefore, dependent on the input dimension. In contrast, one of the main interests of our paper was to explore dimension-free bounds. We additionally study the impact of locality and weight sharing.

\subsection{Regarding Norm-based bounds and Other Desiderata of Learning Theory}
\label{app:learning_theory_goals}
 \paragraph{Dimension Free and Norm Based Bounds.}
 Dimension-free bounds are interesting because they hint at why the generalization error is unaffected by over-parametrizing the model. In this sense, \underline{the notion of dimension} refers mainly to the input dimension, the number of channels, the width of a layer, and the number of layers. The same motivation existed for obtaining norm-based bounds for RKHS SVM models. The works of \cite{golowich_size-independent_2018} and \cite{vardi_sample_2022} are two recent examples, with the former work providing an extensive review of other dimension-free bounds. 
{Choosing a proper norm} to get dimension-free bounds is one of the main questions in learning theory. Naturally, this is easier to get for larger norms like Frobenius norms, and the question is if we can have similar dimension-free results for smaller norms like spectral or, as we show in the paper, the norm of effective parameters. 

\paragraph{Tighteness of Rademacher Complexity.} In the literature around deep learning theory, there are many works questioning the relevance of classical learning theoretic results, for example, using Rademacher complexity, norm-based bounds, and uniform convergence. We can refer, for example, to works such as \cite{nagarajan_uniform_2019,jiang_fantastic_2020,zhang_understanding_2017,zhu_understanding_2021}. It is fair to ask if Rademacher complexity analysis, in light of these criticisms, can provide tight bounds for our case. First of all, the preceding works focused on state-of-the-art deep neural networks with a large number of layers. Our study is limited to single-hidden layer neural networks. Second, Rademacher complexity is indeed tight for support vector machines \cite{Steinwart2008-jo,shalev-shwartz_understanding_2014,Bartlett2002-gt}. If we fix the first layer of our network and only train the last layer, then we have a linear model, and it is known that training it with Gradient descent will lead to minimum $\ell_2$-norm solutions. For this sub-class of learning algorithms, the analysis boils to previous works on minimum norm classifiers as in \cite{shalev-shwartz_understanding_2014} for which the Rademacher complexity analysis is tight. Therefore, we believe that Rademacher complexity can still be a relevant tool for shallower models, including those discussed in this paper. 

\paragraph{Implicit Bias of Linear Networks}
Previous works also tried to characterize the effect of equivairance on the training dynamics.
\cite{lawrence2022implicit} showed that deep equivariant linear networks trained with gradient descent provably converge to solutions that are sparse in the Fourier domain.
While this work focuses on the effect of equivariance on optimization, the authors suggest that this implicit bias towards sparse Fourier solutions can be beneficial towards generalization in bandlimited datasets - which is common for non-discrete compact groups such as rotation groups and a common assumption in steerable CNNs.
Similar results were also previously obtained in \cite{gunasekar2018implicit} for standard convolutional networks, which can be seen as an instance of GCNN with discrete abelian groups.

\paragraph{Discussions on the Limitations.}
This paper provided a Rademacher complexity-based bound for single-hidden layer equivariant networks. We obtained a lower bound, which showed the tightness of our analysis. The numerical results showed compelling scaling behaviors for our bound. However, there is still a gap between our bound and the generalization error. We conjecture that the gap is related to the idea of implicit bias. We conjecture that stochastic gradient descent training implicitly biases toward only a subset of the hypothesis space even smaller than norm-bounded functions. It is an important next step to tie these two analyses together by characterizing the implicit bias, for example, in terms of some kind of norms, and then using Rademacher complexity analysis to find a bound on the generalization error based on the implicit bias. 

On the other hand, we provided a lower bound on the Rademacher complexity. It would be interesting to obtain such bounds by finding the fat-shattering dimension. If two points in the dataset can be transformed into each other by the action of group $G$, then the space of $G$-invariant functions cannot shatter such datasets. Therefore, the data points should be picked on different orbits so they can be shattered. Such construction will be interesting in the future.

\section{Experiment Setup}
\label{app:experiment_setup}

\begin{figure}
    \centering
    \includegraphics[width=0.7\linewidth]{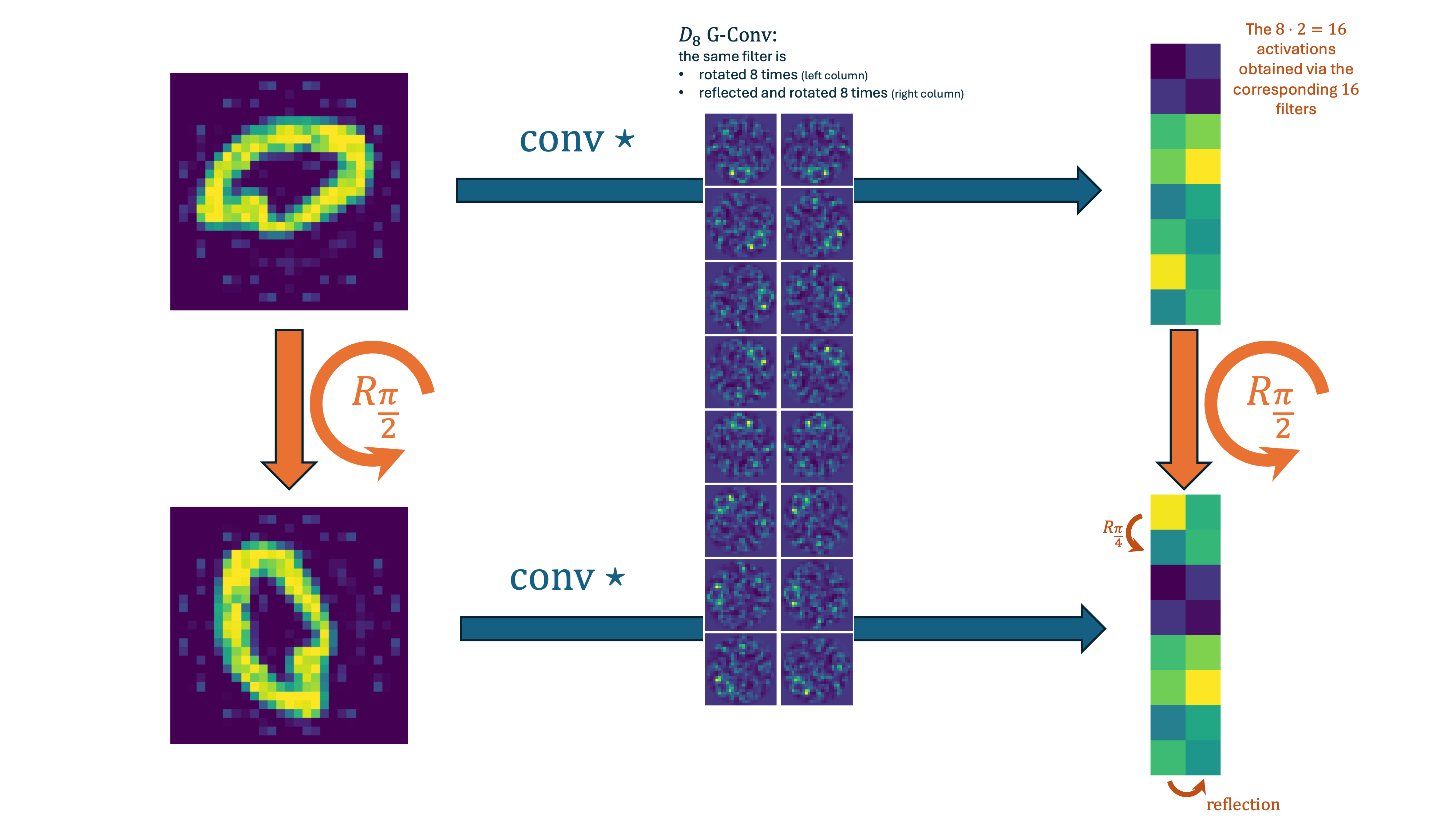}
    \caption{
{
    Graphical representation of the equivariant linear projection used to preprocess the image data. 
    An example image is projected with a single filter rotated $8$ times and mirrored and rotated $8$ more times.
    The output is a $8\cdot 2 =16$ dimensional vector representing a signal over $G=D_8$.
    A rotation or mirroring of the input image results in a periodic shift or permutation of the output channels.
}
}
    \label{fig:visualization_projection_data}
\end{figure}

To pre-process the MNIST and the CIFAR10 datasets, we first create a single $D_{32}$ steerable convolution layer with \texttt{kernel size} equal to the images' size, one input channel and $100 \times |D_{32}| = 6400$ output channels.
In particular, under the steerable CNNs framework, we use $100$ copies of the \emph{regular representation} of the group $D_{32}$ as output feature type.
Alternatively, in the group-convolution framework, this steerable convolution layer corresponds to \emph{lifting convolution} with $100$ output channels, mapping a $1$-channel scalar image to a $100$-channels signal over the whole $\mathbb{R}^2 \rtimes D_{32}$ group; because the  \texttt{kernel size} is as large as the input image (because we use no padding), the spatial resolution of the output of the convolution is a single pixel, leaving only a feature vector over $G=D_{32}$.

This construction guarantees that a rotation of the raw image by an element of $D_{32}$ (and, therefore, of any of its subgroups) results in a corresponding shift of the projected $100$-channels signal over $D_{32}$.

{
Fig.~\ref{fig:visualization_projection_data} shows an example of an input image projected with a single filter rotated $8$ times and mirrored and rotated $8$ more times (i.e. encoded via a $G=D_8$ steerable convolution).
The output is a $8\cdot 2 =16$ dimensional vector representing a signal over $G=D_8$.
A rotation or mirroring of the input image results in a periodic shift or permutation of the output channels.
}

Finally, to avoid interpolation artifacts, we augment our dataset by directly transforming the projected features (which happens via simple permutation matrices since the group $D_{32}$ is discrete), rather than pre-rotating the raw images.

Our equivariant networks consist of a linear layer (i.e. a group convolution), followed by a ReLU activation and a final pooling and invariant linear layer as in \eqref{def:gcnn_1layer}.
All MNIST models are trained using the Adam optimizer with a learning rate of $1e-2$ for $30$ epochs, while the CIFAR10 models are trained using with a learning rate of $1e-3$ for $50$ epochs.
Precisely, we use the MNIST12K dataset, which has 12K images in the training split and 50K in the test split.

All experiments were run on a single GPU NVIDIA GeForce RTX 2080 Ti.

\begin{figure}
    \centering
    \includegraphics[width=0.7\linewidth]{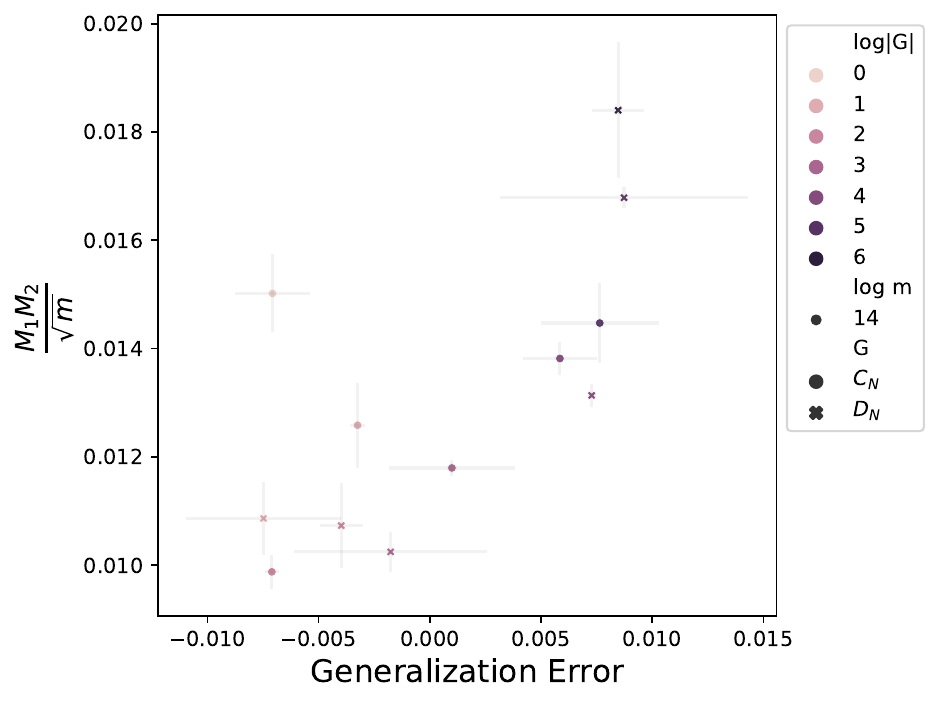}
    \caption{
{
    Subset of Fig.~\ref{fig:correlation_cifar} focusing on the models training with the largest dataset size $m = 25600$. The models using the smallest equivariance groups show increased norms.
}
}
    \label{fig:zoomed_cifar}
\end{figure}

\end{document}